\newenvironment{proofsketch}{
  \par\noindent{\bf Sketch of Proof.\ }
}{\hfill$\Box$\par}
\newcommand{\myfnsymbol}[1]{%
  \expandafter\@myfnsymbol\csname c@#1\endcsname
}
\newcommand{\@myfnsymbol}[1]{%
  \ifcase #1
  \or 1
  \or 2
  \or 3
  \or 4
  \or \TextOrMath{\textasteriskcentered}{*}
  \or \TextOrMath{\textdagger}{\dagger}
  \or \TextOrMath{\textdaggerdbl}{\ddagger}
  \fi
}
\newcommand{\affiliationA}{\@myfnsymbol{1}}
\newcommand{\affiliationB}{\@myfnsymbol{2}}
\newcommand{\affiliationC}{\@myfnsymbol{3}}
\newcommand{\affiliationD}{\@myfnsymbol{4}}
\newcommand{\correspondingA}{\@myfnsymbol{5}}
\newcommand{\correspondingB}{\@myfnsymbol{6}}
\def\eqref#1{equation~\ref{#1}}
\def\1{\bm{1}}
\DeclareMathAlphabet{\mathsfit}{\encodingdefault}{\sfdefault}{m}{sl}
\SetMathAlphabet{\mathsfit}{bold}{\encodingdefault}{\sfdefault}{bx}{n}
\newcommand{\E}{\mathbb{E}}
\newcommand{\R}{\mathbb{R}}
\newcommand{\Var}{\mathrm{Var}}
\newcommand{\dif}{\mathrm{d}}
\DeclareMathOperator*{\argmin}{arg\,min}
\DeclareMathOperator{\Tr}{Tr}
\newtheorem{assumption}{Assumption}[section]
\newcommand{\cirone}{\text{\ding{172}}}
\newcommand{\cirtwo}{\text{\ding{173}}}
\newcommand{\cirthree}{\text{\ding{174}}}
\newcommand{\diag}{\textit{diag}}
\newcommand{\poly}{\textit{poly}}
\theoremstyle{plain}
\newtheorem{theorem}{Theorem}[section]
\newtheorem{proposition}[theorem]{Proposition}
\newtheorem{corollary}[theorem]{Corollary}
\theoremstyle{definition}
\theoremstyle{remark}
\renewcommand{\eqref}[1]{(\ref{#1})}
\begin{document}

\title{A Kernel Approach for Semi-implicit Variational Inference}

\author{
  Longlin Yu\textsuperscript{\affiliationA,\correspondingA},
  Ziheng Cheng\textsuperscript{\affiliationB,\correspondingA},
  Shiyue Zhang\textsuperscript{\affiliationC},
  Cheng Zhang\textsuperscript{\affiliationD,\correspondingB}
}

\date{}

\renewcommand{\thefootnote}{\myfnsymbol{footnote}}
\maketitle

\footnotetext[1]{School of Mathematical Sciences, Peking University, \texttt{llyu@pku.edu.cn}.}
\footnotetext[2]{Department of Industrial Engineering and Operations Research, University of California, Berkeley,
                \texttt{ziheng\_cheng@berkeley.edu}.}
\footnotetext[3]{School of Mathematical Sciences, Peking University, \texttt{zhangshiyue@stu.pku.edu.cn}.}
\footnotetext[4]{School of Mathematical Sciences and Center for Statistical Science, Peking University,
                \texttt{chengzhang@math.pku.edu.cn}.}
\footnotetext[5]{The first two authors contributed equally and are listed in reverse alphabetical order by surname.}
\footnotetext[6]{Correspondence to: Cheng Zhang.}

\setcounter{footnote}{0}
\renewcommand{\thefootnote}{\arabic{footnote}}

\begin{abstract}
Semi-implicit variational inference (SIVI) enhances the expressiveness of variational families through hierarchical semi-implicit distributions, but the intractability of their densities makes standard ELBO-based optimization biased. Recent score-matching approaches to SIVI (SIVI-SM) address this issue via a minimax formulation, at the expense of an additional lower-level optimization problem.
In this paper, we propose kernel semi-implicit variational inference (KSIVI), a principled and tractable alternative that eliminates the lower-level optimization by leveraging kernel methods. We show that when optimizing over a reproducing kernel Hilbert space, the lower-level problem admits an explicit solution, reducing the objective to the kernel Stein discrepancy (KSD). Exploiting the hierarchical structure of semi-implicit distributions, the resulting KSD objective can be efficiently optimized using stochastic gradient methods.
We establish optimization guarantees via variance bounds on Monte Carlo gradient estimators and derive statistical generalization bounds of order 
$\Tilde{\mathcal{O}}(1/{\sqrt{n})}$.
We further introduce a multi-layer hierarchical extension that improves expressiveness while preserving tractability. Empirical results on synthetic and real-world Bayesian inference tasks demonstrate the effectiveness of KSIVI.
\end{abstract}

\noindent
{\bf Keywords:}  semi-implicit variational inference, kernel Stein discrepancy, hierarchical models, Bayesian machine learning.

\section{Introduction}
Variational inference (VI) is an optimization-based framework widely used to approximate posterior distributions in Bayesian models \citep{Jordan1999AnIT, Wainwright08, Blei2016Variational}. In VI, one posits a family of variational distributions over model parameters or latent variables and seeks the member that is closest to the target posterior, typically measured by the Kullback-Leibler (KL) divergence. Since posterior densities are rarely available in closed form, practical algorithms instead maximize the evidence lower bound (ELBO), an equivalent but tractable objective \citep{Jordan1999AnIT}.

Classical VI often relies on a mean-field assumption, in which the variational distribution factorizes over parameters or latent variables \citep{bishop2000variational, Hinton1993KeepingTN, Peterson1989}. When combined with conditionally conjugate models, this assumption enables efficient coordinate-ascent algorithms with closed-form updates \citep{Blei2016Variational}. However, conditional conjugacy is frequently violated in practice, and factorized variational families can fail to capture complex posterior dependencies. These limitations have motivated substantial recent progress in VI. To relax conjugacy requirements, black-box VI methods have been developed, enabling generic optimization via Monte Carlo gradient estimators \citep{Paisley12, Nott12, ranganath2014black, Rezende14, VAE, titsias14}. In parallel, increasingly expressive variational families have been proposed, either by explicitly modeling dependencies or by leveraging invertible transformations such as normalizing flows \citep{Jaakkola98, Saul96, Giordano15, Tran15, NF, RealNVP, IAF, Papamakarios19}. Despite these advances, most existing approaches fundamentally rely on variational distributions with tractable densities, which limits their expressive power.

To further enhance expressiveness, implicit variational models defined through neural network mappings have been introduced \citep{Huszar17, Tran17, AVB, Shi18, shi2018, SSM}. These models allow efficient sampling by design but typically admit intractable densities, making direct evaluation of the ELBO infeasible. As a result, implicit VI methods often rely on density ratio estimation, which introduces additional optimization complexity and is known to be challenging in high-dimensional settings \citep{sugiyama2012density}.

Semi-implicit variational inference (SIVI) was proposed to circumvent explicit density ratio estimation by constructing variational distributions through a hierarchical semi-implicit framework \citep{yin2018semi, moens2021}. This structure enables the optimization of surrogate ELBO objectives, but at the cost of introducing bias. To address this issue, \citet{titsias2019unbiased} developed an unbiased estimator of the gradient of the exact ELBO, which requires inner-loop Markov chain Monte Carlo sampling from a reversed conditional distribution and can become computationally expensive in high-dimensional regimes. More recently, \citet{yu2023semi} proposed SIVI-SM, which replaces the KL divergence with a Fisher divergence-based objective. This formulation leads to a minimax problem that naturally handles intractable densities in a manner analogous to denoising score matching \citep{Vincent2011}. However, SIVI-SM introduces an additional lower-level optimization over a neural network-parameterized function class.

Kernel methods provide a powerful and well-studied framework for optimization over function spaces \citep{MMD, liu2016stein}. In this paper, we propose kernel semi-implicit variational inference (KSIVI), a variant of SIVI-SM that eliminates the need for lower-level optimization by leveraging kernel techniques. We show that when the lower-level problem is optimized over a reproducing kernel Hilbert space (RKHS), it admits a closed-form solution. Substituting this solution into the upper-level problem yields an objective given by the kernel Stein discrepancy (KSD), a kernel-based measure of discrepancy between probability distributions \citep{liu2016kernelized, pmlr-v48-chwialkowski16, gorham2017measuring}. Owing to the hierarchical structure of semi-implicit variational distributions, the resulting KSD objective admits a tractable form that depends only on conditional densities, enabling efficient optimization via stochastic gradient methods.
We provide optimization guarantees for KSIVI by deriving an upper bound on the variance of Monte Carlo gradient estimators under mild assumptions. This result enables us to establish convergence guarantees to stationary points using standard tools from stochastic optimization. We further provide a statistical generalization bound when the optimization problem is viewed through the lens of empirical risk minimization. In addition, KSIVI naturally extends to multi-layer hierarchical constructions, substantially enhancing the expressiveness of the variational family while preserving tractability. Empirical results on synthetic examples and a range of Bayesian inference tasks demonstrate that KSIVI achieves comparable or improved performance relative to SIVI-SM, while offering more efficient and stable optimization.
A preliminary version of this work appeared in \citet{cheng2024kernel}; the present version provides a complete exposition, rigorous statistical guarantees, and a hierarchical extension that further enhances the expressive power of the variational family.

\section{Background}

\paragraph{Notations}
Let $\|\cdot\|$ be the standard Euclidean norm of a vector and the operator norm of a matrix or high-dimensional tensor. 
We use $\mathcal{B}_2(R)$ to denote the Euclidean ball centered at the origin in $\R^d$ with radius $R$.
Let $\|\cdot\|_{\infty}$ denote the $\ell_\infty$ norm of a vector.
For any $x,y\in\R^d$, the expressions $x\odot y, \frac{x}{y}, x^2$ stand for element-wise product, division, and square, respectively.
The RKHS induced by kernel $k(\cdot,\cdot)$ is denoted as $\mathcal{H}_0$ and the corresponding Cartesian product is defined as $\mathcal{H}:=\mathcal{H}_0^{\otimes d}$.
Let $\nabla_1k (\text{resp.} \nabla_2k)$ be the gradient of the kernel w.r.t. the first (resp. second) variable.
We denote the inner product in $\R^d$ and Hilbert space $\mathcal{F}$ by $\left\langle \cdot,\cdot \right\rangle$ and $\left\langle \cdot,\cdot \right\rangle_{\mathcal{F}}$, respectively.
For a probabilistic density function $q$, we use $s_q$ to denote its score function $\nabla\log q$.
Finally, we use standard $\lesssim, \gtrsim, \mathcal{O}(\cdot), \Omega(\cdot)$ to omit constant factors and $\Tilde{\mathcal{O}}(\cdot)$ to omit logarithmic factors.

\subsection{Variational Inference and Semi-Implicit Variational Families}
Let $\mathcal{D}$ be an observed data set associated with latent variables $x = (x_1, x_2, \cdots, x_d)^\top$ via a likelihood function $p(\mathcal{D}\mid x)$. 
Given a prior distribution $\pi(x)$, Bayesian inference aims to characterize the posterior distribution
$$
p(x\mid\mathcal{D})=\frac{\pi(x)\,p(\mathcal{D}\mid x)}{p(\mathcal{D})}\propto p(\mathcal{D},x),
$$ 
where the marginal density \(p(\mathcal{D})=\int p(\mathcal{D},x)\,\mathrm{d}x\) is referred to as the model evidence and serves as a normalizing constant.
For most nontrivial probabilistic models,
this integral is analytically intractable, which precludes direct evaluation of the posterior distribution and necessitates approximate inference techniques.
VI is a prominent class of deterministic approximation methods for addressing this challenge.
Classical VI specifies a family of variational distributions $\{q_\phi(x)\}_{\phi\in\Phi}$ and finds the best approximation $q_{\phi^*}(x)$ within this family by minimizing a measure of dissimilarity between $q_{\phi}(x)$ and the posterior,
$$
\phi^* = \argmin_{\phi\in\Phi} \mathrm{D}(q_\phi(x)\|p(x|\mathcal{D})),
$$
where $\mathrm{D}(\cdot\|\cdot)$ is typically chosen to be the Kullback-Leibler (KL) divergence \citep{kullback1951}.
As the posterior density $p(x|\mathcal{D})$ is not directly accessible, this optimization problem is instead reformulated as the maximization of the evidence lower bound (ELBO)
\begin{equation}\label{eq:elbo}
       \mathcal{L}(q_\phi(\cdot))= \E_{q_\phi(x)}\left[\log p(\mathcal{D},x)-\log q_\phi(x)\right] \le \log p(D)
\end{equation}
This reformulation follows from the identity
$$
\log p(\mathcal{D}) = \mathcal{L}(q_\phi(\cdot)) + \mathrm{D}_{\mathrm{KL}}(q_\phi(x)\|p(x|\mathcal{D})),
$$
which implies that maximizing the ELBO is equivalent to minimizing the KL divergence.

To expand the expressive capacity of variational families, semi-implicit variational inference (SIVI) \citep{yin2018semi} introduces a hierarchical construction of the form
\begin{equation}
   x \sim q_\phi(x|z),\quad z \sim q(z),\quad q_\phi(x) = \int q_\phi(x|z)q(z) dz.
\end{equation}
Here, $q(z)$ is a mixing distribution that is easy to sample from and may itself be implicit, while the conditional distribution $q_\phi(x|z)$ is required to admit a tractable density.
Compared to classical VI, which requires fully explicit variational densities, this semi-implicit construction substantially enhances expressive power and enables the modeling of complex dependencies and multimodal structures.

The marginal density $q_\phi(x)$, however, is generally intractable, rendering the ELBO in \eqref{eq:elbo} unavailable.
To address this issue, \citet{yin2018semi} proposed a sequence of tractable surrogate objectives $\underline{\mathcal{L}}^{(K)}(q_\phi(x))$, defined as
\begin{equation}
\underline{\mathcal{L}}^{(K)}(q_\phi(\cdot)) := \E_{q_\phi(x|z^{(0)})\prod_{k=0}^{K}q(z^{(k)})}\log \frac{p(\mathcal{D},x)}{\frac{1}{K+1}\sum_{k=0}^K q_\phi(x|z^{(k)})}.
\end{equation}
These surrogate are asymptotically exact in the sense that
$\lim_{K\rightarrow  \infty}\underline{\mathcal{L}}^{(K)}(q_\phi(\cdot)) = \mathcal{L}(q_\phi(\cdot))$. 
In practice, an increasing sequence $\{K_t\}_{t=1}^\infty$ is often employed, with the surrogate objective $\underline{\mathcal{L}}^{(K_t)}(q_\phi(\cdot))$ optimized at the $t$-th iteration to progressively tighten the bound.
Semi-implicit variational inference is closely related to hierarchical variational models \citep{ranganath2016hier} in its use of latent-variable constructions.
However, it enjoys several important advantages.
In contrast to hierarchical variational models, which typically require an explicit variational prior and often an auxiliary reverse model, SIVI allows the mixing distribution to remain implicit and avoids the need for reverse-model training, while retaining substantial expressive flexibility.

\subsection{Semi-Implicit Variational Inference via Score Matching}
Beyond ELBO-based objectives, score-based measures have also been introduced for variational inference \citep{liu2016kernelized, zhang18, Hu18, korba2021kernel, cai2024batch, modi2025batch}. 
These approaches assume that the score function $s_p(x)=\nabla\log p(x)$ of the target posterior $p(x):=p(x|\mathcal{D})$ is tractable, which holds for a wide class of Bayesian models. 
A key tool in this framework is the Stein discrepancy \citep{Gorham2015, liu2016stein,liu2016kernelized}. It is defined as 
\begin{equation}\label{eq:stein-discrepancy}
    \mathrm{S}(q\|p) := \sup_{f\in\mathcal{F}}\ \E_{q}[\nabla\log p(x)^\top f(x) + \nabla \cdot f(x)],
\end{equation}
where $\mathcal{F}$ is a pre-defined function space.
This measure is rooted in Stein's identity \citep{stein1972bound}, which states that under mild conditions, for any $f \in \mathcal{F}$ it holds
\begin{equation} \label{steinidentity}
\mathbb{E}_{q}\left[\nabla\log q(x)^\top f(x) + \nabla \cdot f(x)\right] = 0.
\end{equation}
An early application of Stein discrepancy to variational inference is operator variational inference (OPVI) \citep{ranganath2016operator}, which formulates variational inference as a minimax optimization problem over the variational distribution $q$ and a test function $f\in\mathcal{F}$.
The learned stein discrepancy (LSD) \citep{grathwohl2020} simplifies this approach by replacing the hard constraint on $\mathcal{F}$ with an $L_2$ regularization term on $f(x)$:
\begin{equation}\label{eq:LSD}
\mathcal{L_{\mathrm{LSD}}}(q(\cdot)) = \sup_{f} \mathbb{E}_{q(x)}2 \left[\nabla\log p(x)^\top f(x) + \nabla \cdot f(x) - \lambda \|f(x)\|_2^2\right].
\end{equation}
By incorporating Stein's identity and setting $\lambda=\tfrac12$, (\ref{eq:LSD}) admits a closed-form solution, yielding an objective equivalent to the Fisher divergence between $q$ and $p$:
\[ 
\mathcal{L_{\mathrm{LSD}}}(q(\cdot)) 
= \mathbb{E}_{q(x)} \|\nabla \log p(x) - \nabla \log q(x)\|_2^2. 
\] 

However, both OPVI and LSD require evaluating divergence terms involving $\nabla\cdot f(x)$, which becomes computationally prohibitive in high-dimensional settings.
To address this challenge, SIVI-SM \citep{yu2023semi} exploits the hierarchical structure of semi-implicit variational distributions.
Specifically, SIVI-SM minimizes the Fisher divergence via the following minimax formulation
\begin{equation}\label{eq:minmax}
\min_\phi \max_{f}\ \E_{q_\phi(x)} \left[ 2f(x)^\top [s_p(x) - s_{q_\phi}(x)]- \|f(x)\|^2\right].
\end{equation}
Using a technique analogous to denoising score matching \citep{Vincent2011}, this objective can be rewritten in a tractable form as
\begin{equation}
\min_\phi \max_{f}\ \E_{q_{\phi}(x,z)}\left[2f(x)^\top[s_p(x) - s_{q_\phi(\cdot|z)}(x)] - \|f(x)\|^2\right],
\end{equation}
where $q_\phi(x,z)=q_\phi(x|z)q(z)$.
In practice, the auxiliary function 
$f$ is often parameterized by a neural network $f_\psi(x)$, which introduces inner-loop optimization and necessitates careful tuning of optimization hyperparameters.

\subsection{Hierarchical Semi-Implicit Models}

While powerful, standard variational inference methods, including SIVI, can struggle (e.g., suffer from mode collapse) when the target posterior is highly multimodal \citep{morningstar2021automatic}.

To mitigate this issue, recent work has explored hierarchical constructions where a sequence of intermediate distributions are introduced to build a smooth path from a simple base distribution to a complex target posterior, thereby improving exploration and stability \citep{wu2020stochastic}.
Building on this concept, \citet{yu2023hierarchical} introduced hierarchical semi-implicit variational inference (HSIVI) to further enhance the flexibility of semi-implicit variational families.
Specifially, HSIVI defines a sequence of variational distributions recursively, where each layer is constructed using a conditional distribution $q_{\phi}(x_t|x_{t+1};t)$ on top of the previous layer, yielding
\begin{equation}\label{hiervf}
  \quad q_{\phi}(x_{t};t) = \int q_{\phi}(x_{t}|x_{t+1}; t) q_{\phi}(x_{t+1}; t+1)\mathrm{d} x_{t+1},
\end{equation}
for $t=0,1, \ldots, T-1$, and $q_{\phi}(x_T;T):=q(x_T; T)$ is a variation prior (e.g., standard Gaussian). 
Moreover, to amortize the difficulty of directly approximating a complex posterior in a single step, HSIVI introduces a sequence of auxiliary target distributions $\{p_{t}(x)\}_{t=0}^{T-1}$ that interpolate between the original target distribution $p_0(x):=p(x)$ and a simpler based distribution, i.e., $q(x_T;T)$. 
A common choice is geometric interpolation, defined as:
\begin{equation}\label{geometric}
  p_t(x) \propto p_{\textrm{base}}(x)^{1-\lambda_t} p(x)^{\lambda_t}, \ s_t(x) := \nabla_x \log p_t(x) = (1-\lambda_{t}) s_{p_\textrm{base}}(x) + \lambda_t s_p(x),
\end{equation}
where $\{\lambda_t\}_{t=0}^{T-1}$ is a decreasing sequence with $\lambda_0 = 1$, smoothly transitioning from the target distribution to the base distribution. 
Training is performed by jointly minimizing a weighted sum of SIVI objectives across layers:
\begin{equation}\label{HSIVI-f-loss}
    \mathcal{L}_{\textrm{HSIVI-}f}(\phi) = \sum_{t=0}^{T-1}\beta(t) \mathcal{L}_{\textrm{SIVI-}f}\left(q_{\phi}(\cdot;t)\| p(\cdot; t)\right),
\end{equation}
where $\beta(t)$ is a positive weighting function and $f$ denotes a chosen dissimilarity measure.
For instance, \citet{yu2023hierarchical} consider the Fisher divergence, leading to the HSIVI-SM method, which has been successfully applied to diffusion model acceleration.

\subsection{Kernel Stein Discrepancy}
Consider a continuous and positive semi-definite kernel $k(\cdot,\cdot): \R^d \times \R^d \to \R$ and its corresponding RKHS $\mathcal{H}_0$ of real-valued functions in $\R^d$.
The RKHS $\mathcal{H}_0$ satisfies the reproducing property $\left\langle f(\cdot), k(x,\cdot)\right\rangle_{\mathcal{H}_0}=f(x)$, for any function $f\in \mathcal{H}_0$.
If a measure $q$ satisfies $\int k(x,x)\dif q(x)<\infty$, then $\mathcal{H}_0 \subset L^2(q)$, and the associated integral operator $S_{q,k}:L^2(q)\to \mathcal{H}_0$
\begin{equation}
(S_{q,k}f)(\cdot) := \int k(\cdot,y)f(y)\dif q(y),
\end{equation}
is well defined.
We can extend $\mathcal{H}_0$ to a space of vector-valued functions via the Cartesian product $\mathcal{H}:=\mathcal{H}_0^{\otimes d}$,
equipped with the canonical inner product.
For vector-valued inputs $f$, we reload the definition of the operator $S_{q,k}$ and apply it element-wisely.

Other than the Stein discrepancy defined in Equation~(\ref{eq:stein-discrepancy}), one can also define the kernel Stein discrepancy \citep{liu2016kernelized, pmlr-v48-chwialkowski16, gorham2017measuring} by restricting the functional class $\mathcal{F}$ to the unit ball of the vector-valued RKHS $\mathcal{H}$, i.e., $\mathcal{F} = \{f:\|f\|_{\mathcal{H}}\le 1\}$.
This choice leads to a closed-form expression of the discrepancy via the kernel trick:
\begin{equation} \label{eq:kernel-stein-discrepancy}
\textrm{KSD}(q\|p)^2 = \E_{x,y\sim q}\left[\left(s_p(x)-s_q(x)\right)^\top k(x,y) \left(s_p(y)-s_q(y)\right)\right],
\end{equation}
which can be viewed as a kernelized Fisher divergence.
Applying Stein's identity (Equation~\eqref{steinidentity}) further yields a fully computable form
\begin{equation} \label{eq:ksd-tractable}
\textrm{KSD}(q\|p)^2 = \E_{x,y\sim q}[k_p(x,y)],
\end{equation}
where $k_p(x,y)$ (called a Stein kernel) is defined as
\begin{equation} \label{eq:stein-kernel-def}
k_p(x,y) = s_p(x)^\top s_p(y) k(x,y) + s_p(x)^\top \nabla_y k(x,y) + \nabla_x k(x,y)^\top s_p(y) + \nabla_x\cdot \nabla_y k(x,y).
\end{equation}
This formulation depends only on the target score $s_p(x)=\nabla\log p(x)$, which is available even when $p$ is known only up to a normalization constant.
The resulting Stein kernel $k_p$ is positive definite, which allows KSD in Equation~(\ref{eq:ksd-tractable}) to be interpreted as an asymmetric MMD in the RKHS induced by $k_p$.
Under standard conditions on the RKHS kernel (e.g., IMQ or Gaussian RBF), KSD metrizes weak convergence and vanishes if and only if $q=p$ \citep{gorham2017measuring}.

While KSD provides a principled and computable discrepancy for variational inference, its direct application to semi-implicit variational families requires additional care, as the score function $s_q$ of the variational distribution is generally intractable.
\citet{Hu18} considered Equation (\ref{eq:ksd-tractable}) to bypass computation of $s_q$ without exploiting the structure of semi-implicit family. However, Equation (\ref{eq:ksd-tractable}) requires the kernel to be twice-differentiable, which excludes some expressive kernel families, such as the Riesz kernel.
In the next section, we show how this challenge can be resolved by exploiting the hierarchical structure of semi-implicit distributions.

\section{Kernel Semi-implicit Variational Inference}
In this section, we introduce kernel semi-implicit variational inference (KSIVI), which eliminates the lower-level optimization in SIVI-SM by leveraging kernel methods. We first derive the closed-form solution of the lower-level problem and show how it reduces the overall objective to a kernel Stein discrepancy. We then discuss the practical implementation of the resulting upper-level optimization and compare different Monte Carlo gradient estimators.
\subsection{A Tractable Training Objective}

Inspired by the success of KSD, we now adopt the kernel method for SIVI-SM.
Instead of considering $f\in L^2(q_\phi)$ which in general does not allow an explicit solution in \cref{eq:minmax}, we seek the optimal $f^\ast$ in an RKHS $\mathcal{H}$ and reformulate this minimax problem as
\begin{equation}\label{eq:kernel_minmax}
    \min_{\phi}\max_{f\in \mathcal{H}} \ \E_{q_{\phi}(x)}\left[2f(x)^\top[s_p(x) - s_{q_\phi}(x)] - \|f\|_{\mathcal{H}}^2\right].
\end{equation}
The following theorem shows that the solution $f^\ast$ to the lower-level optimization in \cref{eq:kernel_minmax} has an explicit form, which allows us to reduce \cref{eq:kernel_minmax} to a standard optimization problem.
\begin{theorem}\label{thm:opt_f}
    For any variational distribution $q_\phi$, the solution $f^*$ to the lower-level optimization in  \cref{eq:kernel_minmax} takes the form
    \begin{equation}
        f^*(x)=\E_{ q_\phi(y)} k(x,y)\left[s_p(y)-s_{q_\phi}(y)\right].
    \end{equation}
    Thus the upper-level optimization problem for $\phi$ is
    \begin{equation}\label{eq:semi_ksd}
        \min_\phi\ \text{KSD}(q_\phi \|p)^2=\left\|S_{q_\phi,k}\nabla\log\frac{p}{q_\phi}\right\|_{\mathcal{H}}^2.
    \end{equation}
\end{theorem}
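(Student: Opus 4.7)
The plan is to recognize the lower-level optimization as a concave quadratic functional on the Hilbert space $\mathcal{H}$, whose unique maximizer can be read off directly once we rewrite the linear part as an inner product via the reproducing property. Concretely, for the vector-valued RKHS $\mathcal{H}=\mathcal{H}_0^{\otimes d}$ with componentwise reproducing property $f_i(x)=\langle f_i, k(x,\cdot)\rangle_{\mathcal{H}_0}$, I would write
\begin{equation*}
\E_{q_\phi(x)}\bigl[f(x)^\top(s_p(x)-s_{q_\phi}(x))\bigr]
=\sum_{i=1}^d \Bigl\langle f_i,\ \E_{q_\phi(x)}\bigl[k(x,\cdot)(s_p(x)-s_{q_\phi}(x))_i\bigr]\Bigr\rangle_{\mathcal{H}_0}
=\langle f,\, h\rangle_{\mathcal{H}},
\end{equation*}
where $h(\cdot)=\E_{q_\phi(x)}[k(\cdot,x)(s_p(x)-s_{q_\phi}(x))]=S_{q_\phi,k}(s_p-s_{q_\phi})$. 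Moving the expectation inside the inner product is legitimate by Bochner integrability, which holds under the standing assumption $\int k(x,x)\,\mathrm{d}q_\phi(x)<\infty$ that makes $S_{q_\phi,k}$ well defined; this is the one technical step that needs a careful word.

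Substituting this inner product expression into the lower-level objective gives the elementary concave quadratic $2\langle f,h\rangle_{\mathcal{H}}-\|f\|_{\mathcal{H}}^2$, whose unique maximizer is $f^\ast=h$ with optimal value $\|h\|_{\mathcal{H}}^2$. Unpacking the definition of $h$ recovers precisely the stated formula $f^\ast(x)=\E_{q_\phi(y)}k(x,y)[s_p(y)-s_{q_\phi}(y)]$, and the upper-level problem therefore becomes
\begin{equation*}
\min_\phi\ \bigl\|S_{q_\phi,k}(s_p-s_{q_\phi})\bigr\|_{\mathcal{H}}^2
=\min_\phi\ \Bigl\|S_{q_\phi,k}\nabla\log\tfrac{p}{q_\phi}\Bigr\|_{\mathcal{H}}^2.
\end{equation*}

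It remains to identify this squared RKHS norm with $\mathrm{KSD}(q_\phi\|p)^2$. Expanding the norm coordinatewise and pulling the two independent expectations outside the inner product yields
\begin{equation*}
\bigl\|S_{q_\phi,k}(s_p-s_{q_\phi})\bigr\|_{\mathcal{H}}^2
=\E_{x,y\sim q_\phi}\bigl[(s_p(x)-s_{q_\phi}(x))^\top k(x,y)(s_p(y)-s_{q_\phi}(y))\bigr],
\end{equation*}
which is exactly the KSD expression in \eqref{eq:kernel-stein-discrepancy}. This finishes the identification with $\mathrm{KSD}(q_\phi\|p)^2$.

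The only real obstacle is justifying the interchange of expectation and RKHS inner product, i.e.\ showing that $h=S_{q_\phi,k}(s_p-s_{q_\phi})$ lies in $\mathcal{H}$ so that the representer identity above is valid. This reduces to a Bochner integrability check using $\int k(x,x)\,\mathrm{d}q_\phi(x)<\infty$ together with finiteness of the score difference in $L^2(q_\phi)$; everything else is a one-line completion of squares in a Hilbert space and the standard kernel-trick manipulation that gives KSD its closed form.
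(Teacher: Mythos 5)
Your proof is correct and follows essentially the same route as the paper: use the reproducing property to turn the linear term $\E_{q_\phi}[f(x)^\top(s_p(x)-s_{q_\phi}(x))]$ into the inner product $\langle f, S_{q_\phi,k}(s_p-s_{q_\phi})\rangle_{\mathcal{H}}$, then complete the square in $\mathcal{H}$ to read off $f^\ast$ and the optimal value. You add a helpful note on Bochner integrability to justify pulling the expectation inside the inner product and you spell out the identification with $\mathrm{KSD}(q_\phi\|p)^2$; the paper treats both implicitly, but the underlying argument is identical.
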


The detailed proof is deferred to Appendix \ref{app:subsec:proof_opt_f}.
Moreover, as observed in SIVI-SM \citep{yu2023semi}, due to the linearity of the kernel integral operator $S_{q,k}$, the intractable marginal score $s_q$ can be replaced by its conditional counterpart inside the KSD objective by taking advantage of the semi-implicit structure,
\begin{equation}
    \begin{aligned}
        \E_{q_\phi(y)} k(x,y)s_{q_\phi}(y) 
        = & \E_{q_\phi(y)} \frac{k(x,y)}{q_\phi(y)} \int \nabla_y q_\phi(y|z)q(z) \dif z \\
        = & \int k(x,y)\nabla_y q_{\phi}(y|z)q(z)\dif z\dif y \\
        = & \int k(x,y)s_{q_\phi(\cdot|z)}(y)q_\phi(y|z)q(z)\dif z\dif y \\
        = & \E_{q_\phi(y,z)} k(x,y)s_{q_\phi(\cdot|z)}(y).
    \end{aligned}
\end{equation}
This is also called the score projection identity in \citet{zhou2024score}.
Based on this identity, we have the following proposition.
\begin{proposition}
    The solution $f^*$ in Theorem \ref{thm:opt_f} can be rewritten as
    \begin{equation}
        f^*(x)=\E_{q_\phi(y,z)} k(x,y)\left[s_p(y)-s_{q_\phi(\cdot|z)}(y)\right].
    \end{equation}
    And the $\text{KSD}(q_\phi \|p)^2$ in \cref{eq:semi_ksd} has an equivalent expression

\begin{equation}\label{eq:phi_obj}
\text{KSD}(q_\phi \|p)^2=\E_{q_\phi(x,z), q_\phi(x
',z')} \big[k(x,x')\cdot 
\big\langle s_p(x)-s_{q_\phi(\cdot|z)}(x),s_p(x')-s_{q_\phi(\cdot|z')}(x')\big\rangle\big].
\end{equation}
\end{proposition}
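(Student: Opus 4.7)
Both claims of the proposition reduce to the score projection identity derived in the display immediately before the statement, namely
\begin{equation*}
\E_{q_\phi(y)} k(x,y) s_{q_\phi}(y) = \E_{q_\phi(y,z)} k(x,y) s_{q_\phi(\cdot|z)}(y),
\end{equation*}
which rewrites the (vector-valued) expectation of $k(x,\cdot)s_{q_\phi}(\cdot)$ under the marginal $q_\phi$ in terms of the tractable conditional score. This identity holds pointwise in $x$ and hence may be substituted wherever such an expectation appears inside a further integral.

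For the first claim, I would start from the form $f^*(x) = \E_{q_\phi(y)} k(x,y)[s_p(y) - s_{q_\phi}(y)]$ given by Theorem \ref{thm:opt_f}, split the bracket by linearity of expectation, leave the $s_p$ contribution untouched (it does not depend on $q_\phi$), and apply the score projection identity to the $s_{q_\phi}$ contribution. Recombining the two pieces under the joint law $q_\phi(y,z)=q_\phi(y|z)q(z)$ yields the conditional representation of $f^*$ asserted in the proposition.

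For the second claim, I would begin from \eqref{eq:kernel-stein-discrepancy}, expand the inner product $\langle s_p(x)-s_{q_\phi}(x), s_p(y)-s_{q_\phi}(y)\rangle$ into four scalar terms weighted by $k(x,y)$, and then apply the score projection identity separately to $s_{q_\phi}(x)$ (introducing an independent mixing variable $z$) and to $s_{q_\phi}(y)$ (with its own independent copy $z'$). Because the identity is pointwise in the other argument, Fubini lets the extra latent expectations be pulled outside the bilinear form, and recombining the four terms reproduces exactly \eqref{eq:phi_obj} with joint laws $q_\phi(x,z)$ and $q_\phi(x',z')$. A more compact alternative, which I might prefer for the write-up, is to use \eqref{eq:semi_ksd} together with the reproducing property of $\mathcal{H}$ to express $\|f^*\|_{\mathcal{H}}^2 = \E_{q_\phi(y,z)}\langle s_p(y) - s_{q_\phi(\cdot|z)}(y), f^*(y)\rangle$ and then substitute the new conditional representation of $f^*$ obtained in the first claim.

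The algebra itself is routine; the one delicate point, and hence what I expect to require the most care, is justifying the interchange of integration orders when the score projection identity is invoked inside the bilinear RKHS inner product. I would discharge this under the mild integrability conditions already implicit in the well-posedness of the operator $S_{q_\phi,k}$ introduced earlier in the paper (finite second moments of $s_p$ and $s_{q_\phi(\cdot|z)}$ under $q_\phi$, together with at most polynomial growth of $k$), after which Fubini's theorem applies and the two claims follow directly.
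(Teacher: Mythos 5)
Your proposal is correct and follows essentially the same route as the paper: the paper gives no separate proof but explicitly states that the proposition follows from the score projection identity displayed immediately above it, which is precisely the mechanism you invoke. Your additional remarks on Fubini and the alternative route through $\|f^*\|_{\mathcal{H}}^2$ via the reproducing property are sound elaborations of the same idea rather than a different argument.
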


Since the semi-implicit variational distribution $q_\phi(x,z)$ enables efficient sampling, both $f^*$ and $\text{KSD}(q_\phi \|p)^2$ can be estimated using the Monte Carlo method with samples from $q_\phi(x,z)$.
This way, we have transformed the minimax problem \cref{eq:kernel_minmax} into a standard optimization problem with a tractable objective function.

\begin{algorithm}[t]
    \caption{KSIVI with diagonal Gaussian conditional layer and vanilla gradient estimator}
    \label{alg:kernel_sivi_vanilla}
    \begin{algorithmic}
        \STATE{{\bfseries Input:} target score $s_p(x)$, number of iterations $T$, number of samples $N$ for stochastic gradient.}
        \STATE{{\bfseries Output:} the optimal variational parameters $\phi^\ast$.}
        \FOR{$t=0, \cdots, T-1$}
            \STATE{
            Sample $\{z_{r1},\cdots, z_{rN}\}$ from mixing distribution $q(z)$ for $r=1,2$.
            }
            \STATE{
            Sample $\{\xi_{r1},\cdots, \xi_{rN}\}$ from $\mathcal{N}(0,I)$ for $r=1,2$.
            }
            \STATE{
            Compute $x_{ri} =\mu(z_{ri};\phi)+\sigma(z_{ri};\phi)\odot \xi_{ri}$ and $f_{ri}=s_p(x_{ri})+\frac{\xi_{ri}}{\sigma(z_{ri};\phi)}$.
            }
            \STATE{
            Compute stochastic gradient $\hat{g}_{\textrm{vanilla}}(\phi)$ through \cref{eq:sto_grad}.
            }
            \STATE{
            Set $\phi\leftarrow\text{optimizer}(\phi, \hat{g}_{\textrm{vanilla}}(\phi))$. 
            }
        \ENDFOR
        \STATE{$\phi^\ast\leftarrow\phi$}
    \end{algorithmic}
\end{algorithm}

\subsection{Practical Implementation}\label{sec:practical-implementation}
Suppose the conditional $q_\phi(x|z)$ admits the reparameterization trick \citep{VAE,titsias14,Rezende14}, i.e., $x\sim q_\phi(\cdot|z)\Leftrightarrow x=T_\phi(z,\xi), \xi\sim q_\xi$, where $T_\phi$ is a parameterized transformation and $q_\xi$ is a base distribution that does not depend on $\phi$.
We now show how to find an optimal variational approximation $q_{\phi}(x)$ that minimizes the KSD between $q_{\phi}(x)$ and $p(x)$ defined in \cref{eq:phi_obj} using stochastic optimization.

To estimate the gradient $g(\phi) = \nabla_{\phi}\text{KSD}(q_\phi \|p)^2$, we consider two unbiased stochastic gradient estimators in our implementations.
The first one is a vanilla gradient estimator using two batches of Monte Carlo samples $(x_{ri}, z_{ri})\overset{\textrm{i.i.d.}}{\sim} q_{\phi}(x,z)$ where $r=1,2$ and $1\leq i\leq N$, and it is defined as
\begin{equation}\label{eq:sto_grad}
    \hat{g}_{\textrm{vanilla}}(\phi)=\frac{1}{N^2}\sum_{1\leq i,j\leq N}\nabla_\phi \left[k(x_{1i},x_{2j})\left\langle f_{1i},f_{2j}\right\rangle \right],
\end{equation}
where $f_{ri}=s_p(x_{ri})-s_{q_\phi(\cdot|z_{ri})}(x_{ri})$.
We can also leverage U-statistics to design an alternative estimator \citep{MMD,liu2016kernelized}.
Given one batch of samples $(x_{i}, z_{i})\overset{\textrm{i.i.d.}}{\sim} q_{\phi}(x,z)$ for $1\leq i\leq N$, the U-statistic gradient estimator takes the form
\begin{equation}\label{eq:sto_grad_u}
    \hat{g}_{\textrm{u-stat}}(\phi)=\frac{2}{N(N-1)}\sum_{1\leq i<j\leq N}\nabla_\phi \left[k(x_i,x_j)\left\langle f_i,f_j\right\rangle \right],
\end{equation}
where $f_i=s_p(x_i)-s_{q_\phi(\cdot|z_i)}(x_i)$. 
Note that all gradient terms in $\hat{g}_{\textrm{vanilla}}(\phi)$ and $\hat{g}_{\textrm{u-stat}}(\phi)$ can be efficiently evaluated using the reparameterization trick such that $x=G(\xi,z;\phi)$ for some random variable $\xi$ and transformation $G$.

In our implementation, we assume a diagonal Gaussian conditional layer $q_\phi(x|z)=\mathcal{N}(\mu(z;\phi), \mathrm{diag}\{\sigma^2(z;\phi)\})$ where the mean and standard deviation $\mu(z;\phi),\sigma(z;\phi)\in \R^d$ are parametrized by neural networks.
This admits the reparameterization $x=\mu(z;\phi)+\sigma(z;\phi)\odot\xi$ with $\xi\sim q_\xi=\mathcal{N}(0,I)$, under which the conditional score takes the explicit form $s_{q_\phi(\cdot|z)}(x)=-\xi\oslash\sigma(z;\phi)$, where $\oslash$ means element-wise division.
Note that in principle, one can sample multiple $\xi$ for each latent variable $z_i$ to reduce Monte Carlo variance, as suggested in \citet{yin2018semi}.
In our experiments, however, we find that using a single noise sample $\xi$ per $z_i$ is sufficient and yields stable performance.
The full training procedure of KSIVI using the vanilla gradient estimator is summarized in Algorithm \ref{alg:kernel_sivi_vanilla}.
The corresponding algorithm based on the U-statistic gradient estimator is deferred to Algorithm \ref{alg:kernel_sivi} in Appendix \ref{app:sec:alg}.
Compared with existing variants of SIVI, KSIVI directly optimizes the kernel Stein discrepancy in a stable and efficient manner using samples from the variational distribution, without resorting to surrogate ELBO objectives, costly inner-loop MCMC, or additional lower-level optimization.

\begin{figure}[!t]
    \centering
    \includegraphics[width=\linewidth]{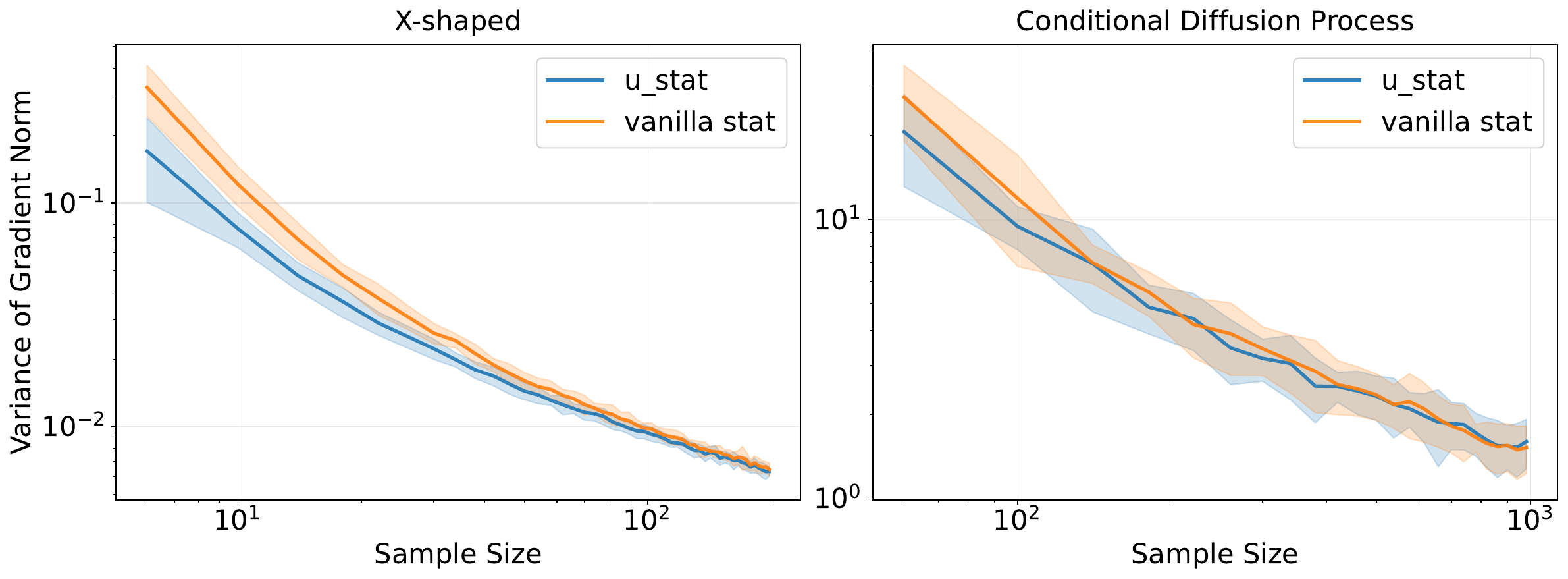}
    \caption{Gradient norm comparison across sample sizes for vanilla and U-statistic estimators. 
    The shaded areas represent the standard deviations, estimated from 50 independent runs.}
    \label{fig:gradient_norm}
\end{figure}

\subsection{Variance of Gradient Estimators}
We analyze the variance of two unbiased Monte Carlo gradient estimators used to optimize the KSIVI objective: the vanilla estimator and the U-statistic estimator.

\begin{proposition}
Let $\theta_r=(z_r,\xi_r),r=1,2$ be i.i.d. samples, and let $h(\theta_1,\theta_2)=\nabla_\phi[k(x_1,x_2)\left\langle f_1,f_2\right\rangle]$ be the gradient contribution from a single pair of samples. Define
\[
\zeta_1=\Var(\E[h(\theta_1,\theta_2)|\theta_1]),\quad \zeta_2=\Var(h(\theta_1,\theta_2)).
\]
Then both the vanilla estimator $\hat{g}_{\textrm{vanilla}}(\phi)$ and the U-statistic estimator $\hat{g}_{\textrm{u-stat}}(\phi)$ are unbiased estimators of the true gradient $\E[h(\theta_1,\theta_2)]$, and their variances under a sample budget of $N$ satisfy
\begin{equation}\label{eq:var_vanilla}
    \Var_{N}(\hat{g}_{\textrm{vanilla}}(\phi))=\frac{4(N-2)}{N^2}\zeta_1+\frac{4}{N^2}\zeta_2,
\end{equation}
\begin{equation}\label{eq:var_ustat}
    \Var_{N}(\hat{g}_{\textrm{u-stat}}(\phi))=\frac{4(N-2)}{N(N-1)}\zeta_1+\frac{2}{N(N-1)}\zeta_2.
\end{equation}
\end{proposition}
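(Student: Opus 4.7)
The plan is to exploit the fact that the kernel $k$ and the Euclidean inner product are symmetric, so $h(\theta_1,\theta_2)=h(\theta_2,\theta_1)$, and that in both algorithms every sample pair $(\theta_a,\theta_b)$ appearing in the double sum has $\theta_a,\theta_b$ i.i.d.\ from $q_\phi(x,z)$. Unbiasedness then follows immediately: each individual summand has expectation $\E[h(\theta_1,\theta_2)] = g(\phi)$, and the prefactors $1/N^2$ and $2/(N(N-1))$ are chosen precisely so that the number of summands cancels. The rest of the proof is variance bookkeeping.

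For the vanilla estimator, I would interpret the ``sample budget of $N$'' as $N$ total samples, split into two independent batches of size $N/2$, giving $\hat g_{\textrm{vanilla}}(\phi)=\tfrac{4}{N^2}\sum_{i,j=1}^{N/2} h(\theta_{1i},\theta_{2j})$. Then I would expand
\[
\Var(\hat g_{\textrm{vanilla}})=\tfrac{16}{N^4}\sum_{i,j,i',j'}\Cov\!\bigl(h(\theta_{1i},\theta_{2j}),\,h(\theta_{1i'},\theta_{2j'})\bigr),
\]
and enumerate four cases by index coincidence across the two batches: (a) $i=i'$ and $j=j'$, contributing $\zeta_2$; (b) $i=i'$, $j\neq j'$, where conditioning on $\theta_{1i}$ yields covariance $\Var(\E[h\mid\theta_{1i}])=\zeta_1$; (c) $i\neq i'$, $j=j'$, symmetric to (b) by symmetry of $h$, also $\zeta_1$; (d) all indices distinct, giving zero by independence. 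Counting the ordered quadruples in each case ($N^2/4$, $N^2(N-2)/8$, $N^2(N-2)/8$, and the rest) and collecting terms yields~\eqref{eq:var_vanilla}.

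For the U-statistic estimator, $\hat g_{\textrm{u-stat}}(\phi)$ is a standard degree-$2$ U-statistic with symmetric kernel $h$ based on a single i.i.d.\ sample of size $N$, so I would simply invoke Hoeffding's classical variance formula,
\[
\Var(\hat g_{\textrm{u-stat}})=\binom{N}{2}^{-1}\sum_{c=1}^{2}\binom{2}{c}\binom{N-2}{2-c}\zeta_c = \frac{2}{N(N-1)}\bigl[2(N-2)\zeta_1+\zeta_2\bigr],
\]
which rearranges to~\eqref{eq:var_ustat}. Alternatively, the same formula can be re-derived by the same pairwise case analysis as above, now restricted to the $\binom{N}{2}$ unordered pairs with $i<j$: two pairs either coincide (contributing $\zeta_2$), share exactly one index (contributing $\zeta_1$, with $2(N-2)$ such partners per pair), or are disjoint (zero).

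There is no substantive obstacle beyond careful combinatorial bookkeeping; the only subtle point is interpreting the sample budget consistently so that the $N^2$ denominators in the two formulas are genuinely comparable, which is what motivates taking batches of size $N/2$ for the vanilla estimator. Once this is fixed, both identities reduce to counting ordered quadruples or unordered pairs and tracking how shared arguments produce $\zeta_1$ via iterated conditioning.
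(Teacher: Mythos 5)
Your proof is correct and follows the standard U-statistic variance calculation that the paper invokes by citation without spelling out. You have correctly identified and resolved the one non-obvious issue, namely that ``sample budget of $N$'' must mean two batches of size $N/2$ for the vanilla estimator (not the batch size $N$ appearing in the displayed definition of $\hat g_{\textrm{vanilla}}$); with $m=N/2$ per batch, your ordered-quadruple count $\tfrac{16}{N^4}\bigl[\tfrac{N^2}{4}\zeta_2 + 2\cdot\tfrac{N^2(N-2)}{8}\zeta_1\bigr]$ reduces exactly to $\tfrac{4(N-2)}{N^2}\zeta_1+\tfrac{4}{N^2}\zeta_2$, and the Hoeffding formula gives the U-statistic result directly. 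The appeal to symmetry of $h$ (inherited from symmetry of $k$ and of the inner product) to equate the $i=i'$ and $j=j'$ cases, and the use of iterated conditioning to identify the single-shared-index covariance with $\zeta_1$, are both the right ingredients.
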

The proofs follow standard arguments of U-statistics, see, e.g., \citet{korolyuk2013theory}.
We notice that both variances share the same leading-order term $\Var(\hat{g})=\mathcal{O}(\frac{4\zeta_1}{N}+\frac{1}{N^2})$, indicating that the two estimators are asymptotically equivalent in sample efficiency. For a more fine-grained comparison, we compute the difference:
\begin{equation}
     \Var_{N}(\hat{g}_{\textrm{u-stat}}(\phi))-\Var_{N}(\hat{g}_{\textrm{vanilla}}(\phi))=\frac{2(N-2)}{N^2(N-1)}(2\zeta_1-\zeta_2),
\end{equation}
which depends on the relative magnitude of $\zeta_1$ and $\zeta_2$.
This expression shows that neither estimator uniformly dominates the other for finite $N$, although their difference vanishes rapidly as $N$ grows.
We further empirically compare the gradient norms produced by the two estimators in two settings: a two-dimensional toy model and the posterior of a conditional diffusion process (see Section \ref{sec:toy-examples} and Section \ref{sec:conditional_diffusion} for more details). As shown in Figure~\ref{fig:gradient_norm}, the empirical behavior aligns with the theoretical analysis, with the variance gap diminishing quickly as the sample size increases.

\section{Theoretical Results}\label{sec: theory}

\subsection{Optimization Guarantees}\label{subsec:theory_opt}
In this section, we provide a theoretical guarantee for the convergence of KSIVI as a black-box variational inference (BBVI) problem \citep{ranganath2014black, titsias14}.
This is non-trivial as the conventional assumptions in stochastic optimization are challenging to meet within the context of variational inference settings.
Recent works \citep{kim2023convergence, domke2023provable, kim2023linear} attempt to analyze the location-scale family, which has poor approximation capacity.
For SIVI, \citet{moens2021} assumes that $q(z)$ is a discrete measure with $n$ components and gives a sub-optimal rate depending on $\poly(n)$.
We are not aware of any analysis of the convergence of the full version SIVI with nonlinear reparameterization.

Similar to prior works, we first investigate the smoothness of the loss function and prove an upper bound of the variance of the stochastic gradient.
Then we apply the standard analysis in stochastic optimization, which ensures $\{\phi_t\}_{t\geq 1}$ converges to a stationary point. 

Define the loss function as
\begin{equation}
    \mathcal{L}(\phi):=\text{KSD}(q_\phi \| p)^2=\left\|S_{q_\phi,k}\nabla\log\frac{p}{q_\phi}\right\|_{\mathcal{H}}^2.
\end{equation}
We consider a diagonal Gaussian conditional layer.
The conditional score function is $s_{q_\phi(\cdot|z)}(x) = -\frac{\xi}{\sigma(z;\phi)}$ for $x=\mu(z;\phi)+\sigma(z;\phi)\odot \xi$ with $\xi\sim \mathcal{N}(0,I)$.
We assume an unbiased gradient estimator in \cref{eq:sto_grad} or \cref{eq:sto_grad_u}.
An SGD update $\phi_{t+1}=\phi_t-\eta\hat{g}_t$ is then applied with learning rate $\eta$ for $T$ iterations, where $\hat{g}_t$ is the stochastic gradient in the $t$-th iteration.

\begin{assumption}\label{asp:kernel}
    There exists a constant $B$ such that $\max\{k,\|\nabla_1 k\|, \|\nabla_{11} k\|, \|\nabla_{12} k\|\}\leq B^2$.
\end{assumption}

\begin{assumption}\label{asp:target}
    The logarithmic density $\log p$ is three times continuously differentiable and $\|\nabla^2 \log p(x)\|\leq L$, $\|\nabla^3 \log p(x)\|\leq M$ for some constant $L,M$.
\end{assumption}

\begin{assumption}\label{asp:nn}
    There exists a constant $G\geq 1$ such that $\max\{\|\nabla_\phi \mu(z;\phi)\|,  \|\nabla^2_\phi \mu(z;\phi)\|\} \leq G(1+\|z\|)$.
    The same inequalities hold for $\sigma(z;\phi)$. 
    Besides, for any $z,\phi$, $\sigma(z;\phi)$ has a uniform lower bound $1/\sqrt{L}$.
\end{assumption}

As an application to location-scale family, where $\mu(z;\phi)\equiv\mu\in\R^d$, $\sigma(z;\phi)\equiv\sigma\in\R^d$, we have $G=1$.
For general nonlinear neural networks, similar first order smoothness is also assumed in the analysis of GANs \citep{arora2017generalization}. 
Note that we do not directly assume a uniformly bounded constant of smoothness of $\mu(z;\phi)$ and $\sigma(z;\phi)$, but it can grow with $\|z\|$. 
We find in practice this assumption is valid and the constant $G$ is in a reasonable range.
See Appendix \ref{app:subsec:asp} for numerical evidence.
The lower bound of $\sigma(z;\phi)$ is to avoid degeneracy and is also required in \citet{domke2023provable, kim2023linear}, in which projected SGD is applied to ensure this.
Similarly, our results can be readily extended to the setting of stochastic composite optimization and  projected SGD optimizer, which we omit for future work.

\begin{assumption}\label{asp:moment}
    The mixing distribution $q(z)$ and the variational distribution $q_\phi(x)$  have bounded 4th-moment, i.e., $\E_{q(z)}\|z\|^4\lesssim d_z^2$, $\E_{q_\phi(x)}\|x\|^4\leq s^4$.
\end{assumption}

Overall, Assumption \ref{asp:kernel} guarantees that the kernel operator is well-defined and is easy to verify for commonly used kernels such as the RBF kernel and the IMQ kernel \citep{gorham2017measuring}.
Assumption \ref{asp:target} and \ref{asp:nn} ensures the smoothness of target distribution and variational distribution.
Assumption \ref{asp:moment} is a technical and reasonable assumption to bound the variance of stochastic gradient.

Based on Assumption \ref{asp:target} and \ref{asp:moment}, we can give a uniform upper bound on $\E_{q_\phi(x)}\|s_p(x)\|^4$, which is crucial to the subsequent analysis.

\begin{proposition}\label{prop:score_bound}
    Under Assumption \ref{asp:target} and \ref{asp:moment}, we have $\E_{q_\phi(x)}\|s_p(x)\|^4\lesssim L^4(s^4+\|x^*\|^4)+\|s_p(x^*)\|^4:=C^2$, where $x^*\in\R^d$ can be any reference point.
\end{proposition}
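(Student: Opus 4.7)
The plan is to exploit the fact that Assumption \ref{asp:target} implies the score $s_p = \nabla \log p$ is globally $L$-Lipschitz, so that $\|s_p(x)\|$ can be controlled pointwise by $\|x-x^*\|$ up to an additive constant $\|s_p(x^*)\|$, and then integrate against $q_\phi$ using the fourth-moment bound from Assumption \ref{asp:moment}.

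First I would fix an arbitrary reference point $x^* \in \R^d$ and use the bound $\|\nabla^2 \log p\| \leq L$ together with the fundamental theorem of calculus applied to $t \mapsto s_p(x^* + t(x-x^*))$ to obtain $\|s_p(x) - s_p(x^*)\| \leq L\|x - x^*\|$. By the triangle inequality this gives
\begin{equation*}
\|s_p(x)\| \leq \|s_p(x^*)\| + L\|x - x^*\|.
\end{equation*}

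Next I would raise this to the fourth power and apply the elementary inequality $(a+b)^4 \leq 8(a^4 + b^4)$ (and once more $\|x-x^*\|^4 \leq 8(\|x\|^4 + \|x^*\|^4)$) to arrive at
\begin{equation*}
\|s_p(x)\|^4 \;\lesssim\; L^4\,\|x\|^4 + L^4\,\|x^*\|^4 + \|s_p(x^*)\|^4.
\end{equation*}
Taking expectation over $q_\phi(x)$ and invoking Assumption \ref{asp:moment}, which yields $\E_{q_\phi(x)}\|x\|^4 \leq s^4$, immediately gives $\E_{q_\phi(x)}\|s_p(x)\|^4 \lesssim L^4(s^4 + \|x^*\|^4) + \|s_p(x^*)\|^4$, which is the stated bound after identifying the right-hand side with $C^2$.

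There is no real obstacle here: the estimate is essentially two applications of a triangle/power-mean inequality on top of Lipschitz continuity of the score. The only thing worth flagging is that $x^*$ is arbitrary, so the reader should understand $C$ is defined relative to a chosen (but free) reference point; typically one would pick $x^*$ so that $\|s_p(x^*)\|$ is small (e.g. a mode of $p$) to tighten the constant, but the proof itself does not depend on any such choice.
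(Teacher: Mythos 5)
Your proof is correct and follows essentially the same route as the paper's: both derive the Lipschitz bound $\|s_p(x)-s_p(x^*)\|\le L\|x-x^*\|$ from the Hessian bound in Assumption~\ref{asp:target}, split off $\|s_p(x^*)\|$ via the triangle/AM–GM inequality, expand $\|x-x^*\|^4\lesssim\|x\|^4+\|x^*\|^4$, and finish with the fourth-moment bound from Assumption~\ref{asp:moment}. The only cosmetic difference is that you apply the triangle inequality before raising to the fourth power while the paper works directly at the level of fourth powers; the constants and the structure of the argument are identical.
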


\begin{theorem}\label{thm:smooth}
    The objective $\mathcal{L}(\phi)$ is $L_\phi$-smooth, where 
    \begin{equation}
        L_\phi\lesssim B^2G^2d_z\log d\left[(1\vee L)^3+Ld+ M^2 + C\right].
    \end{equation}
\end{theorem}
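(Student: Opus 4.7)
My plan is to bound $\|\nabla^2_\phi \mathcal{L}(\phi)\|_{\mathrm{op}}$ pointwise in $\phi$, which immediately yields $L_\phi$-smoothness. The first step is to pass to the reparameterized form $x=T_\phi(z,\xi):=\mu(z;\phi)+\sigma(z;\phi)\odot\xi$ with $\xi\sim \mathcal{N}(0,I)$, yielding
\begin{equation*}
\mathcal{L}(\phi) = \E\bigl[k(x,x')\,\langle h(\xi,z;\phi),\,h(\xi',z';\phi)\rangle\bigr],
\end{equation*}
where $h(\xi,z;\phi):=s_p(T_\phi(z,\xi))+\xi\oslash\sigma(z;\phi)$ and the expectation is over an independent product $(z,\xi,z',\xi')$ with $\phi$-free law. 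Because the sampling distribution no longer depends on $\phi$, differentiation passes inside the expectation, and it suffices to bound $\E\|\nabla^2_\phi F(\phi)\|_{\mathrm{op}}$ for the scalar integrand $F(\phi):=k(x,x')\langle h,h'\rangle$.

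Next I would apply Leibniz's rule to split
\begin{equation*}
\nabla^2_\phi F = \langle h,h'\rangle\,\nabla^2_\phi k(x,x') + 2\,\nabla_\phi k(x,x')\otimes_{\mathrm{sym}}\nabla_\phi\langle h,h'\rangle + k(x,x')\,\nabla^2_\phi\langle h,h'\rangle,
\end{equation*}
and expand every remaining derivative by the chain rule into (i) partial derivatives of $k$ up to order two in each argument, (ii) $\nabla s_p,\nabla^2 s_p$, (iii) $\nabla_\phi\mu,\nabla_\phi\sigma,\nabla^2_\phi\mu,\nabla^2_\phi\sigma$, and (iv) the powers $\sigma(z;\phi)^{-k}$ for $k\le 3$ together with their $\phi$-derivatives. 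This produces a finite, $\phi$-independent number of scalar-valued terms that can be bounded individually.

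The third step is a term-by-term Cauchy--Schwarz argument invoking the assumptions: Assumption~\ref{asp:kernel} replaces $k,\nabla_1 k,\nabla_{11}k,\nabla_{12}k$ by $B^2$; Assumption~\ref{asp:target} replaces $\nabla s_p,\nabla^2 s_p$ by $L$ and $M$; Assumption~\ref{asp:nn} bounds $\nabla_\phi\mu,\nabla_\phi\sigma$ and their Hessians by $G(1+\|z\|)$, while the lower bound $\sigma\ge 1/\sqrt{L}$ yields $\sigma^{-k}\lesssim L^{k/2}$; Proposition~\ref{prop:score_bound} supplies $\E_{q_\phi}\|s_p(x)\|^4\lesssim C^2$; and Assumption~\ref{asp:moment} combined with Gaussian moment bounds supplies $\E\|z\|^4\lesssim d_z^2$, $\E\|\xi\|^4\lesssim d^2$, and crucially $\E\|\xi\|_\infty^4\lesssim\log^2 d$. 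The decisive choice is to bound any matrix factor of the form $\nabla_\phi\sigma\odot\xi$ by $\|\nabla_\phi\sigma\|_{\mathrm{op}}\cdot\|\xi\|_\infty$ rather than by $\|\nabla_\phi\sigma\|_{\mathrm{op}}\cdot\|\xi\|$, since only the sharp coordinatewise bound converts a potential factor $d$ into the claimed $\log d$. Grouping the surviving terms by their $\sigma^{-1}$ content then produces $(1\vee L)^3$ from the terms cubic in $\sigma^{-1}$ (the second $\phi$-derivative of $\xi\oslash\sigma$), $Ld$ from the conditional-score norm $\E\|\xi/\sigma\|^2$ entering $\E\|h\|^2$, $M^2$ from the $\nabla^2 s_p$ factor, and $C$ from the target-score fourth moment; each is multiplied by the common prefactor $B^2G^2 d_z\log d$ coming from kernel, neural-network, $\|z\|^2$, and $\|\xi\|_\infty^2$ bounds.

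The \emph{hard part} is not conceptual but organizational: the double chain rule produces a large collection of cross terms, and each must be paired with the tightest available moment bound. In particular, one must consistently apply the $\|\xi\|_\infty$-bound in every coordinatewise product to keep the logarithmic dependence, and simultaneously invoke Proposition~\ref{prop:score_bound} whenever a factor $s_p(x)$ appears inside a cross term so that mixed moments such as $\E[k(x,x')^2\|s_p(x)\|^2\|h'\|^2]$ remain finite with the stated dependence on $C$, $L$, $M$, and $d$. Using cruder bounds inflates the dependence on $d$ by polynomial factors and misses the announced constant.
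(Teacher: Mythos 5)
Your proposal takes essentially the same route as the paper's proof: reparameterize so the law is $\phi$-free, differentiate under the expectation, split $\nabla_\phi^2$ via Leibniz into a kernel-Hessian term, a cross term, and an inner-product-Hessian term, and bound each by Cauchy--Schwarz using Assumptions~\ref{asp:kernel}--\ref{asp:moment}, Proposition~\ref{prop:score_bound}, and the sharp $\E\|\xi\|_\infty^4\lesssim\log^2 d$ fact. You also correctly flag the decisive bookkeeping choice (using $\|\xi\|_\infty$ rather than $\|\xi\|$ for coordinatewise products) that keeps the polylogarithmic dimension dependence, which is exactly what the paper does.
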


\begin{theorem}\label{thm:var} 
   Both gradient estimators $\hat{g}_{\textrm{vanilla}}$ and $\hat{g}_{\textrm{u\text{-}stat}}$ have bounded variance $\Sigma=\frac{\Sigma_0}{N}$, where 
    \begin{equation}
        \Sigma_0\lesssim B^4G^2d_z\log d[L^3d+L^2d^2+C^2].
    \end{equation}
\end{theorem}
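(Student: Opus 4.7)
The plan is to reduce the variance bound to controlling a single second moment. Writing the per-pair summand as $h(\theta_1,\theta_2):=\nabla_\phi[k(x_1,x_2)\langle f_1,f_2\rangle]$ with $\theta_i=(z_i,\xi_i)$, $x_i=\mu(z_i;\phi)+\sigma(z_i;\phi)\odot\xi_i$, and $f_i=s_p(x_i)-s_{q_\phi(\cdot|z_i)}(x_i)$, both estimators are U-statistics in $h$. Since $h$ is symmetric in its two arguments, the standard U-statistic identity gives $\zeta_1\leq \zeta_2=\Var(h)\leq \E\|h\|^2$. Plugging this into \cref{eq:var_vanilla} and \cref{eq:var_ustat} yields $\Var_N(\hat g)\leq \Sigma_0/N$ with $\Sigma_0\lesssim \E\|h\|^2$, so it suffices to show $\E\|h\|^2\lesssim B^4 G^2 d_z\log d\,[L^3 d+L^2 d^2+C^2]$.

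\textbf{Expansion and pointwise bounds.} Applying the product rule, $h$ is a sum of three terms: $\langle f_1,f_2\rangle\nabla_\phi k(x_1,x_2)$ plus $k(x_1,x_2)(\nabla_\phi f_i)^\top f_{3-i}$ for $i=1,2$. Through the reparameterization, the Jacobians satisfy $\nabla_\phi x_i=\nabla_\phi\mu(z_i;\phi)+\mathrm{diag}(\xi_i)\nabla_\phi\sigma(z_i;\phi)$, $\nabla_\phi s_p(x_i)=(\nabla_\phi x_i)\nabla^2\log p(x_i)$, and the Gaussian conditional score gives $\nabla_\phi s_{q_\phi(\cdot|z_i)}(x_i)=\xi_i\odot\nabla_\phi\sigma(z_i;\phi)/\sigma(z_i;\phi)^2$. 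Feeding Assumption~\ref{asp:kernel} ($|k|,\|\nabla_i k\|\leq B^2$), Assumption~\ref{asp:nn} ($\|\nabla_\phi\mu\|,\|\nabla_\phi\sigma\|\leq G(1+\|z\|)$ with $\sigma_j\geq 1/\sqrt L$), and Assumption~\ref{asp:target} ($\|\nabla^2\log p\|\leq L$) into these expressions delivers deterministic bounds of the form $\|\nabla_\phi x_i\|\lesssim G(1+\|z_i\|)(1+\|\xi_i\|_\infty)$, $\|f_i\|^2\lesssim \|s_p(x_i)\|^2+L\|\xi_i\|^2$, and $\|\nabla_\phi f_i\|_{\mathrm{op}}\lesssim LG(1+\|z_i\|)(1+\|\xi_i\|_\infty)$.

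\textbf{Aggregation and main obstacle.} After squaring $h$ and using Cauchy--Schwarz to decouple $\theta_1$ and $\theta_2$, the work reduces to polynomial and Gaussian moment computations. For these I would invoke Assumption~\ref{asp:moment} ($\E\|z\|^4\lesssim d_z^2$), the standard Gaussian moments $\E\|\xi\|^{2p}\lesssim d^p$ and $\E\|\xi\|_\infty^{2p}\lesssim (\log d)^p$, and Proposition~\ref{prop:score_bound} ($\E\|s_p(x)\|^4\leq C^2$). The $\nabla_\phi k$-term contributes $B^4 G^2 d_z\log d\,(C^2+L^2 d^2)$ via the pairing $\|\nabla_\phi x\|^2\|f_1\|^2\|f_2\|^2$, while the two $\nabla_\phi f$-terms contribute $B^4 G^2 d_z\log d\,(L^3 d+L^2 C)$ via $\|\nabla_\phi f\|_{\mathrm{op}}^2\|f\|^2$; the residual $L^2 C$ term is absorbed into $C^2$ using the elementary inequality $2L^2 C\leq L^4+C^2$ together with $L^4\lesssim C^2$, which is built into the very definition of $C$ in Proposition~\ref{prop:score_bound}. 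Summing these contributions yields the claimed $\Sigma_0$, and the bound on $\Var_N(\hat g)$ follows from the first step. The main obstacle is not any single inequality but the careful bookkeeping required to consistently separate $\|\xi\|_\infty$ (source of the $\log d$ factor) from $\|\xi\|$ (source of the $d$ factors), and to distinguish operator from Frobenius norms when chaining Jacobians through $\nabla^2\log p$ and $\xi/\sigma^2$, so that the dimensional dependence collapses exactly into $L^3 d+L^2 d^2+C^2$ with the common $B^4 G^2 d_z\log d$ prefactor.
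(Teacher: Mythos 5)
Your proposal is correct and takes essentially the same route as the paper: reduce both estimator variances to $\E\|h\|^2/N$ (the paper phrases this as ``sufficient to look at the two-sample second moment,'' whereas you make the reduction $\zeta_1\leq\zeta_2$ explicit via the law of total variance applied to the variance formulas in Proposition~3.3), then expand $h$ by the product rule into a $\nabla_\phi k$ part and a $\nabla_\phi\langle f_1,f_2\rangle$ part, bound $\|\nabla_\phi x\|$, $\|f\|$, $\|\nabla_\phi f\|$ pointwise using Assumptions~\ref{asp:kernel}--\ref{asp:moment}, and aggregate with the moment bounds $\E\|z\|^4\lesssim d_z^2$, $\E\|\xi\|^4\lesssim d^2$, $\E\|\xi\|_\infty^4\lesssim\log^2 d$, and $\E\|s_p(x)\|^4\le C^2$. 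Your three-term split of the product rule is just the paper's second term ($\cirtwo$) expanded by symmetry, your pointwise bounds and dimensional factors match the paper's term by term, and both arguments lean on the same implicit absorption of the mixed $L^2 C$ cross-term into $L^3d+L^2d^2+C^2$.
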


\begin{theorem}\label{thm:sgd}
    Under Assumption \ref{asp:kernel}-\ref{asp:moment}, iterates from SGD update $\phi_{t+1}=\phi_t-\eta \hat{g}_t$ with proper learning rate $\eta$ include an $\varepsilon$-stationary point $\hat{\phi}$ such that $\E [\|\nabla_\phi\mathcal{L}(\hat{\phi})\|]\leq\varepsilon$, if 
    \begin{equation}\label{eq:opt_bound}
        T\gtrsim \frac{L_\phi\mathcal{L}_0}{\varepsilon^2}\left(1+\frac{\Sigma_0}{N\varepsilon^2}\right),
    \end{equation}
    where $\mathcal{L}_0:=\mathcal{L}(\phi_0)-\inf_{\phi}\mathcal{L}$.
\end{theorem}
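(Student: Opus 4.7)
The plan is to apply the standard convergence analysis of stochastic gradient descent on nonconvex smooth objectives, since Theorems~\ref{thm:smooth} and \ref{thm:var} have already supplied the two key ingredients: the $L_\phi$-smoothness of $\mathcal{L}$ and an unbiased gradient estimator with variance bounded by $\Sigma_0/N$. No further machinery specific to SIVI is required at this stage; everything reduces to standard stochastic optimization, so the task is really one of bookkeeping.

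First, I would invoke the descent lemma implied by $L_\phi$-smoothness. Substituting $\phi_{t+1}-\phi_t=-\eta\hat{g}_t$ gives
$$\mathcal{L}(\phi_{t+1})\leq \mathcal{L}(\phi_t)-\eta\langle\nabla\mathcal{L}(\phi_t),\hat{g}_t\rangle+\tfrac{L_\phi\eta^2}{2}\|\hat{g}_t\|^2.$$
Taking the conditional expectation with respect to the natural filtration $\mathcal{F}_t$ and using unbiasedness $\E[\hat{g}_t\mid\mathcal{F}_t]=\nabla\mathcal{L}(\phi_t)$ together with $\E[\|\hat{g}_t\|^2\mid\mathcal{F}_t]\leq \|\nabla\mathcal{L}(\phi_t)\|^2+\Sigma_0/N$, I obtain
$$\E[\mathcal{L}(\phi_{t+1})\mid\mathcal{F}_t]\leq \mathcal{L}(\phi_t)-\eta\bigl(1-L_\phi\eta/2\bigr)\|\nabla\mathcal{L}(\phi_t)\|^2+\tfrac{L_\phi\eta^2\Sigma_0}{2N}.$$

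Next, I would choose $\eta\leq 1/L_\phi$ so that $1-L_\phi\eta/2\geq 1/2$, take a full expectation, sum over $t=0,\ldots,T-1$, and telescope, yielding
$$\min_{0\leq t\leq T-1}\E\|\nabla\mathcal{L}(\phi_t)\|^2 \leq \frac{1}{T}\sum_{t=0}^{T-1}\E\|\nabla\mathcal{L}(\phi_t)\|^2 \leq \frac{2\mathcal{L}_0}{\eta T}+\frac{L_\phi\eta\Sigma_0}{N}.$$
Defining $\hat{\phi}$ to be the iterate attaining this minimum and applying Jensen's inequality $(\E\|\nabla\mathcal{L}(\hat{\phi})\|)^2\leq\E\|\nabla\mathcal{L}(\hat{\phi})\|^2$ then provides the desired control on $\E\|\nabla\mathcal{L}(\hat{\phi})\|$ itself.

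Finally, I would optimize the learning rate by setting $\eta=\min\bigl\{1/L_\phi,\sqrt{2\mathcal{L}_0 N/(L_\phi\Sigma_0 T)}\bigr\}$, which balances the two terms and produces a bound of order $L_\phi\mathcal{L}_0/T+\sqrt{L_\phi\mathcal{L}_0\Sigma_0/(TN)}$. Forcing each summand to be $\lesssim \varepsilon^2$ yields the two separate iteration complexities $T\gtrsim L_\phi\mathcal{L}_0/\varepsilon^2$ and $T\gtrsim L_\phi\mathcal{L}_0\Sigma_0/(N\varepsilon^4)$, whose conjunction is precisely \eqref{eq:opt_bound}. There is no substantive obstacle beyond the smoothness and variance bounds already established; the only minor subtlety is that the theorem is phrased in terms of $\E\|\nabla\mathcal{L}(\hat{\phi})\|$ rather than its square, which is handled cleanly by the Jensen step above.
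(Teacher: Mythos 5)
Your proposal is correct and follows essentially the same route as the paper's proof: descent lemma from $L_\phi$-smoothness, telescoping after taking expectations, optimizing $\eta \asymp \min\{1/L_\phi, \sqrt{N\mathcal{L}_0/(L_\phi\Sigma_0 T)}\}$, and solving for $T$. You make the Jensen step (passing from $\E\|\nabla\mathcal{L}(\hat\phi)\|^2$ to $\E\|\nabla\mathcal{L}(\hat\phi)\|$) explicit, which the paper leaves implicit, but the argument is the same.
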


The complete proofs are deferred to Appendix \ref{app:subsec:proof}.
Theorem \ref{thm:smooth} and \ref{thm:var} imply that the optimization problem $\min_\phi \mathcal{L}(\phi)$ satisfies the standard assumptions in non-convex optimization literature.
Therefore, classic results of SGD can be applied to our scenario \citep{ghadimi2013stochastic}.

Theorem \ref{thm:sgd} implies that the sequence $\{\phi_t\}$ converge to a stationary point of the objective $\mathcal{L}(\phi)$.  
Since the training objective is squared KSD instead of traditional ELBO in BBVI, $\mathcal{L}(\phi)$ is nonconvex even for location-scale family, where the first term in the RHS of \cref{eq:opt_bound} corresponds to optimization error and the other term is from stochastic noise.
The convergence in terms of loss function $\mathcal{L}$ is generally inaccessible.
We hope future works can shed more insights into this issue.

\subsection{Statistical Guarantees}\label{subsec:theory_stats}

In this section, we present the statistical rate of loss function $\mathcal{L}$ regardless optimization issue.
We reformulate \cref{eq:phi_obj} as an empirical risk minimization (ERM) problem:
\begin{equation}
    \hat{\phi}:=\argmin_{\phi\in\Phi}\hat{\mathcal{L}}(\phi):=\frac{1}{n}\sum_{i=1}^n[k(x_{1i},x_{2i})\cdot(s_p(x_{1i})-s_{q_\phi(\cdot|z_{1i})}(x_{1i}))^\top(s_p(x_{2i})-s_{q_\phi(\cdot|z_{2i})}(x_{2i}))],
\end{equation}
where $x_{ri}=\mu(z_{ri};\phi)+\sigma(z_{ri};\phi)\odot \xi_{ri}$ and $\{z_{ri}\}\overset{i.i.d.}{\sim}q(\cdot),\{\xi_{ri}\}\overset{i.i.d.}{\sim}\mathcal{N}(0,I),\; r=1,2,\; i=1,\ldots,n$.
Note that $q_\phi(\cdot|z)=\mathcal{N}(\mu(z;\phi),\sigma^2(z;\phi))$ and thus $s_{q_\phi(x|z)}=-\frac{x-\mu(z;\phi)}{\sigma^2(z;\phi)}=-\frac{\xi}{\sigma(z;\phi)}$.

\begin{assumption}\label{asp:z_sub_gaussian}
    The mixing distribution $q(z)$ is sub-Gaussian, i.e., $\mathbb{P}(\|z\|_\infty\geq R)\leq 2d_z\exp(-CR^2)$ for some constant $C$.
\end{assumption}

\begin{assumption}\label{asp:nn_2}
    For any $\phi\in\Phi$ and $z\in\R^{d_z}$, it holds that
    $\|\mu(z;\phi)\|\leq J(1+\|z\|),\|\sigma(z;\phi)\|\leq J(1+\|z\|)$.
\end{assumption}
Assumption~\ref{asp:z_sub_gaussian} is mild, as the mixing distribution $q(z)$ is typically Gaussian in the SIVI literature.
Assumption~\ref{asp:nn_2} is also reasonable and can be satisfied by any feedforward neural network with Lipschitz activations.
We denote by $\mathcal{N}(\Phi,\|\cdot\|,\epsilon)$ the $\epsilon$-covering number of the function class $\Phi$ under metric $\|\cdot\|$. Now we can state the following sample complexity bound.
\begin{theorem}\label{thm: sample_complexity}
    Under \cref{asp:kernel}-\ref{asp:nn_2}, with probability no less than $1-\delta$,
    \begin{equation}
        \mathcal{L}(\hat{\phi})-\min_{\phi\in\Phi}\mathcal{L}(\phi)\lesssim B^2L^2J^2\left[ \log(\frac{nd}{\delta})\sqrt{\frac{\log\mathcal{N}(\Phi,\|\cdot\|,\frac{1}{n})+\log{\frac{1}{\delta}}}{n}}+\frac{JGd^{\frac{1}{2}}\log^\frac{5}{2}(\frac{nd}{\delta})}{n}\right],
    \end{equation}
    where $\lesssim$ hides $\poly(\frac{1}{C})$.
\end{theorem}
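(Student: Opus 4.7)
The plan is to follow the classical ERM template, bounding the excess risk by twice the uniform deviation $\sup_{\phi\in\Phi}|\hat{\mathcal{L}}(\phi)-\mathcal{L}(\phi)|$, and to control this uniform deviation through a truncation--covering--concentration chain. Each summand of $\hat{\mathcal{L}}$ is the random variable
\begin{equation*}
h_\phi(z_1,\xi_1,z_2,\xi_2) := k(x_1,x_2)\bigl\langle s_p(x_1)-s_{q_\phi(\cdot|z_1)}(x_1),\; s_p(x_2)-s_{q_\phi(\cdot|z_2)}(x_2)\bigr\rangle,
\end{equation*}
with $x_r=\mu(z_r;\phi)+\sigma(z_r;\phi)\odot\xi_r$. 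Because $\xi\sim\mathcal{N}(0,I)$ and $z$ is only sub-Gaussian, $h_\phi$ is unbounded, so a direct Hoeffding argument is unavailable; the form of the bound, with $\log(nd/\delta)$ outside and $\log^{5/2}(nd/\delta)$ in the residual term, signals high-probability truncation followed by a covering argument.

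First I would introduce the truncation event $\mathcal{E}_R=\{\|z_{ri}\|_\infty\vee\|\xi_{ri}\|_\infty\le R\text{ for all }r,i\}$ with $R=c\sqrt{\log(nd/\delta)}$. By Assumption \ref{asp:z_sub_gaussian} and standard Gaussian tails applied to the $O(dn)$ relevant coordinates, $\mathbb{P}(\mathcal{E}_R^c)\le\delta/3$. On $\mathcal{E}_R$, Assumption \ref{asp:nn_2} yields $\|x_{ri}\|\lesssim J R\sqrt{d}$, the lower bound on $\sigma$ from Assumption \ref{asp:nn} gives $\|s_{q_\phi(\cdot|z_{ri})}(x_{ri})\|=\|\xi_{ri}\oslash\sigma(z_{ri};\phi)\|\lesssim \sqrt{L}R\sqrt{d}$, and the Hessian bound from Assumption \ref{asp:target} combined with a reference point gives $\|s_p(x_{ri})\|\lesssim LJR\sqrt{d}+\|s_p(0)\|$. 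Together with Assumption \ref{asp:kernel}, these yield a uniform bound $|h_\phi|\le M$ on $\mathcal{E}_R$ with $M\asymp B^2 L^2 J^2 \log(nd/\delta)$, after absorbing $d$-dependent polynomial factors into the log by the choice of $R$.

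Next I would discretize $\Phi$ by a $(1/n)$-cover of size $\mathcal{N}(\Phi,\|\cdot\|,1/n)$. Using Assumption \ref{asp:nn} (Jacobians of $\mu,\sigma$ in $\phi$ bounded by $G(1+\|z\|)$) and the chain rule through the reparameterization, I would show that $\phi\mapsto h_\phi$ is Lipschitz on $\mathcal{E}_R$ with constant $L_{\mathrm{Lip}}\lesssim B^2 L^2 J^3 G d^{1/2}\log^{5/2}(nd/\delta)$; the extra $\log^{5/2}$ factor arises because differentiating $\xi\oslash\sigma(z;\phi)$ produces a $\sigma^{-2}$ factor (controlled via $\sigma\ge 1/\sqrt{L}$) multiplied by the polylogarithmic truncation radius. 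Applying Hoeffding's inequality for each fixed cover point (the $n$ pair samples are i.i.d.\ and bounded by $M$), union-bounding over the cover, and adding the discretization error $L_{\mathrm{Lip}}/n$ gives, on $\mathcal{E}_R$,
\begin{equation*}
\sup_{\phi\in\Phi}|\hat{\mathcal{L}}(\phi)-\mathcal{L}(\phi)|\ \lesssim\ M\sqrt{\frac{\log\mathcal{N}(\Phi,\|\cdot\|,1/n)+\log(1/\delta)}{n}}\ +\ \frac{L_{\mathrm{Lip}}}{n}.
\end{equation*}
A separate tail computation bounds $\mathbb{E}[|h_\phi|\mathbf{1}_{\mathcal{E}_R^c}]$ by $O(1/n)$ via Assumption \ref{asp:z_sub_gaussian} and Gaussian moment estimates, so the contribution of $\mathcal{E}_R^c$ is absorbed into the second term. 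Substituting the values of $M$ and $L_{\mathrm{Lip}}$ produces the stated bound: both terms share the prefactor $B^2 L^2 J^2$, the first contains only the log-covering rate, and the second contains the extra $JGd^{1/2}$ factor from the Lipschitz constant.

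The main obstacle will be the bookkeeping in the Lipschitz step: one must track how $\nabla_\phi$ propagates through the reparameterization and, in particular, through $s_{q_\phi(\cdot|z)}(x)=-\xi\oslash\sigma(z;\phi)$, where $\sigma^{-2}$-type terms explain the $L^2$ (rather than $L$) in the prefactor. A secondary subtlety is calibrating $R$: it must be large enough that $\mathbb{P}(\mathcal{E}_R^c)\lesssim\delta/n$ (so the $\mathcal{E}_R^c$ remainder is absorbed by the generalization term), yet small enough that the product of four score/latent terms only contributes $\log^2(nd/\delta)$ inside $M$ and $\log^{5/2}(nd/\delta)$ inside $L_{\mathrm{Lip}}$, matching the exact polylogarithmic exponents in the target bound.
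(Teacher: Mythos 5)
Your proposal matches the paper's proof in both strategy and the key quantities: a truncation on $\|\zeta\|_\infty$ with radius $R\asymp\sqrt{\log(nd/\delta)}$, a uniform bound $M\asymp B^2L^2J^2\log(nd/\delta)$ on the truncated loss, a Lipschitz constant $\asymp B^2L^2J^3Gd^{1/2}\log^{5/2}(nd/\delta)$ in $\phi$, and a separate tail estimate for the truncation bias so that the ERM minimizer of the truncated and untruncated objectives coincide on the good event. The only cosmetic difference is that you use an explicit $(1/n)$-net with Hoeffding and a union bound, whereas the paper bounds the Rademacher complexity of the truncated class via Dudley's entropy integral and then applies a uniform concentration theorem; the two are equivalent here since the Dudley integral is itself controlled by a single covering-number term at scale $1/n$.
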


\begin{proofsketch}
Let $\zeta_i=(z_{1i},\xi_{1i},z_{2i},\xi_{2i})$.
For any $\phi\in\Phi$, define the individual loss function as $\ell(\zeta;\phi)=k(x_1,x_2)\cdot(s_p(x_1)+\frac{\xi_1}{\sigma(z_1;\phi)})^\top(s_p(x_2)+\frac{\xi_2}{\sigma(z_2;\phi)})$.
Note that $\ell$ is unbounded and not globally Lipschitz and thus the standard Rademacher complexity results \citep{van2014probability} cannot directly apply.
We therefore introduce the truncation event $\mathcal{E}_R:=\{\|\zeta_i\|_\infty\leq R,\forall i\}$ where $R$ will be specified. Then under the sub-Gaussian assumption on $q(z)$, the probability of $\mathcal{E}_R$ can be bounded from below. Conditioned on $\mathcal{E}_R$, all samples lie in a compact set on which $\ell$ becomes uniformly Lipschitz, allowing standard Rademacher complexity results to establish the desired generalization bound. The truncation radius $R$ is chosen on the order of $\sqrt{\log(nd/\delta)}$, ensuring that the truncation error is dominated by the statistical error.

\end{proofsketch}

The complete proofs are provided in Appendix~\ref{app:subsec:proof_stats}.
Theorem~\ref{thm: sample_complexity} establishes a generalization error bound of $\Tilde{\mathcal{O}}(\sqrt{\frac{\log\mathcal{N}}{n}})$ for learning the optimal $\phi$ via ERM, where the rate depends on the covering number and the smoothness of the parameterized family.
For the local-scale family, which has been widely studied in the literature of BBVI due to its simplicity \citep{kim2023convergence,domke2023provable}, the generalization bound can be refined to $\tilde{\mathcal{O}}(\sqrt{\frac{d}{n}})$.

\begin{corollary}
    If $\Phi$ is location-scale family where $\mu(z;\phi)\equiv \phi_1\in\mathcal{B}_2(R),\sigma(z;\phi)\equiv\phi_2\in\mathcal{B}_2(R)$, then we have $G=C=1,J=R$ and $\log\mathcal{N}(\Phi,\|\cdot\|,\frac{1}{n})\lesssim d\log(Rn)$, therefore
    \begin{equation}
        \mathcal{L}(\hat{\phi})-\min_{\phi\in\Phi}\mathcal{L}(\phi)\lesssim B^2L^2R^2\left[ \log(\frac{nd}{\delta})\sqrt{\frac{d\log(Rn)+\log{\frac{1}{\delta}}}{n}}+\frac{Rd^{\frac{1}{2}}\log^\frac{5}{2}(\frac{nd}{\delta})}{n}\right].
    \end{equation}
\end{corollary}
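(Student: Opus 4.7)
The plan is to specialize the general statement of Theorem~\ref{thm: sample_complexity} to the location-scale family by verifying how each of the problem-dependent constants simplifies and then substituting into the master bound. First I would check Assumption~\ref{asp:nn}: since $\mu(z;\phi)\equiv\phi_1$ depends only on $\phi$, its Jacobian $\nabla_\phi\mu$ is the coordinate projection onto the first block with operator norm $1$, while $\nabla^2_\phi\mu\equiv 0$; the same reasoning applies to $\sigma(z;\phi)\equiv\phi_2$. Hence both $\|\nabla_\phi\mu\|$ and $\|\nabla^2_\phi\mu\|$ are uniformly bounded by $1\le G(1+\|z\|)$, giving $G=1$. For Assumption~\ref{asp:nn_2}, $\|\mu(z;\phi)\|=\|\phi_1\|\le R\le R(1+\|z\|)$ and similarly for $\sigma$, which yields $J=R$. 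The constant $C$ from Proposition~\ref{prop:score_bound} is handled by choosing the reference point $x^{\ast}=\phi_1$: using $x=\phi_1+\phi_2\odot\xi$ with $\xi\sim\mathcal{N}(0,I)$, a direct computation gives $\E_{q_\phi}\|x\|^4\lesssim R^4$, and combining this with the Lipschitz-score estimate $\|s_p(\phi_1)\|\le\|s_p(0)\|+LR$ shows that $C^2$ is polynomial in $R$ and $L$, so it is absorbed into the $\lesssim$ notation and we write $C=1$ by convention.

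Next I would estimate the covering number. Since $\Phi=\mathcal{B}_2(R)\times\mathcal{B}_2(R)\subset\R^{2d}$ lies inside a Euclidean ball of radius $\sqrt{2}\,R$, the standard volumetric argument yields
\begin{equation*}
\log\mathcal{N}(\Phi,\|\cdot\|,\epsilon)\le 2d\log\!\left(3\sqrt{2}\,R/\epsilon\right),
\end{equation*}
and setting $\epsilon=1/n$ produces $\log\mathcal{N}(\Phi,\|\cdot\|,1/n)\lesssim d\log(Rn)$.

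Finally I would substitute $G=1$, $J=R$ and the covering-number bound directly into the conclusion of Theorem~\ref{thm: sample_complexity}: the prefactor $B^2L^2J^2$ becomes $B^2L^2R^2$, the square-root term becomes $\sqrt{(d\log(Rn)+\log(1/\delta))/n}$, and the $JGd^{1/2}\log^{5/2}(nd/\delta)/n$ term becomes $Rd^{1/2}\log^{5/2}(nd/\delta)/n$, reproducing the claimed inequality. I expect the only subtlety beyond mechanical substitution is the bookkeeping around the score-moment constant: one must verify that $\E_{q_\phi}\|s_p(x)\|^4$ scales at most polynomially in $R$, $L$, $d$, since otherwise the shorthand $C=1$ would hide a factor that dominates the rate. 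Once that polynomial scaling is in hand, the remainder is a direct plug-in into the master theorem.
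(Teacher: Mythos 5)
Your specialization of $G$, $J$, and the covering number, and the final plug-in into the master bound, are all correct and match the paper's reasoning. However, you have misidentified the constant $C$. The $C$ that the corollary sets to $1$ is not the score-moment constant from Proposition~\ref{prop:score_bound}; it is the sub-Gaussian constant from Assumption~\ref{asp:z_sub_gaussian}, i.e.\ the $C$ in $\mathbb{P}(\|z\|_\infty\ge R)\le 2d_z\exp(-CR^2)$, which Theorem~\ref{thm: sample_complexity} explicitly hides inside $\lesssim$ as $\mathrm{poly}(1/C)$. In the location-scale case $\mu$ and $\sigma$ do not depend on $z$, so the mixing variable plays no role and the only source of randomness is $\xi\sim\mathcal N(0,I)$, whose coordinates are sub-Gaussian with constant of order one; that is the correct reason $C$ can be taken as $1$.

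Consequently, the concern you flag at the end --- that one must verify $\E_{q_\phi}\|s_p(x)\|^4$ scales polynomially --- is aimed at the wrong target. Proposition~\ref{prop:score_bound} enters the optimization analysis (Theorems~\ref{thm:smooth} and~\ref{thm:var}) but never appears in the proof of Theorem~\ref{thm: sample_complexity}. There the score is controlled \emph{deterministically} on the truncation event $\{\|\zeta_i\|_\infty\le R\}$ using Assumption~\ref{asp:target}, producing the envelope $B_0^2\lesssim LdR^2+L^2(1+J^2R^2)$, and the expected residual outside the truncation event decays as $\exp(-CR^2)$ --- again only through the Assumption~\ref{asp:z_sub_gaussian} constant. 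So no fourth-moment bound on the score is needed; the remaining steps of your proposal are otherwise a faithful reproduction of the paper's derivation.
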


We notice that similar complexity results are studied in \citet{Hu18}, where the authors consider a different form of empirical KSD:
\begin{equation}
    \frac{1}{n(n-1)}\sum_{i\neq j}[k(x_i,x_j)s_p(x_i)^\top s_p(x_j)+s_p(x_i)^\top\nabla_1k(x_i,x_j)+\nabla_2k(x_i,x_j)^\top s_p(x_i)+\Tr(\nabla_{12}k(x_i,x_j))],
\end{equation}
where $x_i=G_\phi(z_i)$ for some generator $G_\phi$. However, their analysis overlooks the unboundedness of the individual loss and applies concentration inequalities requiring boundedness, which leaves a gap in the formal justification.

\section{Related Works}

To address the limitation of standard VI, implicit VI constructs a flexible variational family through non-invertible mappings parameterized by neural networks.
However, the main issue therein is density ratio estimation, which is difficult in high dimensions \citep{sugiyama2012density}.
Besides SIVI, there are a number of recent advancements in this field.
\citet{molchanov2019doubly} have further extended SIVI in the context of generative models.
\citet{sobolev2019importance} introduce a new surrogate of ELBO through importance sampling.
UIVI \citep{titsias2019unbiased} runs inner-loop MCMC to get an unbiased estimation of the gradient of ELBO.
KIVI \citep{Shi18} constrains the density ratio estimation within an RKHS.
LIVI \citep{uppal2023implicit} approximates the intractable entropy with a Gaussian distribution by linearizing the generator.

Besides ELBO-based training of VI, many works consider to minimize Fisher divergence or its variants \citep{yu2023semi, ranganath2016operator, grathwohl2020learning, cheng2023particle, zhang2024semi}.
Leveraging the minimax formulation of Fisher divergence, these methods try to alternatively optimize the variational distribution and an adversarial function in certain function classes, typically neural networks.
Although it does not rely on surrogates of ELBO, the lower-level optimization can not be done accurately in general, thus inducing unavoidable bias for the training of variational distribution.
More importantly, it involves expensive extra computation during training.

To fix this computation issue, another closely related work \citep{korba2021kernel} proposes a particle-based VI method, KSDD, which follows the KSD flow to minimize KSD.
It utilizes the fact that the Wasserstein gradient of KSD can be directly estimated by the empirical particle distribution.
\citet{korba2021kernel} also discusses the theoretical properties of KSDD.
However, our formulation and the ``denoising'' derivation are not naive extensions, since KSIVI only requires the function value of kernel, while KSDD relies on twice derivatives of the kernel.
We believe this is a crucial point that distinguishes KSIVI, since less smooth kernels can be leveraged, which may have stronger power as a distance, e.g. Riesz kernel \citep{Altekrger2023NeuralWG}.
More importantly, KSIVI is a standard VI method that once trained can generate samples instantaneously, while KSDD is a particle-based method requiring computationally expensive simulations at test time.

As for the convergence guarantee of BBVI, previous works mainly focus on ELBO objectives and location-scale families.
In particular, \citet{domke2019provable} proves the bound of the gradient variance and later \citet{domke2020provable} shows the smoothness guarantee of the loss function, both of which are under various structural assumptions.
The first self-contained analysis of the convergence of location-scale family has been recently proposed by \citet{kim2023convergence, domke2023provable}, where some important variants like STL gradient estimator and proximal gradient descent are also discussed.
Still, there is no theoretical convergence guarantee of (semi) implicit VI to the best of our knowledge.

\section{Experiments}
In this section, we compare KSIVI to the ELBO-based method SIVI and the score-based method SIVI-SM on toy examples and real-data problems. 
For the construction of the semi-implicit variational family in all these methods, we choose a standard Gaussian mixing distribution and diagonal Gaussian conditional layer (see Section \ref{sec:practical-implementation}) whose standard deviation $\sigma(z,\phi)=\phi_\sigma\in\mathbb{R}^d$ does not depend on $z$.
Following the approach by \citet{liu2016stein}, we dynamically set the kernel width, to the median value of variational samples' spacing.
In all experiments, we use the Gaussian RBF kernel in KSIVI, following \citet{liu2016stein, liu2016kernelized}. 
Throughout this section, we use the vanilla gradient estimator for KSIVI, and results of the U-statistic gradient estimator can be found in Appendix \ref{appendix:AddtionalExp}.
All the experiments are implemented in PyTorch \citep{pytorch2019}. 
More implementation details can be found in Appendix \ref{appendix:AddtionalExp} and \url{https://github.com/longinYu/KSIVI}.

\begin{table}[!t]
   \caption{Three 2-D target distributions implemented in the 2-D toy experiments.}
   \label{table: ToyDensity}
   \begin{center}
   \begin{tabular}{cll}
   \toprule
   \multicolumn{1}{c}{Name}  & \multicolumn{2}{c}{ Density}\\
   \midrule
   \textsc{Banana} & $x = \left(v_1, v_1^2 + v_2 + 1\right)^\top$, $v \sim \mathcal{N}(0, \Sigma)$ & $\Sigma = \bigl[\begin{smallmatrix}1 & 0.9\\0.9 & 1\end{smallmatrix}\bigl]$\\[2ex] 
   \textsc{Multimodal}    & $x\sim\frac{1}{2}\mathcal{N}(x|\mu_1, I)+\frac{1}{2}\mathcal{N}(x|\mu_2, I)$ & $\mu_1 = [-2, 0]^\top, \mu_2 = [2, 0]^\top$\\[2ex]
   \textsc{X-shaped}      & $ x\sim\frac{1}{2}\mathcal{N}(x|0, \Sigma_1)+\frac{1}{2}\mathcal{N}(x|0, \Sigma_2)$ & $\Sigma_1 = \bigl[\begin{smallmatrix}2 & 1.8\\1.8 & 2\end{smallmatrix}\bigl], \Sigma_2 = \bigl[\begin{smallmatrix} 2 & -1.8\\-1.8 & 2\end{smallmatrix}\bigl]$\\[1ex]
   \bottomrule
   \end{tabular}
   \end{center}
\end{table}

\begin{figure}[!t]
    \centering
    \includegraphics[width=\linewidth]{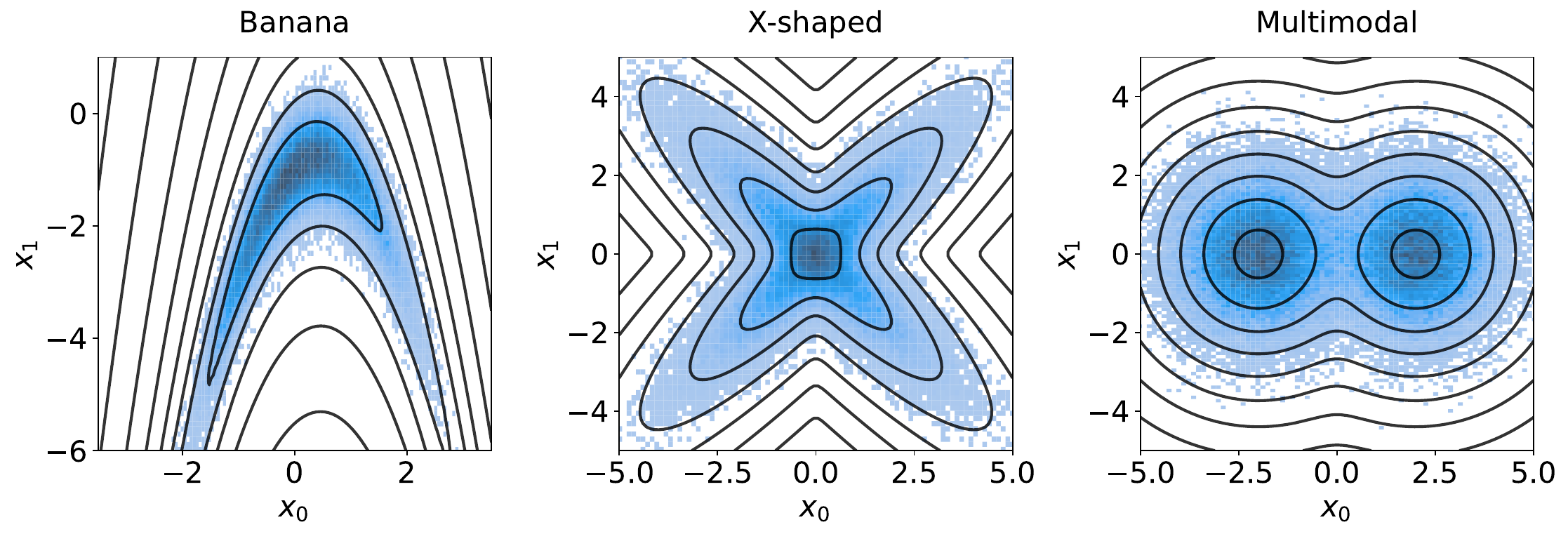}
    \caption{Performances of KSIVI on toy examples. The histplots in blue represent the estimated densities using 100,000 samples generated from KSIVI's variational approximation. The black lines depict the contour of the target distributions.
    }
    \label{figure: samples-toy}
\end{figure}

\begin{figure}[t]
    \centering
    \includegraphics[width=\linewidth]{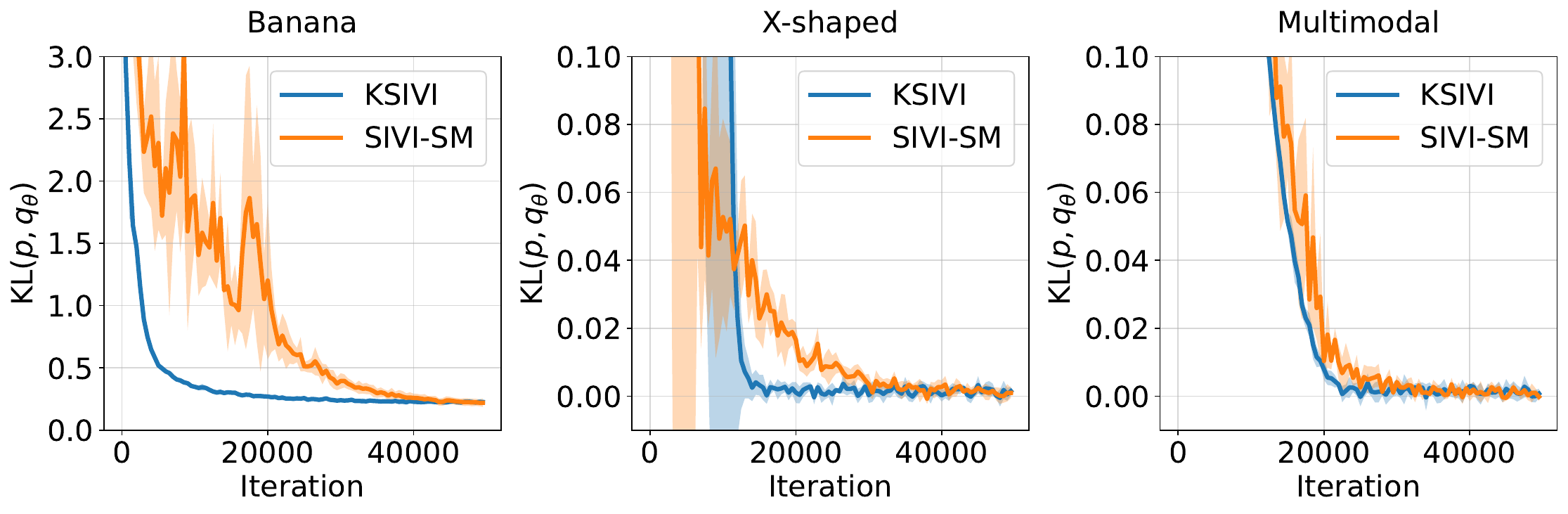}
    \caption{Convergence of KL divergence during training obtained by different methods on toy examples. The KL divergences are estimated using the Python ITE module \citep{ITE2014} with 100,000 samples.
    The results are averaged over 5 independent computations with the standard deviation as the shaded region.
    }
    \label{figure: kl-toy}
\end{figure}

\subsection{Toy Examples}\label{sec:toy-examples}
We first conduct toy experiments on approximating three two-dimensional distributions: \textsc{Banana}, \textsc{Multimodal}, and \textsc{X-shaped}, whose probability density functions are in Table~\ref{table: ToyDensity}. 
We consider a temperature annealing strategy \citep{NF} on \textsc{Multimodal} to facilitate exploration.
The results are collected after 50,000 iterations with a learning rate of 0.001 for all the methods. 

Figure~\ref{figure: samples-toy} shows approximation performances of KSIVI on the toy examples.
We see that KSIVI provides favorable variational approximations for the target distributions. 
Figure~\ref{figure: kl-toy} displays the KL divergence as a function of the number of iterations obtained by KSIVI and SIVI-SM.
Compared to SIVI-SM, KSIVI is more stable in training, partly because it does not require additional lower-level optimization.
KSIVI also tends to converge faster than SIVI-SM, although this advantage becomes less evident as the target distribution gets more complicated.
Figure \ref{figure:mmd_toy} in Appendix \ref{appendix:Toy} illustrates the convergence of maximum mean discrepancy (MMD) during training obtained by KSIVI and SIVI-SM, which mostly aligns with the behavior of KL divergence. 
We also conduct experiments with the IMQ kernel \citep{gorham2017measuring}, and do not notice a significant difference w.r.t. the Gaussian RBF kernel, which is consistent with the findings in \citet{korba2021kernel}.

\subsection{Bayesian Logistic Regression}

Our second experiment is on the Bayesian logistic regression problem with the same experimental setting in \citet{yin2018semi}.
Given the explanatory variable $x_i\in \mathbb{R}^{d}$ and the observed binary response variable $y_i\in\{0,1\}$, the log-likelihood function takes the form
\begin{equation*}
\log p(y_i|x_i', \beta) = y_i \beta^\top \bar{x}_i - \log(1+\exp(\beta^\top \bar{x}_i)),
\end{equation*}
where $\bar{x}_i=\bigl[\begin{smallmatrix}1\\x_i\end{smallmatrix}\bigl] \in \mathbb{R}^{d+1}$ is the covariate and $\beta \in \mathbb{R}^{d+1}$ is the variable we want to infer. 
The prior distribution of $\beta$ is set to $p(\beta)=\mathcal{N}(0,\alpha^{-1} I)$ where the inverse variance $\alpha = 0.01$. 
We consider the \textsc{waveform}\footnote{https://archive.ics.uci.edu/ml/machine-learning-databases/waveform} dataset of $\{x_i,y_i\}_{i=1}^{N}$ where the dimension of the explanatory variable $x_i$ is $d=21$.
Then different SIVI variants with the same architecture of semi-implicit variational family are applied to infer the posterior distribution $p(\beta|\{x_i,y_i\}_{i=1}^{N})$.
The learning rate for variational parameters $\phi$ is chosen as 0.001 and the batch size of particles is chosen as 100 during the training.
For all the SIVI variants, the results are collected after 40,000 parameter updates. 
The ground truth consisting of 1000 samples is established by simulating parallel stochastic gradient Langevin dynamics (SGLD) \citep{Welling2011} with 400,000 iterations, 1000 independent particles, and a small step size of 0.0001.
Additionally, we assessed the performance of parallel Metropolis-adjusted Langevin algorithm (MALA) \citep{mala}. The calculated KL divergence between samples from MALA and SGLD is 0.0289, suggesting their proximity.

\begin{figure*}[!t]
   \centering
   \subfigure{
   \begin{minipage}[t]{0.3\linewidth}
   \centering
   \includegraphics[width=1\textwidth]{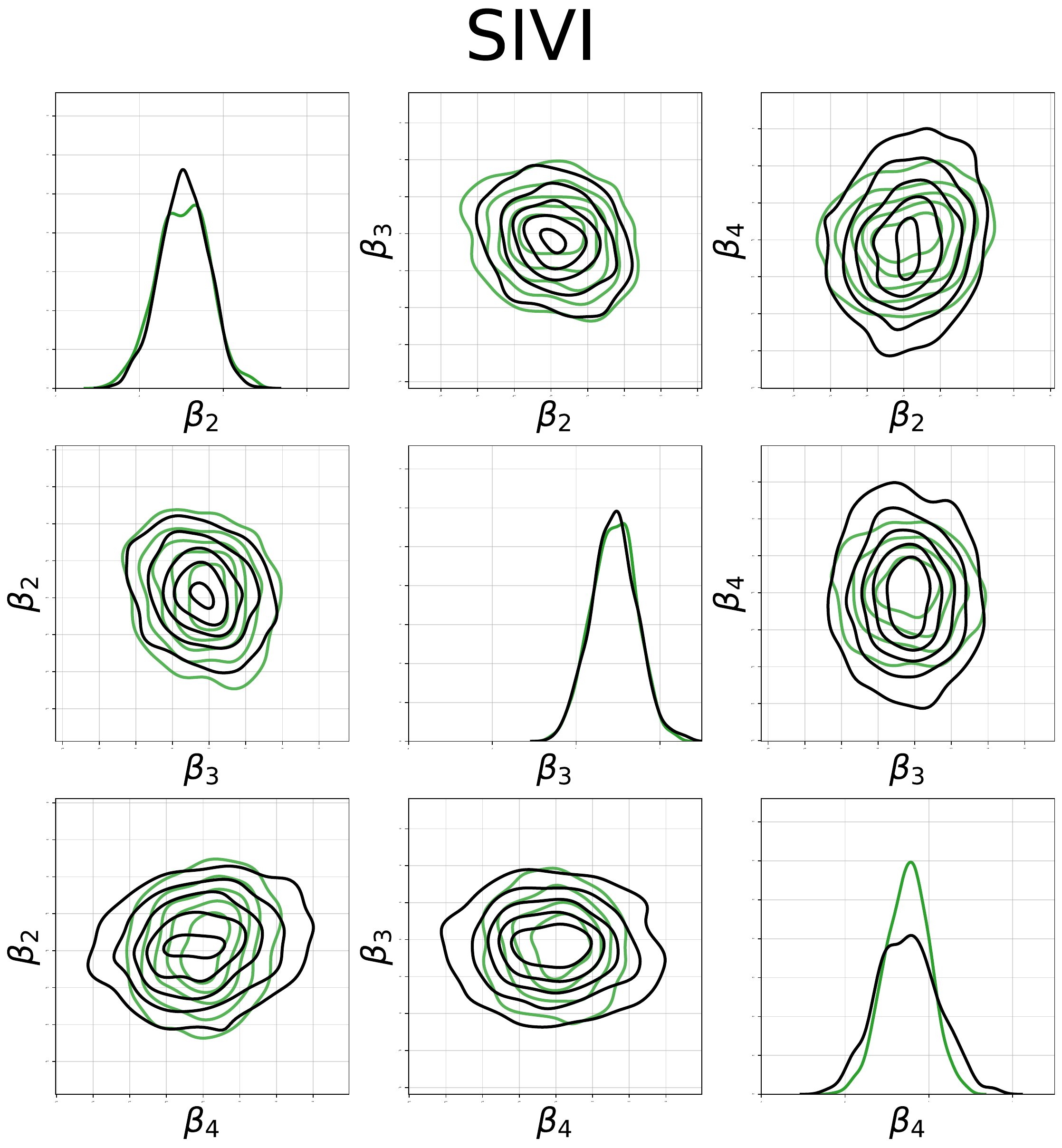}
   \end{minipage}
   }
   \hfill
   \subfigure{
   \begin{minipage}[t]{0.3\linewidth}
   \centering
   \includegraphics[width=1\textwidth]{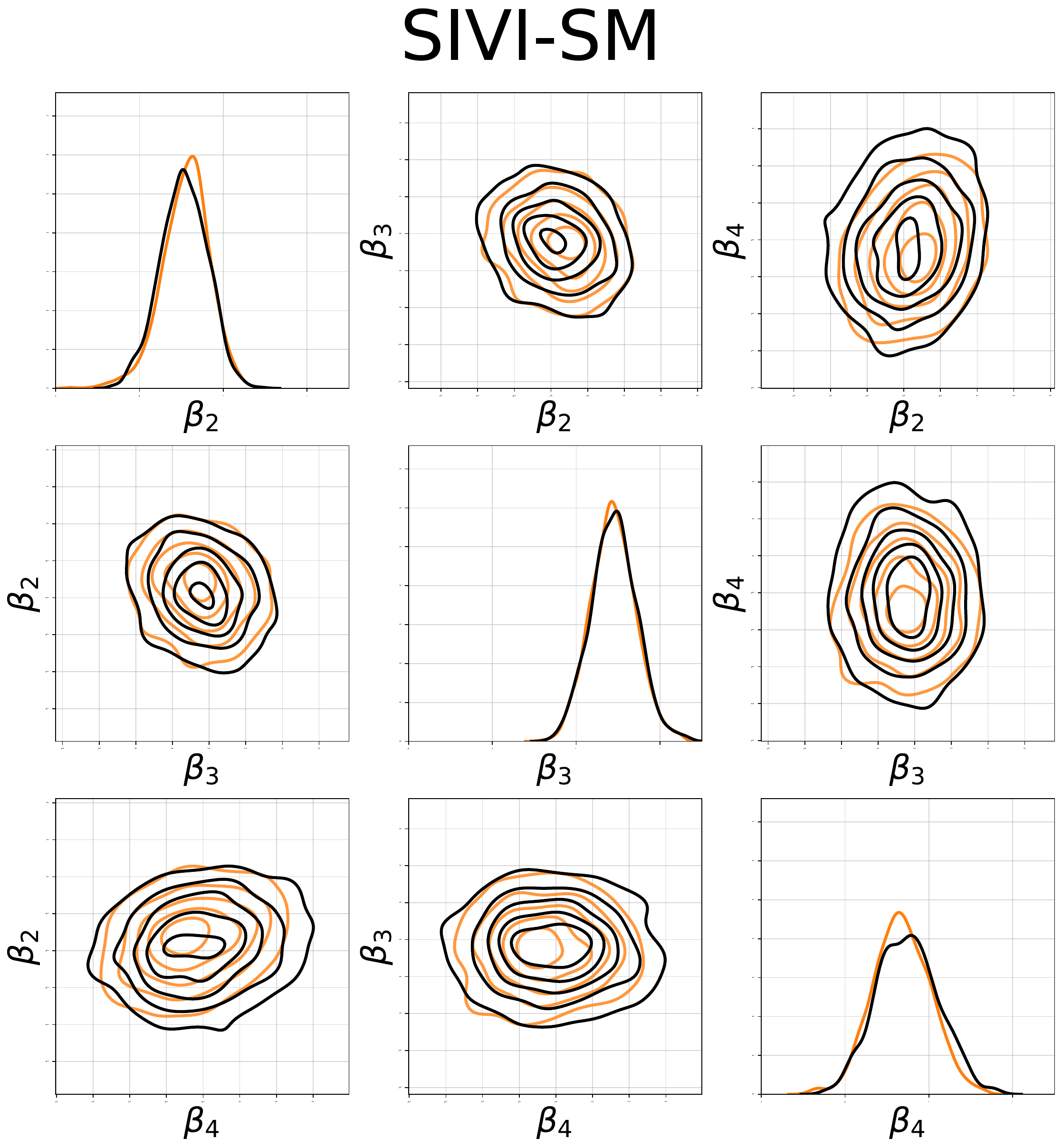}
   \end{minipage}
   }
   \hfill
   \subfigure{
   \begin{minipage}[t]{0.3\linewidth}
   \centering
   \includegraphics[width=1\textwidth]{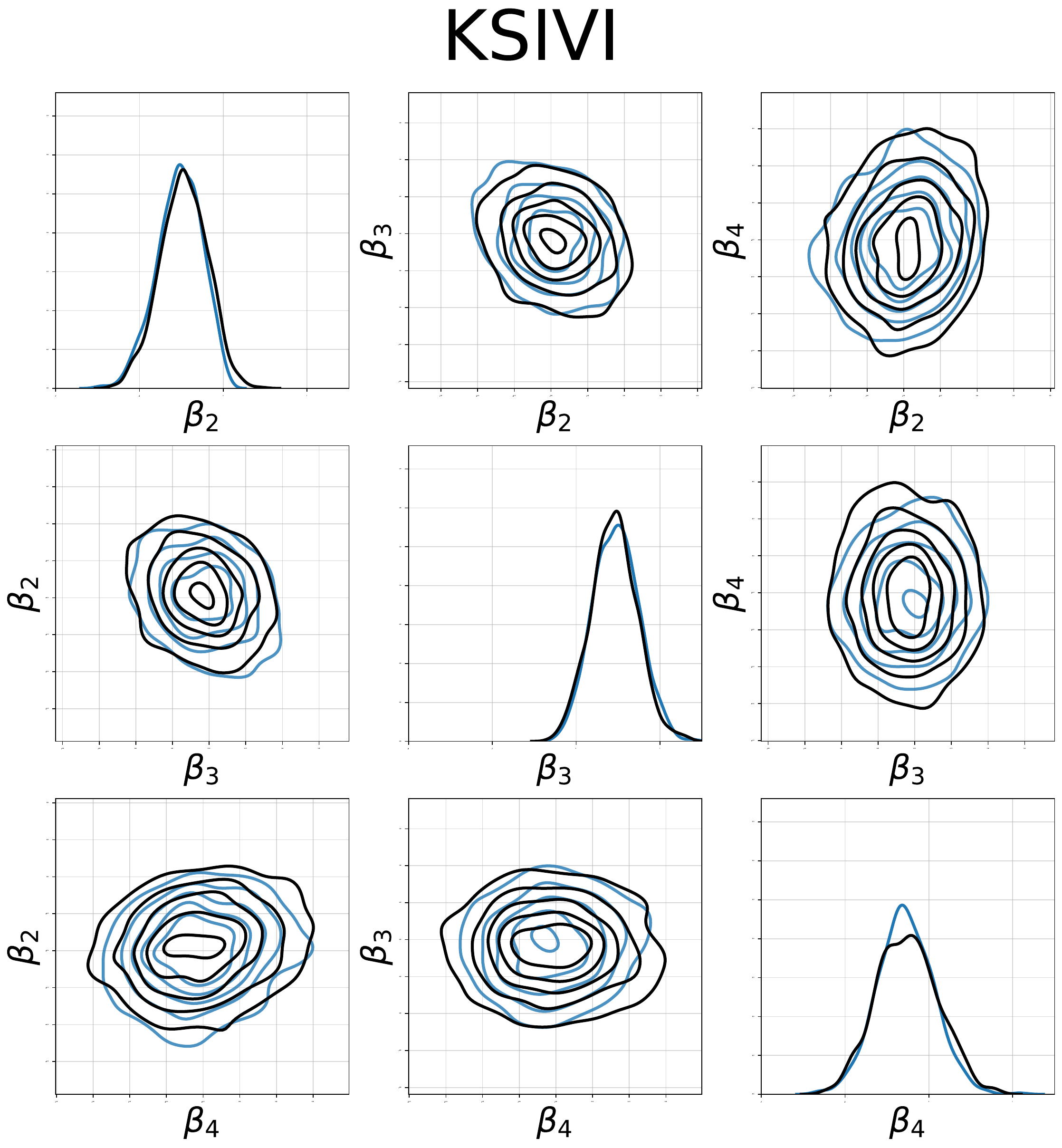}
   \end{minipage}
   }
   \centering
   \captionof{figure}{Marginal and pairwise variational approximations of $\beta_2,\beta_3,\beta_4$ on the Bayesian logistic regression task.
   The contours of the pairwise posterior approximation produced by SIVI-SM (in orange), SIVI (in green), and KSIVI (in blue) are graphed in comparison to the ground truth (in black). The sample size is 1000.
   }
   \label{figure:LRwaveform_density_dim_2_4}
\end{figure*}

\begin{figure}[!t]
    \centering
    \includegraphics[width=\linewidth]{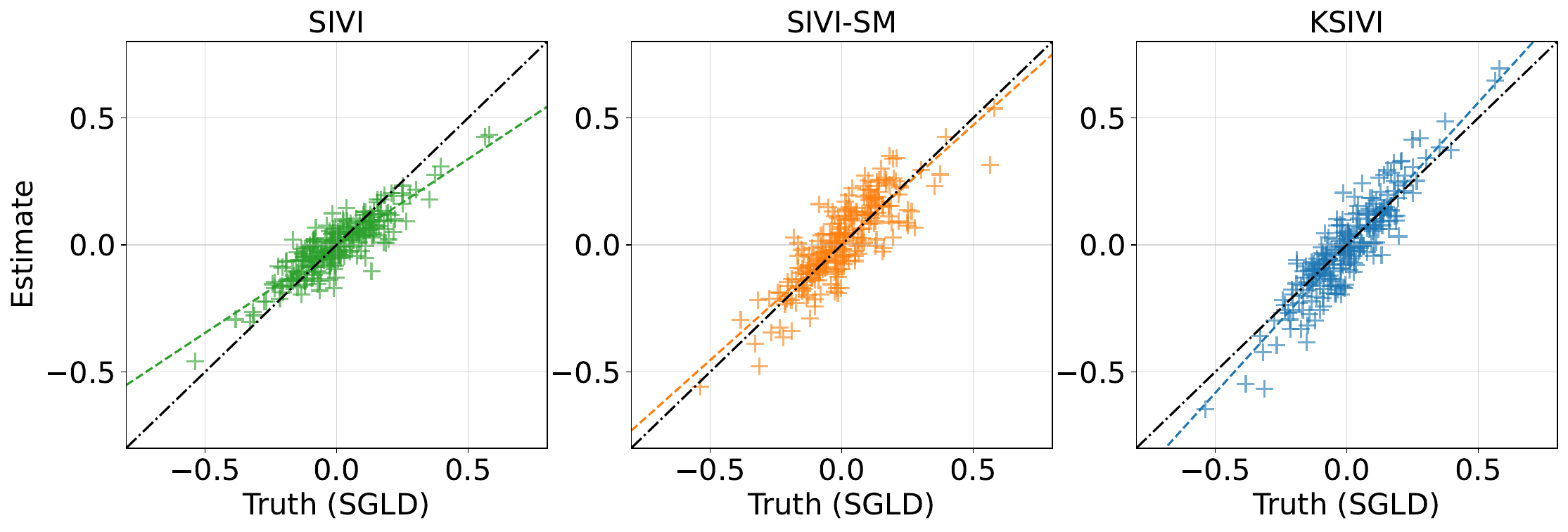}
    \caption{Comparison between the estimated pairwise correlation coefficients and the ground truth on the Bayesian logistic regression task.
    Each scatter represents the estimated correlation coefficient ($y$-axis) and the ground truth correlation coefficient ($x$-axis) of some pair $(\beta_i,\beta_j)$.
    The lines in the same color as the scatters represent the regression lines.
    The sample size is 1000.
    }
    \label{figure:LRwaveform_corr}
\end{figure}

Figure~\ref{figure:LRwaveform_density_dim_2_4} demonstrates the marginal and pairwise posterior approximations for $\beta_2,\beta_3,\beta_4$ obtained by the aforementioned SIVI variants in comparison to the ground truth. 
We see that KSIVI agrees well with the ground truth and performs comparably to SIVI-SM.
Notice that SIVI-SM also requires tuning additional hyper-parameters (e.g., the learning rate of $f_\psi(x)$ and the number of lower-level gradient steps), which may be challenging as commonly observed in minimax optimization \citep{GAN,AVB}.
In contrast, SIVI slightly underestimates the variance of $\beta_4$ in both marginal and pairwise joint distributions, as shown by the left plot in Figure~\ref{figure:LRwaveform_density_dim_2_4}.
The results for more components of $\beta$ can be found in Figure~\ref{figure:LRwaveform_density_dim_1_6} in Appendix~\ref{appendix:blr}.
Quantitatively, we compare KSIVI and SIVI-SM across a range of step sizes under the sliced Wasserstein distance in Table~\ref{tab:LRwaveform_wd} in Appendix~\ref{appendix:blr}.
The results indicate that KSIVI demonstrates more consistent convergence behavior across different step sizes.
Additionally, we also investigate the pairwise correlation coefficients of $\beta$ defined as
\[
\bm{\rho} = \left\{\rho_{i,j}:= \frac{\mathrm{cov}(\beta_i,\beta_j)}{\sqrt{\mathrm{cov}(\beta_i,\beta_i)\mathrm{cov}(\beta_j,\beta_j)}}\right\}_{1\le i < j \le 22}
\]
and compare the estimated pairwise correlation coefficients produced by different methods to the ground truth in Figure~\ref{figure:LRwaveform_corr}. 
We see that KSIVI provides better correlation coefficient approximations than SIVI and SIVI-SM, as evidenced by the scatters more concentrated around the diagonal.

\subsection{Conditioned Diffusion Process}\label{sec:conditional_diffusion}
Our next example is a higher-dimensional Bayesian inference problem arising from the following Langevin stochastic differential equation (SDE) with state $x_t\in \mathbb{R}$
\begin{equation}\label{cond-diffusion}
\mathrm{d} x_t = 10x_t(1-x_t^2) \mathrm{d} t + \mathrm{d} w_t, \ 0\leq t \leq 1, 
\end{equation}
where $x_0 = 0$ and $w_t$ is a one-dimensional standard Brownian motion. Equation \cref{cond-diffusion} describes the motion of a particle with negligible mass trapped in an energy potential, with thermal fluctuations represented by the Brownian forcing \citep{detommaso2018stein, cui2016dimension, yu2023hierarchical}. 
Using the Euler-Maruyama scheme with step size $\Delta t = 0.01$, we discretize the SDE into $x=(x_{\Delta t}, x_{2\Delta t},\cdots, x_{100\Delta t})$, which defines the prior distribution $p_{\mathrm{prior}}(x)$ of the 100-dimensional variable $x$.
The perturbed 20-dimensional observation is $y=(y_{5\Delta t},y_{10\Delta t},\ldots,y_{100\Delta t})$ where $y_{5k\Delta t} \sim \mathcal{N}(x_{5k\Delta_t},\sigma^2)$ with $1\leq k\leq 20$ and $\sigma = 0.1$, which gives the likelihood function $p(y|x)$. 
Given the perturbed observations $y$, our goal is to infer the posterior of the discretized path of conditioned diffusion process $p(x|y)\propto p_{\mathrm{prior}}(x) p(y|x)$.
We simulate a long-run parallel SGLD of 100,000 iterations with 1000 independent particles and a small step size of 0.0001 to form the ground truth of 1000 samples. 
For all SIVI variants, we update the variational parameters $\phi$ for 100,000 iterations to ensure convergence (Appendix \ref{appendix:cd}). 

Figure~\ref{figure:cd_traj} shows the approximation for the discretized conditional diffusion process of all methods.
We see that the posterior mean estimates given by SIVI-SM are considerably bumpier compared to the ground truth, while SIVI fails to capture the uncertainty with a severely underestimated variance. 
In contrast, the results from KSIVI align well with the SGLD ground truth. 
Table \ref{tab:run_time_cd} shows the training time per 10,000 iterations of SIVI variants on a 3.2 GHz CPU. 
Additional quantitative results under the sliced Wasserstein distance metrics estimated using 1000 samples, can be found in Table~\ref{tab:cd_wd_dim} in Appendix~\ref{appendix:cd}.
For a fair time comparison, we use the score-based training of SIVI discussed in \citet{yu2023hierarchical}, which computes the $\nabla\log p(x)$ instead of $\log p(x)$ to derive the gradient estimator.
We see that KSIVI achieves better computational efficiency than SIVI-SM and comparable training time to SIVI.

\begin{figure*}[!t]
    \centering
    \includegraphics[width=\linewidth]{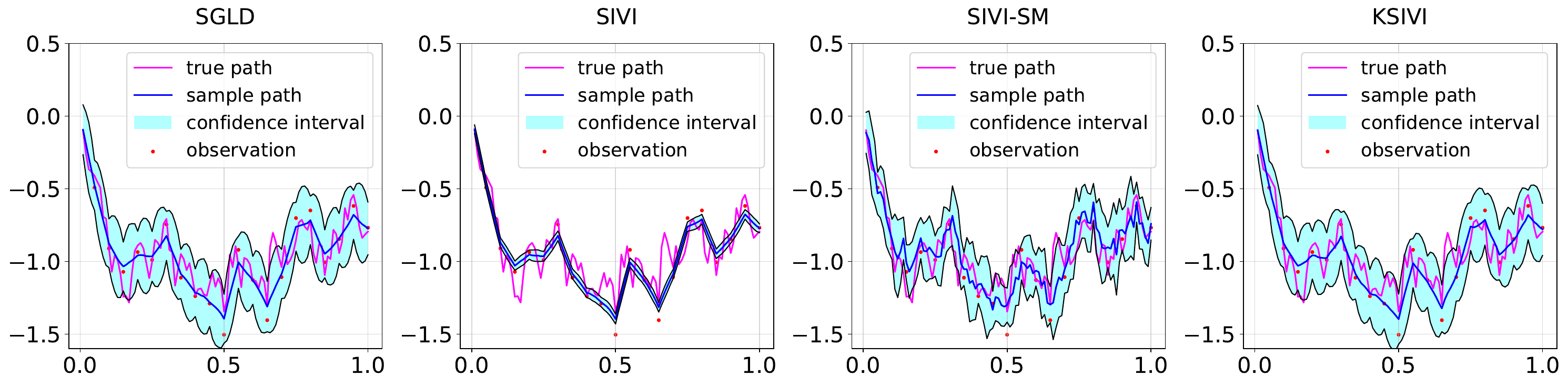}
    \caption{Variational approximations of different methods for the discretized conditioned diffusion process.
    The magenta trajectory represents the ground truth via parallel SGLD. The blue line corresponds to the estimated posterior mean of different methods, and the shaded region denotes the $95\%$ marginal posterior confidence interval at each time step. The sample size is 1000.
    }
    \label{figure:cd_traj}
\end{figure*}

\begin{table}[!t]
\caption{Training time (per 10,000 iterations, in seconds) for the conditioned diffusion process inference task. For all the methods, the batch size for Monte Carlo estimation is set to $N=128$.}
\label{tab:run_time_cd}
\centering
\setlength\tabcolsep{5pt}
\vskip0.5em
\begin{tabular}{lccc}
\toprule
Method \textbackslash Dimensionality & 50 & 100 & 200 \\
\midrule
SIVI & 58.00 & 88.12 & 113.46 \\
SIVI-SM & 70.42 & 128.13 & 149.61 \\
KSIVI & 56.67 & 90.48 & 107.84 \\
\bottomrule
\end{tabular}
\end{table}

\subsection{Bayesian Neural Network}

In the last experiment, we compare KSIVI against SGLD, SIVI, and SIVI-SM on sampling from the posterior of a Bayesian neural network (BNN) across various benchmark UCI datasets\footnote{https://archive.ics.uci.edu/ml/datasets.php}. 
Following \citet{liu2016stein, wang2022accelerated}, we use a two-layer neural network consisting of 50 hidden units and ReLU activation functions for the BNN model.
The datasets are randomly partitioned into 90\% for training and 10\% for testing.
Additionally, gradient clipping is applied during the training process, and the exponential moving average (EMA) trick \citep{huang2017snapshot, izmailov2019averaging} is employed in the inference stage for all SIVI variants.

\begin{table*}[!t]
\caption{Test RMSE and test NLL of Bayesian neural networks on several UCI datasets. The results are averaged from 10 independent runs with the standard deviation in the subscripts.
  For each data set, the best result is marked in \textbf{black bold font} and the second best result is marked in \textbf{\color{Sepia!30}{brown bold font}}.
  }  
\label{tab:bnn_rmse}
\centering
\vskip0.5em
\setlength\tabcolsep{6.3pt}
\resizebox{\linewidth}{!}{
\begin{tabular}{lcccccccc}
\toprule
 \multirow{2}{*}{Dataset}&\multicolumn{4}{c}{Test RMSE ($\downarrow$)} &\multicolumn{4}{c}{Test NLL ($\downarrow$)}  \\
\cmidrule(l){2-5}\cmidrule(l){6-9}
&  SIVI& SIVI-SM & SGLD& KSIVI&SIVI& SIVI-SM & SGLD& KSIVI \\
\midrule
\textsc{Boston}       & $\bm{\textcolor{Sepia!30}{2.621}}_{\pm0.02}$ & $2.785_{\pm0.03}$ & $2.857_{\pm0.11}$& $\bm{2.555}_{\pm0.02}$& $\bm{2.481}_{\pm0.00}$ & $2.542_{\pm0.01}$ & $3.094_{\pm0.01}$& $\bm{\textcolor{Sepia!30}{2.506}}_{\pm0.01}$    \\
\textsc{Concrete}     & $6.932_{\pm0.02}$ & $\bm{\textcolor{Sepia!30}{5.973}}_{\pm0.04}$ & $6.861_{\pm0.19}$& $\bm{5.750}_{\pm0.03}$  & $3.337_{\pm0.00}$ & $\bm{3.229}_{\pm0.01}$ & $4.036_{\pm0.01}$& $\bm{\textcolor{Sepia!30}{3.309}}_{\pm0.01}$    \\
\textsc{Power}        & $\bm{3.861}_{\pm0.01}$ & $4.009_{\pm0.00}$ & $3.916_{\pm0.01}$& $\bm{\textcolor{Sepia!30}{3.868}}_{\pm0.01}$   & $\bm{2.791}_{\pm0.00}$ & $2.822_{\pm0.00}$ & $2.944_{\pm0.00}$& $\bm{\textcolor{Sepia!30}{2.797}}_{\pm0.00}$   \\
\textsc{Wine}      & $\bm{\textcolor{Sepia!30}{0.597}}_{\pm0.00}$ & $0.605_{\pm0.00}$ & $\bm{\textcolor{Sepia!30}{0.597}}_{\pm0.00}$& $\bm{0.595}_{\pm0.00}$ & $\bm{\textcolor{Sepia!30}{0.904}}_{\pm0.00}$ & $0.916_{\pm0.00}$ & $\bm{\textcolor{Sepia!30}{0.904}}_{\pm0.00}$& $\bm{0.901}_{\pm0.00}$    \\
\textsc{Yacht}        & $1.505_{\pm0.07}$ & $\bm{0.884}_{\pm0.01}$ & $2.152_{\pm0.09}$& $\bm{\textcolor{Sepia!30}{1.237}}_{\pm0.05}$  & $\bm{\textcolor{Sepia!30}{1.721}}_{\pm0.03}$ & $\bm{1.432}_{\pm0.01}$ & $2.873_{\pm0.03}$& $1.752_{\pm0.03}$    \\
\textsc{Protein}      & $\bm{4.669}_{\pm0.00}$ & $5.087_{\pm0.00}$  & $\bm{\textcolor{Sepia!30}{4.777}}_{\pm0.00}$& $5.027_{\pm0.01}$ & $\bm{2.967}_{\pm0.00}$ & $3.047_{\pm0.00}$ & $\bm{\textcolor{Sepia!30}{2.984}}_{\pm0.00}$& $3.034_{\pm0.00}$    \\
\bottomrule
\end{tabular}
}
\end{table*}

The averaged test rooted mean squared error (RMSE) and negative log-likelihood (NLL) over 10 random runs are reported in Table~\ref{tab:bnn_rmse}.
We see that KSIVI can provide results on par with SIVI and SIVI-SM on all datasets.
However, it does not exhibit a significant advantage within the context of Bayesian neural networks.
This phenomenon is partially attributable to the evaluation metrics. 
Since test RMSE and test NLL are assessed from a predictive standpoint rather than directly in terms of the posterior distribution, they do not fully capture the approximation accuracy of the posterior distribution. 
Additionally, the performance of KSIVI may be contingent on the choice of kernels, particularly in higher-dimensional cases, as suggested by research on unbounded kernels \citep{Hertrich2024SlicedMMD}. 
We leave a more thorough investigation for future work.

\section{Extensions of KSIVI}
In this section, we extend KSIVI to address two challenging scenarios in variational inference: approximating heavy-tailed target distributions and capturing complex multi-modal posterior structures. 
We first investigate how kernel choice affects approximation quality in heavy-tailed settings. 
We then introduce a hierarchical formulation of KSIVI that mitigates mode collapse and facilitates exploration in highly multi-modal landscapes.

\subsection{Kernel Choice for Heavy-Tailed Distributions}
Kernel selection plays a central role in the performance of kernel-based discrepancy measures, particularly for distributions with heavy tails \citep{MMDgorham17a}. 
To investigate this effect in KSIVI, we conduct an ablation study on a two-dimensional target distribution defined as the product of two Student's t-distributions, following the experimental setup of \citet{li2023sampling}.
For KSIVI, we apply a practical regularization technique to stabilize the kernel Stein discrepancy across all kernel choices \citep{bénard2023kernel}.  
We use the implementation from \citet{li2023sampling} to reproduce the results for two particle-based variational inference methods: MIED \citep{li2023sampling} and KSDD \citep{korba2021kernel}.

Figure \ref{figure:student_uc_plot} visualizes the variational approximations obtained by each method using 5000 samples.
Due to the heavy-tailed nature of the $t$-distributions, KSIVI with the Gaussian kernel tends to concentrate probability mass excessively in the center, failing to cover the diagonal tails of the target. 
In contrast, KSIVI equipped with the heavy-tailed Riesz kernel captures the tail structure more effectively.

\begin{figure*}[!t]
    \centering
    \includegraphics[width=\linewidth]{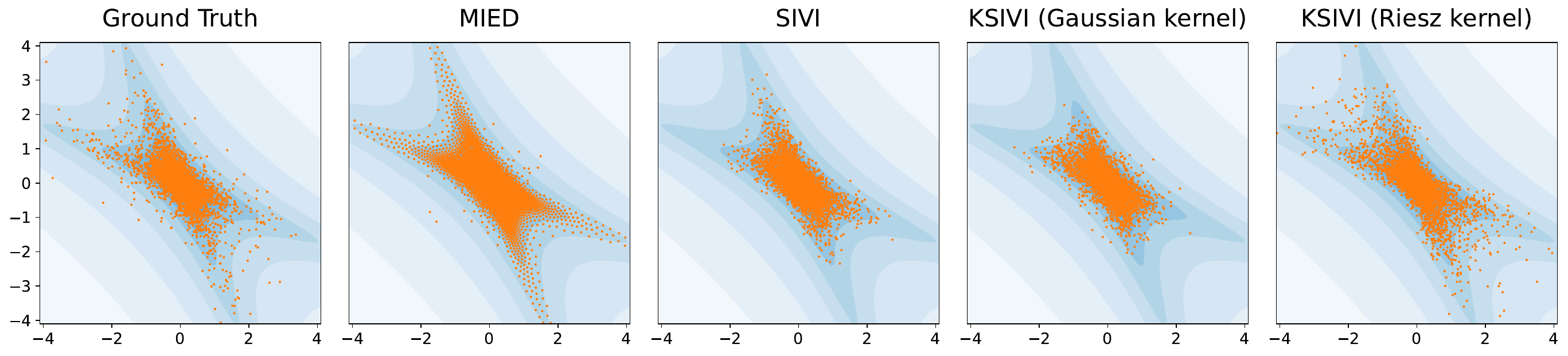}
    \caption{Sample comparisons on a 2D product of two Student's t-distributions. Each panel shows scatter samples with the target density shown as blue contour. The sample size is 5000.
    }
    \label{figure:student_uc_plot}
\end{figure*}

\begin{table}[t]
\caption{Estimated Wasserstein distances for the product of two Student's t-distributions. The results were averaged from
10 independent runs.}
\label{tab:t-Student}
\centering
\footnotesize
\setlength\tabcolsep{2.5pt}
\resizebox{0.8\linewidth}{!}{
\begin{tabular}{lccc}
\toprule
Methods \textbackslash Edge length & 5 & 8& 10 \\
\midrule
MIED                       & $\bm{0.0366_{\pm0.01}}$ & $0.0778_{\pm0.03}$ & $0.1396_{\pm0.04}$ \\
KSDD                       & $0.1974_{\pm 0.04}$     & $0.3187_{\pm 0.10}$     & $0.3910_{\pm 0.11}$ \\
\midrule
SIVI                       & $0.0814_{\pm 0.03}$     & $0.1659_{\pm 0.08}$     & $0.2226_{\pm 0.08}$ \\
KSIVI (Gaussian kernel)    & $0.1048_{\pm 0.03}$     & $0.1976_{\pm 0.08}$     & $0.2546_{\pm 0.09}$\\
KSIVI (Riesz kernel)       & $0.0451_{\pm 0.01}$     & $0.0816_{\pm 0.04}$     & $0.1136_{\pm 0.05}$ \\
HKSIVI (5 layers, Riesz kernel)           & $0.0423_{\pm 0.01}$     & $\bm{0.0709_{\pm 0.04}}$     & $\bm{0.1102_{\pm 0.05}}$ \\
\bottomrule
\end{tabular}
}
\end{table}

Since the Student's $t$-distribution generates samples with large norms, standard metric computations can be unstable due to high variance.
Therefore, following the protocol in \citet{li2023sampling}, we compute the metrics on samples restricted to a bounded box $[-a, a]^2$, where $a$ denotes the edge length.
This truncation focuses the evaluation on the significant support regions of the distribution.
In Table \ref{tab:t-Student}, we report the estimated Wasserstein distances from the target distributions to the variational posteriors (using the metric implementation provided in \citet{li2023sampling} with 1000 samples).
Lower values indicate better approximation quality.
We observe that compared to the Gaussian kernel, the Riesz kernel leads to a more pronounced improvement for this task.
This behavior aligns with prior observations that heavy-tailed kernels better capture long-range dependencies in distributions with slow decay.

\begin{figure}[!t]
    \centering
    \includegraphics[width=\linewidth]{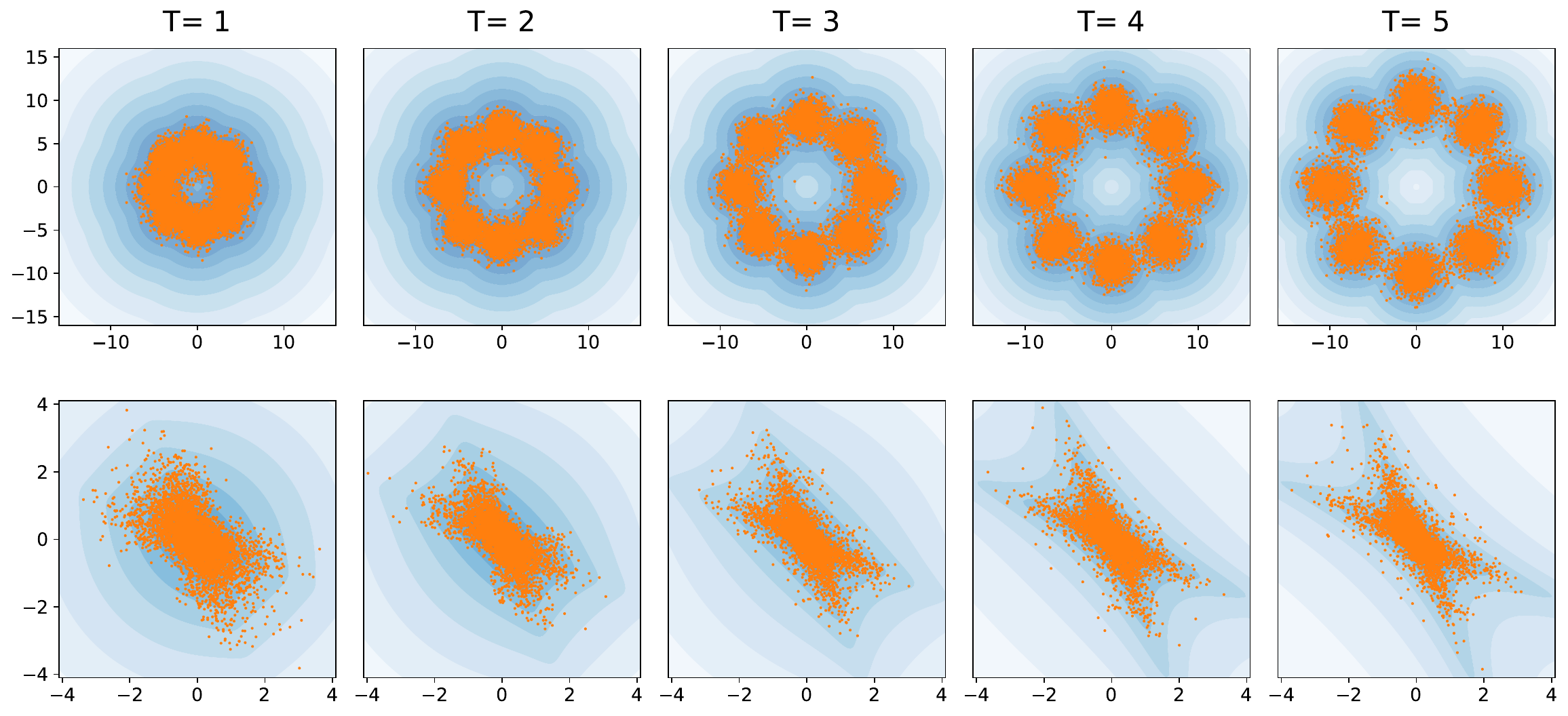}
    \caption{Evolution of the variational approximation in HKSIVI. 
    The plots display generated orange samples (5000 per plot) and the target density contours (blue) at intermediate timesteps $T=1$ to $T=5$ of the hierarchical structure.
    \textbf{Top row}: Approximation of the eight-Gaussian mixture model.
    \textbf{Bottom row}: Approximation of the product of two Student's t-distributions.}
    \label{figure:samples-eight-gaussian}
\end{figure}

\subsection{Hierarchical KSIVI}
Inspired by hierarchical variational inference frameworks, we propose hierarchical kernel semi-implicit variational inference (HKSIVI), which extends KSIVI to challenging multi-modal settings. 
HKSIVI replaces the general distance metric used in HSIVI with the kernel Stein discrepancy defined in Equation (\ref{eq:kernel-stein-discrepancy}), yielding a fully kernelized hierarchical objective that avoids adversarial training and inner-loop optimization. 
The resulting training objective is a weighted sum of KSD objectives, where each term can be efficiently computed using the same denoising-style derivation as in the single-layer case:
\begin{align}\label{HKSIVI-loss}
\mathcal{L}_{\textrm{HKSIVI}}(\phi) = \sum_{t=0}^{T-1}\beta(t) \textrm{KSD}(q_{\phi}(\cdot;t)\| p(\cdot; t))^2
\end{align}
To evaluate the effectiveness of HKSIVI in mitigating mode collapse, we consider a two-dimensional Gaussian mixture model with eight well-separated components: 
$$
p(x) = \sum_{i=1}^{8} \frac{1}{8}\mathcal{N}(x;\mu_i,\sigma^2 I),
$$
where the means are located at $\mu_i = [10\cos(i\pi/4), 10\sin(i\pi/4)]^\top$ and the standard deviation is $\sigma=1$. 
This target distribution presents a significant challenge for standard variational inference methods due to its isolated modes.

\paragraph{Model Architecture Details.} 
For this experiment, we employ a 5-layer hierarchical semi-implicit variational distribution.
Each conditional layer $q_{\phi}(x_t\mid x_{t+1};t)$ is parameterized as a diagonal Gaussian, following the structure:
\begin{equation*}
q_{\phi}(x_{t}|x_{t+1}; t) = \mathcal{N}(\mu_t(x_{t+1};\phi), \mathrm{diag}\{\sigma^2_t(\phi)\}).
\end{equation*}
To enhance optimization stability, we parameterize the conditional mean $\mu_t$ using a residual formulation inspired by Residual Networks \citep{he2016deep} and stochastic gradient Langevin dynamics.
Specifically, the $\mu_t$ function is defined as:
\begin{equation}
    \mu_t(x_{t+1};\phi) = x_{t+1} + \bar{\mu}_t(x_{t+1};\phi) + \frac{\sigma_{\mathrm{ini}}^2}{2} s_t(x_{t+1}),
\end{equation}
for $t=0,\ldots,T-1$, where $s_t(x)=\nabla_x\log p_t(x)$ denotes the score of the annealed target at layer $t$, and $\sigma^2_{\mathrm{ini}}$ represents the initial value of the variance parameters $\sigma_t^2(\phi)$.
In our experiments, we set $\sigma_{\mathrm{ini}}=1.0$.
Crucially, we initialize the output of learnable residual mapping $\bar{\mu}_t(\cdot;\phi)$ to 0.
This initialization ensures that the hierarchical sampling process initially coincides with an SGLD trajectory, thereby stabilizing the early phases of training.
Regarding the network architecture, all residual mappings $\bar{\mu}_t$ share the same multi-layer perceptron (MLP) with hidden layer widths $[2, 50, 50, 2]$ and ReLU activations.

Figure \ref{figure:samples-eight-gaussian} visualizes the evolution of the learned distributions through the hierarchical layers of HKSIVI from $T=1$ to $T=5$.
The top row illustrates the trajectory for the eight-Gaussian mixture model.
Recall that single-layer SIVI variants (SIVI and SIVI-SM) often struggle to capture the full support of this distribution, typically collapsing to only a subset of the modes, as demonstrated in Figure 2 of \citet{yu2023hierarchical}.
In stark contrast, our HKSIVI framework effectively overcomes this limitation.
Starting from a unimodal distribution at $T=1$, the probability mass progressively splits and adjusts, successfully discovering and covering all eight modes by the final layer at $T=5$.

For the heavy-tailed product of two Student's t-distributions, we observe a smooth geometric transformation in the bottom row of Figure \ref{figure:samples-eight-gaussian}.
The particles evolve from a concentrated isotropic distribution to match the elongated, heavy-tailed structure of the target density.
This visual quality is quantitatively corroborated in Table \ref{tab:t-Student}.
We highlight the comparison between the single-layer KSIVI and the 5-layer HKSIVI using the Riesz kernel.
While the Riesz kernel alone provides a significant improvement over the Gaussian kernel, the hierarchical extension yields further performance gains.
Notably, at larger edge lengths $E=8$ and $10$ where tail behavior is critical, HKSIVI demonstrates superior stability and accuracy.
These results empirically demonstrate that the hierarchical structure constructs an effective bridge between simple base distributions and complex targets, enabling more comprehensive posterior approximation than single-layer counterparts.

\section{Conclusion}
In this paper, we proposed kernel semi-implicit variational inference (KSIVI), a kernel-based variant of semi-implicit variational inference that replaces ELBO- or Fisher-divergence-based objectives with the kernel Stein discrepancy (KSD).
By exploiting kernel techniques, KSIVI removes the need for lower-level optimization in SIVI-SM and enables direct, stable optimization using samples from the variational distribution.
Leveraging the hierarchical structure of semi-implicit variational families, the KSD objective admits an explicit and efficiently computable form.
We developed practical Monte Carlo gradient estimators and established variance bounds that guarantee convergence to stationary points under standard stochastic optimization conditions.
Extensive numerical experiments validate the efficiency and effectiveness of KSIVI.

In addition to algorithmic development, we provided a statistical generalization analysis for the KSD objective, establishing non-asymptotic bounds that quantify how well the empirical KSIVI objective approximates its population counterpart.
These results offer a rigorous statistical justification for the use of sample-based KSD minimization in semi-implicit variational inference.
We also systematically investigated the role of kernel choice in approximating heavy-tailed target distributions, demonstrating that kernels with slower tail decay can substantially improve approximation quality over standard Gaussian kernels.

To further enhance the expressive power of KSIVI, we introduced a hierarchical, multi-layer extension that composes multiple semi-implicit transformations.
This construction enables KSIVI to progressively bridge simple base distributions and complex targets, alleviating mode collapse and improving exploration in highly multi-modal and heavy-tailed settings.
Empirical results on challenging synthetic benchmarks confirm that the hierarchical formulation significantly improves flexibility and approximation accuracy compared to single-layer variants.

Despite these advances, several limitations remain.
As a KSD-based approach, KSIVI may admit stationary points that do not correspond to the target distribution, particularly in high-dimensional or highly non-convex settings \citep{korba2021kernel}.
In addition, commonly used kernels such as the Gaussian kernel may suffer from diminished effectiveness in high dimensions due to their rapidly decaying tails, motivating further investigation into alternative or adaptive kernel choices.
From a theoretical perspective, our generalization and convergence analyses rely on smoothness and growth conditions on the neural network parameterizations, which may be restrictive for deep or overparameterized architectures.
Addressing these challenges by developing tighter analyses under weaker assumptions, designing principled kernel learning or selection strategies, and extending KSIVI to large-scale and structured probabilistic models are promising directions for future research.
\section*{Acknowledgements}
This work was supported by National Natural Science Foundation of China (grant no. 12201014, grant no. 12292980
and grant no. 12292983). The research of Cheng Zhang was support
in part by National Engineering Laboratory for Big Data
Analysis and Applications, the Key Laboratory of Mathematics and Its Applications (LMAM) and the Key Laboratory of Mathematical Economics and Quantitative Finance
(LMEQF) of Peking University.

\newpage
\appendix

\section{Proofs}\label{app:sec:proofs}

\subsection{Proof of Theorem \ref{thm:opt_f}}\label{app:subsec:proof_opt_f}
\begin{theorem}
    Consider the min-max problem
    \begin{equation}
        \min_{\phi}\max_{f} \quad \E_{q_{\phi}}\left[2f(x)^\top[s_p(x) - s_{q_\phi}(x)] - \|f\|_{\mathcal{H}}^2\right].
    \end{equation}
    Given variational distribution $q_\phi$, the optimal $f^*$ is given by 
    \begin{equation}
        f^*(x)=\E_{y\sim q_\phi(y)} k(x,y)\left[s_p(y)-s_{q_\phi}(y)\right].
    \end{equation}
    Thus the upper-level problem for $\phi$ is
    \begin{equation}
        \min_\phi\quad  \text{KSD}(q_\phi\|p)^2=\left\|S_{q_\phi,k}\nabla\log\frac{p}{q_\phi}\right\|_{\mathcal{H}}^2.
    \end{equation}
\end{theorem}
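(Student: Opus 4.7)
My plan is to first freeze $\phi$ and solve the inner maximization analytically, then read off the value as the KSD. The key tool is the reproducing property of the vector-valued RKHS $\mathcal{H}=\mathcal{H}_0^{\otimes d}$: for any $f=(f_1,\dots,f_d)\in\mathcal{H}$ and any vector $v\in\R^d$, we have
\begin{equation*}
v^\top f(x)=\sum_{i=1}^d v_i\langle f_i,k(x,\cdot)\rangle_{\mathcal{H}_0}=\langle f,\,v\cdot k(x,\cdot)\rangle_{\mathcal{H}},
\end{equation*}
where $v\cdot k(x,\cdot)\in\mathcal{H}$ has $i$-th component $v_i\,k(x,\cdot)$. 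Setting $v(x):=s_p(x)-s_{q_\phi}(x)$ and applying Fubini (which is justified under the kernel-integrability condition $\int k(x,x)\,\mathrm{d}q_\phi(x)<\infty$ so that $S_{q_\phi,k}$ is well-defined), the cross-term in the objective rewrites as
\begin{equation*}
\E_{q_\phi}\!\left[f(x)^\top v(x)\right]=\Big\langle f,\ \E_{q_\phi}\!\left[v(x)\cdot k(x,\cdot)\right]\Big\rangle_{\mathcal{H}}=\langle f,u^{\ast}\rangle_{\mathcal{H}},
\end{equation*}
where $u^{\ast}:=S_{q_\phi,k}(s_p-s_{q_\phi})\in\mathcal{H}$, that is, the element of $\mathcal{H}$ whose evaluation at $x$ equals $\E_{y\sim q_\phi}k(x,y)[s_p(y)-s_{q_\phi}(y)]$ by the reproducing property together with symmetry of $k$.

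With this rewriting, the inner problem becomes the elementary Hilbert-space maximization
\begin{equation*}
\max_{f\in\mathcal{H}}\ 2\langle f,u^{\ast}\rangle_{\mathcal{H}}-\|f\|_{\mathcal{H}}^2,
\end{equation*}
which is a strictly concave quadratic in $f$ and therefore attained uniquely at $f^{\ast}=u^{\ast}$, with optimal value $\|u^{\ast}\|_{\mathcal{H}}^2$. Evaluating $u^{\ast}$ pointwise yields exactly the claimed formula
\begin{equation*}
f^{\ast}(x)=\E_{y\sim q_\phi}k(x,y)\!\left[s_p(y)-s_{q_\phi}(y)\right].
\end{equation*}

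Finally, I would identify the upper-level objective with squared KSD. By the reproducing property,
\begin{equation*}
\|u^{\ast}\|_{\mathcal{H}}^2=\sum_{i=1}^d\|\E_{y\sim q_\phi}v_i(y)k(y,\cdot)\|_{\mathcal{H}_0}^2=\E_{y,y'\sim q_\phi}\!\left[v(y)^\top v(y')\,k(y,y')\right],
\end{equation*}
which is precisely $\mathrm{KSD}(q_\phi\|p)^2$ as defined in \eqref{eq:kernel-stein-discrepancy}, and can equivalently be written as $\|S_{q_\phi,k}\nabla\log(p/q_\phi)\|_{\mathcal{H}}^2$ since $s_p-s_{q_\phi}=\nabla\log(p/q_\phi)$. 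The main (minor) technical point is the interchange of expectation and the RKHS inner product: this is the only place regularity is needed, and it is handled by Bochner integrability under Assumption~\ref{asp:kernel} plus the moment control in Assumption~\ref{asp:moment}. No adversarial analysis is required, since the inner problem is an exact quadratic on a Hilbert space.
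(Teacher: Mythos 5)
Your proposal is correct and follows essentially the same route as the paper's proof: rewrite the cross term $\E_{q_\phi}[f(x)^\top(s_p(x)-s_{q_\phi}(x))]$ as an RKHS inner product $\langle f, S_{q_\phi,k}\nabla\log(p/q_\phi)\rangle_{\mathcal{H}}$ via the reproducing property, then observe that the inner problem is a strictly concave quadratic maximized uniquely at $f^*=S_{q_\phi,k}\nabla\log(p/q_\phi)$ with value $\|S_{q_\phi,k}\nabla\log(p/q_\phi)\|_{\mathcal{H}}^2$. You add welcome explicitness on two points the paper leaves implicit, namely the vector-valued reproducing identity $v^\top f(x)=\langle f,v\cdot k(x,\cdot)\rangle_{\mathcal{H}}$ and the Bochner-integrability justification for interchanging expectation with the inner product, but the underlying argument is the same.
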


\begin{proof}
    For any $f\in \mathcal{H}$, we have $f(x)=\left\langle f,k(\cdot,x)\right\rangle_{\mathcal{H}}$ by reproducing property.
    Since
    \begin{equation}
        \begin{aligned}
            \E_{q_{\phi}}f(x)^\top[s_p(x) - s_{q_\phi}(x)]
            &= \E_{q_{\phi}}\left\langle f,k(\cdot,x)\right\rangle_{\mathcal{H}}^\top[s_p(x) - s_{q_\phi}(x)] \\
            &= \left\langle f, \E_{q_{\phi}}k(\cdot,x)[s_p(x) - s_{q_\phi}(x)]\right\rangle_{\mathcal{H}} \\
            &= \left\langle f, S_{q_\phi,k}\nabla\log\frac{p}{q_\phi}\right\rangle_{\mathcal{H}},
        \end{aligned}
    \end{equation}
    the lower-level problem is 
    \begin{equation}
        \max_{f} \quad 2\left\langle f, S_{q_\phi,k}\nabla\log\frac{p}{q_\phi}\right\rangle_{\mathcal{H}} - \|f\|_{\mathcal{H}}^2.
    \end{equation}
    Therefore, the optimal $f^*=S_{q_\phi,k}\nabla\log\frac{p}{q_\phi}$ and the upper-level problem is 
    \begin{equation}
        \min_\phi\quad \left\|S_{q_\phi,k}\nabla\log\frac{p}{q_\phi}\right\|_{\mathcal{H}}^2.
    \end{equation}
\end{proof}

\subsection{Proofs in Section \ref{subsec:theory_opt} }\label{app:subsec:proof}

\begin{proposition}
    Under Assumption \ref{asp:target} and \ref{asp:moment}, we have $\E_{q_\phi(x)}\|s_p(x)\|^4\lesssim L^4(s^4+\|x^*\|^4):=C^2$, where $x^*$ is any zero point of $s_p$.
\end{proposition}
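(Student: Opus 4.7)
The plan is to exploit the fact that Assumption \ref{asp:target} provides a uniform bound $\|\nabla^2 \log p(x)\|\leq L$, which immediately implies that the score function $s_p = \nabla\log p$ is $L$-Lipschitz on $\R^d$. Given any reference point $x^*$, this Lipschitz property yields a pointwise control $\|s_p(x)-s_p(x^*)\|\leq L\|x-x^*\|$, and choosing $x^*$ to be a zero of $s_p$ eliminates the constant term, leaving the clean inequality $\|s_p(x)\|\leq L\|x-x^*\|$. The rest of the argument is a straightforward transport of a fourth-power inequality through the expectation.

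Concretely, I would proceed as follows. First, from $s_p(x^*)=0$ and the Lipschitz bound above, raise both sides to the fourth power to obtain $\|s_p(x)\|^4\leq L^4\|x-x^*\|^4$. Second, apply the elementary inequality $\|x-x^*\|^4\lesssim \|x\|^4+\|x^*\|^4$ (which follows from convexity of $t\mapsto t^4$, or equivalently from $(a+b)^4\leq 8(a^4+b^4)$ after a triangle step). Third, take expectation under $q_\phi(x)$ and invoke the fourth-moment bound $\E_{q_\phi(x)}\|x\|^4\leq s^4$ from Assumption \ref{asp:moment}, while the deterministic term $\|x^*\|^4$ passes through unchanged. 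Combining these gives
\[
\E_{q_\phi(x)}\|s_p(x)\|^4 \lesssim L^4\bigl(s^4+\|x^*\|^4\bigr),
\]
which is exactly the stated bound with $C^2 := L^4(s^4+\|x^*\|^4)$.

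There is essentially no genuine obstacle in this argument: the only subtlety is a minor one of accounting, namely the constant swept into $\lesssim$ when passing from $\|x-x^*\|^4$ to $\|x\|^4+\|x^*\|^4$, and the fact that the bound is uniform in $\phi$ precisely because Assumption \ref{asp:moment} controls the fourth moment of $q_\phi$ uniformly in $\phi$ by the constant $s^4$. The same argument reproduces the main-text version of the proposition (where $x^*$ is allowed to be arbitrary rather than a zero of $s_p$) by simply retaining the additive $\|s_p(x^*)\|^4$ term after applying the triangle inequality before raising to the fourth power.
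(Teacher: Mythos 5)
Your proof is correct and follows essentially the same route as the paper's: the paper likewise invokes the $L$-Lipschitzness of $s_p$ implied by $\|\nabla^2\log p\|\le L$, bounds $\|s_p(x)\|^4\lesssim\|s_p(x)-s_p(x^*)\|^4+\|s_p(x^*)\|^4\lesssim L^4(\|x\|^4+\|x^*\|^4)+\|s_p(x^*)\|^4$, and then takes expectation under $q_\phi$ using the fourth-moment bound from Assumption~\ref{asp:moment}. Your closing remark about how the argument specializes when $x^*$ is a zero of $s_p$ versus a general reference point correctly reconciles the appendix statement with the main-text version.
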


\begin{proof}
    By AM-GM inequality, for any $x^*$,
    \begin{equation}
        \begin{aligned}
            \E_{q_\phi(x)}\|s_p(x)\|^4&\lesssim\E_{q_\phi(x)}\|s_p(x)-s_p(x^*)\|^4+\|s_p(x^*)\|^4 \\
            &\lesssim L^4(\E_{q_\phi(x)}\|x\|^4+\|x^*\|^4)+\|s_p(x^*)\|^4 \\
            &\lesssim L^4(s^4+\|x^*\|^4)+\|s_p(x^*)\|^4.
        \end{aligned} 
    \end{equation}
\end{proof}

\begin{theorem}
    The objective $\mathcal{L}(\phi)$ is $L_\phi$-smooth, where 
    $$
    L_\phi\lesssim B^2G^2d_z\log d\left[(1\vee L)^3+Ld+ M^2 + \E \|s_p(x)\|^2\right].
    $$
\end{theorem}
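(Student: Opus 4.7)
\begin{proofsketch}
The plan is to pass the derivatives inside the expectation via the reparameterization trick, expand $\nabla_\phi^2$ of the integrand by the Leibniz rule into a finite collection of terms, and bound each resulting factor pointwise before integrating out $(z,\xi,z',\xi')$. Concretely, write
\[
\mathcal{L}(\phi)=\E\bigl[H(\phi;z,\xi,z',\xi')\bigr],\qquad
H:=k(x,x')\,\bigl\langle F,F'\bigr\rangle,
\]
where $x=\mu(z;\phi)+\sigma(z;\phi)\odot\xi$, $F=s_p(x)+\xi/\sigma(z;\phi)$, and analogously for primed quantities; the randomness $(z,z')\sim q^{\otimes 2}$, $(\xi,\xi')\sim\mathcal{N}(0,I)^{\otimes 2}$ is independent of $\phi$. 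Under Assumptions~\ref{asp:kernel}--\ref{asp:nn}, $H$ is $C^2$ in $\phi$ with a dominating integrable envelope, so one can interchange $\nabla_\phi^2$ with $\E$, and the task reduces to controlling $\E\|\nabla_\phi^2 H\|$.

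Expanding $\nabla_\phi^2 H$ produces three blocks:
\begin{equation*}
\nabla_\phi^2 H \;=\; \langle F,F'\rangle\,\nabla_\phi^2 k(x,x') \;+\; 2\,\nabla_\phi k(x,x')\otimes\nabla_\phi\langle F,F'\rangle \;+\; k(x,x')\,\nabla_\phi^2\langle F,F'\rangle,
\end{equation*}
and each of the kernel and score factors is further expanded via the chain rule into first/second derivatives of $k,\, s_p,\, \mu,\, \sigma$. The pointwise bounds are: kernel terms by $B^2$ (Assumption~\ref{asp:kernel}); first and second derivatives of $s_p$ at $x$ by $L$ and $M$ (Assumption~\ref{asp:target}); derivatives of the reparameterization $x$ w.r.t.\ $\phi$ by $G(1+\|z\|)(1+\|\xi\|_\infty)$ via $\nabla_\phi x=\nabla_\phi\mu+\nabla_\phi\sigma\odot\xi$ and Assumption~\ref{asp:nn}; the conditional score $-\xi/\sigma$ together with its $\phi$-derivatives by $\sqrt{L}\,(1+\|\xi\|_\infty)\,G(1+\|z\|)$ using the uniform lower bound $\sigma\ge 1/\sqrt{L}$. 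The Stein factor $F$ itself combines $s_p(x)$, bounded in $L^2$ via Proposition~\ref{prop:score_bound}, with $\xi/\sigma$, bounded by $\sqrt{L}\|\xi\|$.

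Multiplying these pointwise bounds and integrating, one gathers: a uniform $B^2$ from the kernel block; a $G^2$ from two differentiations of $\mu,\sigma$; moment factors $\E\|z\|^{\le 4}$ bounded by $d_z$ through Assumption~\ref{asp:moment} and Cauchy--Schwarz; Gaussian moments of $\xi$; and target-smoothness factors that assemble into $(1\vee L)^3+Ld+M^2+\E\|s_p(x)\|^2$ (the $Ld$ comes from cross terms of the form $L^2\E\|\xi\|^2=L^2 d$ after using $\sigma\ge 1/\sqrt L$, the $L^3$ from three-fold nesting of Hessian-of-$\log p$, and the $M^2$ from the single appearance of $\nabla^3\log p$ squared through Cauchy--Schwarz). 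Combining the pieces yields the claimed product form.

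The main obstacle, and where the $\log d$ enters, is that several of the terms contain $\|\xi/\sigma\|^2$ multiplied by other factors depending on $\|\xi\|$; naively using $\E\|\xi\|^2=d$ twice would give $d^2$ and spoil the stated bound. The trick is to isolate, in each offending product, one copy of the noise in $\ell_\infty$ form (since $\nabla_\phi\sigma\odot\xi$ can be controlled operator-wise by $G(1+\|z\|)\|\xi\|_\infty$) and use the standard Gaussian maximal bound $\E\|\xi\|_\infty^{2k}\lesssim(\log d)^k$, so that only one factor of $d$ survives and it is accompanied by a single $\log d$. Carefully carrying out this $\ell_2$/$\ell_\infty$ splitting across all terms of the Leibniz expansion, and invoking Proposition~\ref{prop:score_bound} wherever $s_p(x)$ appears, delivers the claimed smoothness constant.
\end{proofsketch}
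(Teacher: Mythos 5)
Your sketch follows the paper's proof step for step: the same reparameterization of $\mathcal{L}$ as an expectation over $(z,\xi,z',\xi')$, the same three-block Leibniz expansion of $\nabla_\phi^2 H$ into a Hessian-of-$k$ term, a cross term, and a $k\cdot\nabla_\phi^2\langle F,F'\rangle$ term, the same pointwise bounds on each factor from Assumptions \ref{asp:kernel}--\ref{asp:nn}, and, crucially, the same $\ell_2$/$\ell_\infty$ split of the Gaussian noise with $\E\|\xi\|_\infty^{2k}\lesssim(\log d)^k$ to avoid a spurious $d^2$. The only cosmetic slip is attributing the $(1\vee L)^3$ factor to ``three-fold nesting of the Hessian of $\log p$''; in the paper's bookkeeping it enters as $(1\vee L)^{3/2}$ from the $1/\sigma^3$ term (via $\sigma\ge 1/\sqrt L$) in $\nabla_\phi^2[\xi/\sigma]$, squared by AM--GM, but this does not change the argument or the conclusion.
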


\begin{proof}
    Let $u=(z,\xi)$ and $x=T(u;\phi)=\mu(z;\phi)+\sigma(z;\phi)\odot \xi$.
    Then the objective has the following representation:
    \begin{equation}
        \begin{aligned}
            \mathcal{L}
            &= \E_{z,z'}\E_{q_\phi(x|z),q_\phi(x'|z')} \left[k(x,x')(s_p(x)-s_{q_\phi(\cdot|z)}(x))^\top(s_p(x')-s_{q_\phi(\cdot|z')}(x'))\right]\\
            &= \E_{u,u'} \left[k(x,x')(s_p(x)+\xi/\sigma)^\top(s_p(x')+\xi'/\sigma')\right],
        \end{aligned}
    \end{equation}
    which implies
    \begin{equation}
        \begin{aligned}
            \nabla_\phi^2 \mathcal{L}
            &= \E \left\{\underbrace{\nabla_\phi^2[k(x,x')] (s_p(x)+\xi/\sigma)^\top(s_p(x')+\xi'/\sigma')}_{\cirone}\right. \\
            \quad + &\underbrace{\nabla_\phi k(x,x')\otimes \nabla_\phi \left[(s_p(x)+\xi/\sigma)^\top(s_p(x')+\xi'/\sigma')\right] + \nabla_\phi \left[(s_p(x)+\xi/\sigma)^\top(s_p(x')+\xi'/\sigma')\right]\otimes \nabla_\phi k(x,x')}_{\cirtwo} \\
            \quad + &\left.\underbrace{k(x,x')\nabla_\phi^2 \left[(s_p(x)+\xi/\sigma)^\top(s_p(x')+\xi'/\sigma')\right]}_{\cirthree}\right\}.
        \end{aligned}
    \end{equation}

    For $\cirone$, we have 
    \begin{equation}
        \begin{aligned}
            \nabla_\phi^2[k(x,x')] 
            &= \nabla_\phi^2x\nabla_1k + \nabla_\phi^\top x\left(\nabla_{11}k\nabla_\phi x+\nabla_{21}k\nabla_\phi x'\right) \\
            &+ \nabla_\phi^2x'\nabla_2k + \nabla_\phi^\top x'\left(\nabla_{22}k\nabla_\phi x'+\nabla_{12}k\nabla_\phi x\right).
        \end{aligned}
    \end{equation}
    Here $\nabla_\phi^2x\nabla_1k$ is a matrix with entry $[\nabla_\phi^2x\nabla_1k]_{ij}=\sum_{l=1}^d\nabla_{\phi}^2x_l\nabla_{x_l}k$.
    
    Note that by Assumption \ref{asp:nn}$, \|\nabla_\phi x\|\leq \|\nabla_\phi \mu\|+\|\nabla_\phi \sigma\odot\xi\|\leq G(1+\|z\|)(1+\|\xi\|_\infty)$ and $\|\nabla_\phi^2 x\|\leq \|\nabla_\phi^2 \mu\|+\|\nabla_\phi^2 \sigma\odot\xi\|\leq G(1+\|z\|)(1+\|\xi\|_\infty)$. 
    Then by Assumption \ref{asp:kernel}, $\|\nabla_1 k\|\leq B^2, \|\nabla_{11} k\|\leq B^2$,
    \begin{equation}
        \begin{aligned}
            \E [\|\cirone\|]
            &\leq \E[\|\nabla_\phi^2 k\|\cdot\|s_p(x)+\xi/\sigma\|\|s_p(x')+\xi'/\sigma'\|] \\
            &\leq \sqrt{\E[\|\nabla_\phi^2 k\|^2]}\sqrt{\E [\|s_p(x)+\xi/\sigma\|^2\|s_p(x')+\xi'/\sigma'\|^2]} \\
            &\lesssim
            \left[GB^2\sqrt{\E (1+\|\xi\|_\infty)^2(1+\|z\|)^2} + G^2B^2 \sqrt{\E (1+\|\xi\|_\infty)^4(1+\|z\|)^4}\right] \E [\|s_p(x)+\xi/\sigma\|^2] \\
            &\lesssim G^2B^2d_z\log d \E[\|s_p(x)+\xi/\sigma\|^2] \\
            &\lesssim G^2B^2d_z\log d [\E\|s_p(x)\|^2+Ld].
        \end{aligned}
    \end{equation}
    In the inequality second to last, we utilize the well-known fact $\E \|\xi\|_\infty^4\lesssim \log^2 d$ and Assumption \ref{asp:moment}.
    
    For $\cirtwo$, we have
    \begin{equation}
        \begin{aligned}
            \|\nabla_\phi k(x,x')\|
            &\leq \|\nabla_\phi^\top x\nabla_1k\|+\|\nabla_\phi^\top x' \nabla_2k\| \\
            &\lesssim B^2G[(1+\|z\|)(1+\|\xi\|_\infty) + (1+\|z'\|)(1+\|\xi'\|_\infty)],
        \end{aligned}
    \end{equation}
    and
    \begin{equation}
        \begin{aligned}
            \nabla_\phi \left[(s_p(x)+\xi/\sigma)^\top(s_p(x')+\xi'/\sigma')\right]
            &= \left[\nabla^2\log p(x)\nabla_\phi x-\diag\{\xi/\sigma^2\}\nabla_\phi\sigma\right]^\top(s_p(x')+\xi'/\sigma') \\
            &\quad + \left[\nabla^2\log p(x')\nabla_\phi x'-\diag\{\xi'/\sigma'^2\}\nabla_\phi\sigma'\right]^\top(s_p(x)+\xi/\sigma).
        \end{aligned}
    \end{equation}
    Therefore,
    \begin{equation}
        \begin{aligned}
            \E [\|\cirtwo\|]
            &\lesssim B^2G\sqrt{\E [(1+\|z\|)^2(1+\|\xi\|_\infty)^2]}\sqrt{\E\left[\|\left[\nabla^2\log p(x)\nabla_\phi x-\diag\{\xi/\sigma^2\}\nabla_\phi\sigma\right]^\top(s_p(x')+\xi'/\sigma')\|^2\right]} \\
            &\lesssim B^2G\sqrt{d_z\log d }\cdot LG\sqrt{\E [(1+\|z\|)^2(1+\|\xi\|_\infty)^2\|s_p(x')+\xi'/\sigma'\|^2]} \\
            &\lesssim LG^2B^2d_z\log d \sqrt{\E\|s_p(x)\|^2+Ld}.
        \end{aligned}
    \end{equation}

    For $\cirthree$, we have
    \begin{equation}
        \begin{aligned}
            \nabla_\phi^2\left[(s_p(x)+\xi/\sigma)^\top(s_p(x')+\xi'/\sigma')\right]
            &= \nabla_\phi^2\left[s_p(x)^\top s_p(x')+\frac{\xi}{\sigma}\cdot s_p(x')+\frac{\xi'}{\sigma'}\cdot s_p(x) + \frac{\xi}{\sigma}\cdot \frac{\xi'}{\sigma'}\right]
        \end{aligned}
    \end{equation}
    Firstly,
    \begin{equation}
        \begin{aligned}
            \nabla_\phi^2 [s_p(x)^\top s_p(x')]
            &= \nabla_\phi^2x [\nabla^2\log p(x)s_p(x')] + \nabla_\phi^\top x [\nabla^3\log p(x)s_p(x')]\nabla_\phi x \\
            &\qquad + \nabla_\phi^2x' [\nabla^2\log p(x')s_p(x)] + \nabla_\phi^\top x' [\nabla^3\log p(x')s_p(x)]\nabla_\phi x' \\
            &\qquad + \nabla_\phi^\top x \nabla^2\log p(x)\nabla^2\log p(x')\nabla_\phi x' + \nabla_\phi^\top x' \nabla^2\log p(x')\nabla^2\log p(x)\nabla_\phi x.
        \end{aligned}
    \end{equation}
    \begin{equation}
        \begin{aligned}
            \E \left[k(x,x')\|\nabla_\phi^2 [s_p(x)^\top s_p(x')]\|\right]
            &\lesssim B^2\E \left[\left[LG(1+\|z\|)(1+\|\xi\|_\infty) + MG^2(1+\|z\|)^2(1+\|\xi\|_\infty)^2\right]\|s_p(x')\|\right] \\
            &\qquad + B^2L^2G^2\E[(1+\|z\|)(1+\|\xi\|_\infty)(1+\|z'\|)(1+\|\xi'\|_\infty)] \\
            &\lesssim GB^2(L\sqrt{d_z\log d}+GMd_z\log d)\sqrt{\E \|s_p(x)\|^2}+B^2L^2G^2d_z\log d.
        \end{aligned}
    \end{equation}
    Then, since
    \begin{equation}
        \nabla_\phi^2 [\frac{\xi}{\sigma}\cdot s_p(x')]
        = \nabla_\phi^2[s_p(x')] \frac{\xi}{\sigma} + \nabla_\phi^2[\frac{\xi}{\sigma}]s_p(x') + \nabla_\phi[s_p(x')]\otimes \nabla_\phi[\frac{\xi}{\sigma}] + \nabla_\phi[\frac{\xi}{\sigma}] \otimes\nabla_\phi[s_p(x')],
    \end{equation}
    it holds that
    \begin{equation}
        \begin{aligned}
            \|\nabla_\phi^2 [\frac{\xi}{\sigma}\cdot s_p(x')]\|
            &\lesssim \|\nabla_\phi^2[s_p(x')]\|\cdot\|\frac{\xi}{\sigma}\|+ \|\nabla_\phi^2[\frac{\xi}{\sigma}]\|\cdot \|s_p(x')\| + \|\nabla_\phi[\frac{\xi}{\sigma}]\| \cdot\|\nabla_\phi[s_p(x')]\| \\
            &\lesssim \left[M\|\nabla_\phi x'\|^2+L\|\nabla_\phi^2 x'\|\right]\cdot L^{1/2}\|\xi\| + (1\vee L)^{3/2}G^2(1+\|z\|)^2\|\xi\|_\infty\cdot\|s_p(x')\| \\
            &\qquad + L\|\nabla_\phi x'\| \cdot LG(1+\|z\|)\|\xi\|_\infty,
        \end{aligned}
    \end{equation}
    and thus
    \begin{equation}
        \begin{aligned}
            \E \left[k(x,x')\|\nabla_\phi^2 [\frac{\xi}{\sigma}\cdot s_p(x')]\|\right]
            &\lesssim B^2 G(L\sqrt{d_z\log d}+GMd_z\log d)L^{1/2}d^{1/2} \\
            &\qquad + B^2G^2(1\vee L)^{3/2}d_z\log^{1/2} d\sqrt{\E \|s_p(x)\|^2} + B^2G^2L^2d_z\log d.
        \end{aligned}
    \end{equation}
    Lastly, we have
    \begin{equation}
        \nabla_\phi^2[\frac{\xi}{\sigma}\cdot \frac{\xi'}{\sigma'}] = \nabla_\phi^2[\frac{\xi}{\sigma}]\frac{\xi'}{\sigma'} + \nabla_\phi^2[\frac{\xi'}{\sigma'}]\frac{\xi}{\sigma} + \nabla_\phi[\frac{\xi'}{\sigma'}]\otimes \nabla_\phi[\frac{\xi}{\sigma}] + \nabla_\phi[\frac{\xi}{\sigma}] \otimes\nabla_\phi[\frac{\xi'}{\sigma'}].
    \end{equation}
    And
    \begin{equation}
        \begin{aligned}
            \E \left[k(x,x')\|\nabla_\phi^2[\frac{\xi}{\sigma}\cdot \frac{\xi'}{\sigma'}]\|\right]
            &\lesssim B^2\E \bigg[(1\vee L)^{3/2}G^2(1+\|z\|)^2\|\xi\|_\infty\cdot L^{1/2}\|\xi'\| \\
            &\qquad + \|\xi\|_\infty LG(1+\|z\|) \cdot \|\xi'\|_\infty LG(1+\|z'\|) \bigg] \\
            &\lesssim B^2 \left[(1\vee L)^{3/2}L^{1/2}G^2d_zd^{1/2}\log^{1/2}d + L^2G^2d_z\log d\right] \\
            &\lesssim B^2 (1\vee L)^{3/2}L^{1/2}G^2d_zd^{1/2}\log^{1/2} d.
        \end{aligned}
    \end{equation}
    Therefore, we get
    \begin{equation}
        \begin{aligned}
            \E[\|\cirthree\|]
            &\lesssim B^2G^2d_z\log d\left((1\vee L)^{3/2}+M\right)\sqrt{\E \|s_p(x)\|^2+Ld}
        \end{aligned}
    \end{equation}
    Combining $\cirone,\cirtwo,\cirthree$, we can conclude that 
    \begin{equation}
        \|\nabla_\phi^2\mathcal{L}\|\lesssim B^2G^2d_z\log d\left[(1\vee L)^3+Ld+ M^2 + \E \|s_p(x)\|^2\right].
    \end{equation}
\end{proof}

\begin{theorem}
    Both gradient estimators $\hat{g}_{\textrm{vanilla}}$ and $\hat{g}_{\textrm{u\text{-}stat}}$ have bounded variance:
    \begin{equation}
        \Var (\hat{g})\lesssim \frac{B^4G^2d_z\log d[L^3d+L^2d^2+\E \|s_p(x)\|^4]}{N}.
    \end{equation}
\end{theorem}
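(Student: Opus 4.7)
The plan is to reduce the variance bound to a bound on the second moment of the per-pair gradient contribution $h(\theta_1,\theta_2)=\nabla_\phi[k(x_1,x_2)\langle f_1,f_2\rangle]$, and then expand $h$ via the product rule and bound each factor using Assumptions \ref{asp:kernel}--\ref{asp:moment} combined with the score moment bound from Proposition \ref{prop:score_bound}. The key observation is that, by the exact variance decomposition already established in the preceding proposition,
\begin{equation*}
\Var_N(\hat g)\ \lesssim\ \frac{\zeta_1}{N}+\frac{\zeta_2}{N^2}\ \lesssim\ \frac{\zeta_2}{N},
\end{equation*}
for both the vanilla and the U-statistic estimators, since $\zeta_1\le\zeta_2$ by the law of total variance. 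Hence it suffices to bound $\zeta_2\le\E\|h(\theta_1,\theta_2)\|^2$.

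To bound $\E\|h\|^2$, I would expand
\begin{equation*}
h=(\nabla_\phi k(x_1,x_2))\langle f_1,f_2\rangle + k(x_1,x_2)\,\nabla_\phi\langle f_1,f_2\rangle,
\end{equation*}
apply $\|a+b\|^2\le2\|a\|^2+2\|b\|^2$, and bound the two pieces separately. For the first piece, the chain rule gives $\nabla_\phi k=\nabla_\phi x_1\,\nabla_1 k+\nabla_\phi x_2\,\nabla_2 k$, which by Assumption \ref{asp:nn} and \ref{asp:kernel} yields $\|\nabla_\phi k\|\lesssim B^2 G[(1+\|z_1\|)(1+\|\xi_1\|_\infty)+(1+\|z_2\|)(1+\|\xi_2\|_\infty)]$; then Cauchy–Schwarz gives $|\langle f_1,f_2\rangle|^2\le\|f_1\|^2\|f_2\|^2$, and since $f_1\perp f_2$ the expectation factorizes into moments of $\|z\|,\|\xi\|_\infty,\|f\|$. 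For the second piece, a further product rule gives $\nabla_\phi f_r=\nabla^2\log p(x_r)\nabla_\phi x_r-\diag\{\xi_r/\sigma_r^2\}\nabla_\phi\sigma_r$, which by Assumptions \ref{asp:target}, \ref{asp:nn} and the lower bound $\sigma\ge1/\sqrt L$ satisfies $\|\nabla_\phi f\|\lesssim LG(1+\|z\|)(1+\|\xi\|_\infty)$.

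The moment computations then use standard facts: Assumption \ref{asp:moment} gives $\E(1+\|z\|)^4\lesssim d_z^2$, Gaussian maxima give $\E\|\xi\|_\infty^4\lesssim\log^2 d$ and $\E\|\xi\|^4\lesssim d^2$, and Proposition \ref{prop:score_bound} controls $\E\|s_p(x)\|^4$ uniformly in $\phi$. Combining these, $\E\|f\|^4\lesssim\E\|s_p(x)\|^4+L^2 d^2$, and assembling the two pieces yields an overall bound of order $B^4G^2 d_z\log d\,[L^3 d+L^2 d^2+\E\|s_p(x)\|^4]$, which matches the claimed $\Sigma_0$.

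The main obstacle is bookkeeping rather than any deep idea: there are a large number of cross-terms after the two product-rule expansions, and each factor must be expanded via the chain rule through both $x=\mu(z;\phi)+\sigma(z;\phi)\odot\xi$ and $f=s_p(x)+\xi/\sigma$, producing mixed moments of $\|z\|,\|\xi\|_\infty,\|\xi\|$, and $\|s_p(x)\|$. Care is required to route these through Cauchy–Schwarz so that every 4th-order moment of $z$ or $\xi$ is absorbed into the $d_z\log d$ factor and the $L^2 d^2$ term matches the Gaussian noise contribution, rather than picking up spurious higher powers of $L$ or $d$. A secondary technical point is invoking the independence of $\theta_1$ and $\theta_2$ at the right moment to split $\E\|f_1\|^2\|f_2\|^2$ into $(\E\|f\|^2)^2$, which avoids needing 8th-order moments of the score.
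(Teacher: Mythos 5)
Your proposal is correct and follows essentially the same route as the paper: reduce to bounding the single-pair second moment $\E\|h(\theta_1,\theta_2)\|^2$, split $h$ via the product rule into $(\nabla_\phi k)\langle f_1,f_2\rangle$ and $k\,\nabla_\phi\langle f_1,f_2\rangle$, and bound each piece using the chain-rule expansions of $\nabla_\phi k$ and $\nabla_\phi f_r$ together with Assumptions \ref{asp:kernel}--\ref{asp:moment}, Proposition \ref{prop:score_bound}, Gaussian moment facts, and independence of $\theta_1,\theta_2$. The only cosmetic difference is that you spell out the reduction $\Var_N(\hat g)\lesssim \zeta_1/N+\zeta_2/N^2\lesssim\zeta_2/N\le\E\|h\|^2/N$ via the variance decomposition and $\zeta_1\le\zeta_2$, whereas the paper compresses this to the remark that it suffices to bound the two-sample variance; the substantive estimates are the same.
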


\begin{proof}
    It is sufficient to look at the upper bound of variance with two samples.
    Note that 
    \begin{equation}
        \begin{aligned}
            \nabla_\phi \left[k(x,x')(s_p(x)+\xi/\sigma)^\top(s_p(x')+\xi'/\sigma') \right]
            &= \underbrace{\nabla_\phi [k(x,x')](s_p(x)+\xi/\sigma)^\top(s_p(x')+\xi'/\sigma')}_{\cirone} \\
            &\quad + \underbrace{k(x,x')\nabla_\phi\left[(s_p(x)+\xi/\sigma)^\top(s_p(x')+\xi'/\sigma')\right]}_{\cirtwo}.
        \end{aligned}
    \end{equation}

    For $\cirone$, we again utilize the fact that
    \begin{equation}
        \|\nabla_\phi k(x,x')\| \lesssim B^2G[(1+\|z\|)(1+\|\xi\|_\infty) + (1+\|z'\|)(1+\|\xi'\|_\infty)],
    \end{equation}
    and thus
    \begin{equation}
        \begin{aligned}
            \E [\|\cirone\|^2] 
            &\lesssim \E[\|\nabla_\phi k\|^2 \|s_p(x)+\xi/\sigma\|^2\|s_p(x')+\xi'/\sigma'\|^2] \\
            &\lesssim B^4G^2d_z\log d[\E\|s_p(x)+\xi/\sigma\|^4]^{1/2} \E\|s_p(x)+\xi/\sigma\|^2\\
            &\lesssim B^4G^2d_z\log d \left[\sqrt{\E\|s_p(x)\|^4}+Ld\right]\E\|s_p(x)+\xi/\sigma\|^2.
        \end{aligned}
    \end{equation}

    For $\cirtwo$, we have 
    \begin{equation}
        \begin{aligned}
            \nabla_\phi \left[(s_p(x)+\xi/\sigma)^\top(s_p(x')+\xi'/\sigma')\right]
            &= \left[\nabla^2\log p(x)\nabla_\phi x-\diag\{\xi/\sigma^2\}\nabla_\phi\sigma\right]^\top(s_p(x')+\xi'/\sigma') \\
            &\quad + \left[\nabla^2\log p(x')\nabla_\phi x'-\diag\{\xi'/\sigma'^2\}\nabla_\phi\sigma'\right]^\top(s_p(x)+\xi/\sigma).
        \end{aligned}
    \end{equation}
    Hence
    \begin{equation}
        \begin{aligned}
            \E [\|\cirtwo\|^2]
            &\lesssim B^4\E\left[\left\|\left[\nabla^2\log p(x)\nabla_\phi x-\diag\{\xi/\sigma^2\}\nabla_\phi\sigma\right]^\top(s_p(x')+\xi'/\sigma')\right\|^2\right] \\
            &\lesssim B^4 L^2G^2\E [(1+\|z\|)^2(1+\|\xi\|_\infty)^2\|s_p(x')+\xi'/\sigma'\|^2] \\
            &\lesssim B^4L^2G^2d_z\log d \left[\E\|s_p(x)+\xi/\sigma\|^2\right].
        \end{aligned}
    \end{equation}
    Therefore, combining $\cirone, \cirtwo$, we conclude that
    \begin{equation}
        \begin{aligned}
            &\E \left[\left\|\nabla_\phi \left[k(x,x')(s_p(x)+\xi/\sigma)^\top(s_p(x')+\xi'/\sigma') \right]\right\|^2\right] \\
            &\qquad\qquad \lesssim B^4G^2d_z\log d\left[Ld+L^2+\sqrt{\E \|s_p(x)\|^4}\right]\left[\E\|s_p(x)+\xi/\sigma\|^2\right],
        \end{aligned}
    \end{equation}
    which implies
    \begin{equation}
        \begin{aligned}
            \Var (\hat{g})
            &\lesssim \frac{1}{N}\Var\left(\left[k(x,x')(s_p(x)+\xi/\sigma)^\top(s_p(x')+\xi'/\sigma') \right]\right)\\
            &\lesssim \frac{B^4G^2d_z\log d[L^3d+L^2d^2+\E \|s_p(x)\|^4]}{N}.
        \end{aligned}
    \end{equation}
\end{proof}

\begin{theorem}
Under Assumption \ref{asp:kernel}-\ref{asp:moment}, the iteration sequence generated by SGD $\phi_{t+1}=\phi_t-\eta \hat{g}_t$ with proper learning rate $\eta$ includes an $\varepsilon$-stationary point $\hat{\phi}$ such that $\E [\|\nabla_\phi\mathcal{L}(\hat{\phi})\|]\leq\varepsilon$, if 
\begin{equation}
    T\gtrsim \frac{L_\phi\mathcal{L}_0}{\varepsilon^2}\left(1+\frac{\Sigma_0}{N\varepsilon^2}\right).
\end{equation}
Here $\mathcal{L}_0:=\mathcal{L}(\phi_0)-\inf_{\phi}\mathcal{L}$. 
\end{theorem}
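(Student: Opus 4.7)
The plan is to reduce Theorem \ref{thm:sgd} to the standard nonconvex SGD convergence framework of \citet{ghadimi2013stochastic}, since Theorems \ref{thm:smooth} and \ref{thm:var} already supply the two inputs this framework demands, namely $L_\phi$-smoothness of $\mathcal{L}$ and an unbiased stochastic gradient $\hat{g}_t$ with variance at most $\Sigma_0/N$. Essentially all of the work specific to KSIVI has been absorbed into those two results; what remains is a short, classical argument.

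First, I would invoke the smoothness descent inequality $\mathcal{L}(\phi_{t+1}) \leq \mathcal{L}(\phi_t) + \langle \nabla \mathcal{L}(\phi_t), \phi_{t+1} - \phi_t\rangle + \tfrac{L_\phi}{2}\|\phi_{t+1} - \phi_t\|^2$ and substitute the SGD update $\phi_{t+1} = \phi_t - \eta \hat{g}_t$. Taking conditional expectation with respect to the filtration $\mathcal{F}_t$ and using unbiasedness together with the bias-variance identity $\E[\|\hat{g}_t\|^2 \mid \mathcal{F}_t] = \|\nabla \mathcal{L}(\phi_t)\|^2 + \Var(\hat{g}_t \mid \mathcal{F}_t)$, combined with the variance bound from Theorem \ref{thm:var}, gives
\begin{equation*}
    \E[\mathcal{L}(\phi_{t+1}) \mid \mathcal{F}_t] \leq \mathcal{L}(\phi_t) - \eta\left(1 - \tfrac{L_\phi \eta}{2}\right)\|\nabla \mathcal{L}(\phi_t)\|^2 + \tfrac{L_\phi \eta^2 \Sigma_0}{2N}.
\end{equation*}
Choosing $\eta \leq 1/L_\phi$ makes the prefactor at least $\eta/2$. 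Taking total expectation, telescoping from $t=0$ to $T-1$, and using $\inf_\phi \mathcal{L} \leq \mathcal{L}(\phi_T)$ yields
\begin{equation*}
    \frac{1}{T}\sum_{t=0}^{T-1} \E\|\nabla \mathcal{L}(\phi_t)\|^2 \leq \frac{2 \mathcal{L}_0}{\eta T} + \frac{L_\phi \eta \Sigma_0}{N}.
\end{equation*}

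Second, I would output $\hat{\phi}$ as either a uniform random draw from $\{\phi_0,\ldots,\phi_{T-1}\}$ or an $\argmin$ over the iterates, so that $\E\|\nabla \mathcal{L}(\hat{\phi})\|^2$ is controlled by the same right-hand side. Applying Jensen's inequality $\E\|\nabla \mathcal{L}(\hat{\phi})\| \leq \sqrt{\E\|\nabla \mathcal{L}(\hat{\phi})\|^2}$ converts this into the required $\varepsilon$-stationarity bound. I would then tune $\eta = \min\{1/L_\phi,\ \sqrt{2\mathcal{L}_0 N / (L_\phi T \Sigma_0)}\}$ so that the two terms on the right are balanced, producing the bound $\E\|\nabla \mathcal{L}(\hat{\phi})\| \lesssim \sqrt{L_\phi \mathcal{L}_0 / T} + (L_\phi \mathcal{L}_0 \Sigma_0/(NT))^{1/4}$. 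Setting each contribution to be at most $\varepsilon/2$ and solving for $T$ yields the stated iteration complexity $T \gtrsim (L_\phi \mathcal{L}_0/\varepsilon^2)(1 + \Sigma_0/(N\varepsilon^2))$.

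There is no serious obstacle at this stage: the nontrivial work, namely establishing smoothness under the nonlinear reparameterization and bounding the U-statistic / vanilla gradient variance with explicit dependence on $B$, $G$, $L$, $M$, $d$, $d_z$, and $C$, has already been done in Theorems \ref{thm:smooth} and \ref{thm:var}. The only minor points to handle carefully are ensuring the variance bound truly holds per iteration (which follows since each $\hat{g}_t$ uses a fresh mini-batch of $N$ samples from $q_{\phi_t}(x,z)$ and the bound in Theorem \ref{thm:var} is uniform in $\phi$ under Assumptions \ref{asp:kernel}--\ref{asp:moment}), and the conversion from a squared-norm guarantee to a norm guarantee via Jensen, which costs only a constant factor absorbed into $\lesssim$.
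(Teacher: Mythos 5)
Your proposal is correct and follows essentially the same route as the paper: both apply the $L_\phi$-smoothness descent inequality to the SGD update, take expectations and telescope to bound $\frac{1}{T}\sum_t \E\|\nabla\mathcal{L}(\phi_t)\|^2$ by $\mathcal{O}(\mathcal{L}_0/(\eta T) + \eta L_\phi \Sigma_0/N)$, then tune $\eta \asymp \min\{1/L_\phi, \sqrt{N\mathcal{L}_0/(L_\phi\Sigma_0 T)}\}$ and solve for $T$. Your explicit mention of selecting $\hat{\phi}$ uniformly at random from the iterates (or by argmin) and the Jensen step from $\E\|\nabla\mathcal{L}\|^2$ to $\E\|\nabla\mathcal{L}\|$ makes explicit what the paper leaves implicit, but the argument is the same.
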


\begin{proof}
    Since $\mathcal{L}(\cdot)$ is $L_\phi$-smooth and $\hat{g}_t$ is unbiased with $\Var(\hat{g}_t)\leq \frac{\Sigma_0}{N}$, we have
    \begin{equation}
        \begin{aligned}
            \E_t\mathcal{L}(\phi_{t+1}) 
            &\leq E_t \left[\mathcal{L}(\phi_t) + \left\langle \nabla_\phi \mathcal{L}(\phi_t), \phi_{t+1}-\phi_t \right\rangle + \frac{L_\phi}{2}\|\phi_{t+1}-\phi_t\|^2\right] \\
            &\leq \mathcal{L}(\phi_t) - (\eta-\frac{\eta^2L_\phi}{2}) \|\nabla_\phi \mathcal{L}(\phi_t)\|^2 + \frac{\eta^2L_\phi\Sigma_0}{2N}.
        \end{aligned}
    \end{equation}
    Let $\eta\leq 1/L_\phi$.
    Take full expectation and sum from $0$ to $T-1$.
    \begin{equation}
        \begin{aligned}
            \frac{1}{T}\sum_{t=0}^{T-1}\E[\|\nabla_\phi\mathcal{L}(\phi_t)\|^2]
            &\lesssim \frac{\mathcal{L}(\phi_0)-\E \mathcal{L}(\phi_T)}{\eta T} + \frac{\eta L_\phi\Sigma_0}{N}\\
            &\lesssim \sqrt{\frac{\Sigma_0 L_\phi \mathcal{L}_0}{NT}} + \frac{L_\phi \mathcal{L}_0}{T}.
        \end{aligned}
    \end{equation}
    In the last inequality, we plug optimal $\eta\asymp \min\left\{1/L_\phi, \sqrt{\frac{N\mathcal{L}_0}{L_\phi \sigma_0T}}\right\}$.
    We conclude the proof by substituting the RHS with $\varepsilon^2$ and solving $T$.
\end{proof}

\subsection{Proofs in Section \ref{subsec:theory_stats}}\label{app:subsec:proof_stats}

\begin{proof}[Theorem \ref{thm: sample_complexity}]
    Let $\zeta_i=(z_{1i},\xi_{1i},z_{2i},\xi_{2i})$.
    For any $\phi\in\Phi$, define
    \begin{equation}
        \ell(\zeta;\phi)=k(x_1,x_2)\cdot(s_p(x_1)+\frac{\xi_1}{\sigma(z_1;\phi)})^\top(s_p(x_2)+\frac{\xi_2}{\sigma(z_2;\phi)})
    \end{equation}
    Consider the truncated function class defined on $\R^{d_z+d}\times\R^{d_z+d}$,
    \begin{equation}
        \Xi=\left\{\zeta\mapsto \Tilde{\ell}(\zeta;\phi):=\ell(\zeta;\phi)\cdot\mathbbm{1}_{\|\zeta\|_\infty\leq R}: \phi\in\Phi\right\}
    \end{equation}
    where the truncation radius $R\geq 1$ will be defined later. By \cref{asp:z_sub_gaussian} and noticing that $\xi_{ri}\sim\mathcal{N}(0,I)$, it is easy to show that with probability no less than $1-4n(d_z+d)\exp(-CR^2)$, $\|\zeta_i\|_\infty\leq R$ for $1\leq i\leq n$.
    Hence by definition, the empirical minimizer also satisfies $\hat{\phi}=\argmin_{\phi\in\Phi}\frac{1}{n}\sum_{i=1}^n\Tilde{\ell}(\zeta_i;\phi)$.
    Below we reason conditioned on this event.
    \begin{enumerate}[label=\textbf{Step \arabic*.}]
        \item To bound the individual loss,
        \begin{equation}
            \begin{aligned}
            |\Tilde{\ell}(\xi;\phi)|
            &\leq B^2\|s_p(x_1)+\frac{\xi_1}{\sigma(z_1;\phi)}\|\cdot\|s_p(x_2)+\frac{\xi_2}{\sigma(z_2;\phi)}\|\\
            &\leq 4B^2(LdR^2+L^2(1+\|x_1\|)^2)\\
            &\lesssim B^2(LdR^2+L^2(1+J^2R^2))=:B^2B_0^2. 
            \end{aligned}
        \end{equation}
        \item To bound the Rademacher complexity, note that for $\phi,\phi'\in\Phi$,
        \begin{equation}
            \Big\|\frac{1}{\sqrt{n}}\sum_{i=1}^n\epsilon_i\Tilde{\ell}(\zeta_i;\phi)-\frac{1}{\sqrt{n}}\sum_{i=1}^n\epsilon_i\Tilde{\ell}(\zeta_i;\phi')\Big\|_{\psi_2}\leq 4\|\Tilde{\ell}(\cdot;\phi)-\Tilde{\ell}(\cdot;\phi')\|_{L^2(\widehat{\mathbb{P}}_n)},
        \end{equation}
         where $\widehat{\mathbb{P}}_n:=\frac{1}{n}\sum_{i=1}^n\delta_{\zeta_i}$.
         It is easy to show that $\textbf{diam}\big(\Xi,\|\cdot\|_{L^2(\widehat{\mathbb{P}}_n)}\big)\leq 2BB_0$. By Dudley's bound \citep{van2014probability,wainwright2019} for any $\alpha>0$, 
         \begin{equation}\label{eq:rademacher}
             \mathcal{R}_n(\Xi)\lesssim \alpha+\int_\alpha^{2BB_0} \sqrt{\frac{\log\mathcal{N}(\Xi,L^2(\widehat{\mathbb{P}}_n),\varepsilon)}{n}}\dif \varepsilon
         \end{equation}
         Since $\|\zeta_i\|_\infty\leq R$,
         \begin{equation}
             \begin{aligned}
                 \frac{1}{n}\sum_{i=1}^n(\Tilde{\ell}(\zeta_i;\phi)-\Tilde{\ell}(\zeta_i;\phi'))^2
                 &=\frac{1}{n}\sum_{i=1}^n(\ell(\zeta_i;\phi)-\ell(\zeta_i;\phi'))^2. 
             \end{aligned}
        \end{equation}
        Recall that $x_{r}=\mu(z_{r};\phi)+\sigma(z_{r};\phi)\odot\xi_{r}$ for $r=1,2$ and by \cref{asp:target}, \ref{asp:nn},
        \begin{equation}
            \begin{aligned}
                 &\qquad \nabla_\mu \left\|[k(x_1,x_2)\cdot(s_p(x_1)+\frac{\xi_1}{\sigma(z_1;\phi)})^\top(s_p(x_2)+\frac{\xi_2}{\sigma(z_2;\phi)})]\right\|\\
                 &\lesssim (\|\nabla_1 k\|+\|\nabla_2 k\|)\cdot\|(s_p(x_1)+\frac{\xi_1}{\sigma(z_1;\phi)})^\top(s_p(x_2)+\frac{\xi_2}{\sigma(z_2;\phi)})\| \\
                 &\qquad + B^2L\|s_p(x_1)+\frac{\xi_1}{\sigma(z_1;\phi)}\|+B^2L\|s_p(x_2)+\frac{\xi_2}{\sigma(z_2;\phi)}\| \\
                 &\lesssim B^2(B_0^2+LB_0),
            \end{aligned}
        \end{equation}
        \begin{equation}
            \begin{aligned}
                 &\qquad \nabla_\sigma \left\|[k(x_1,x_2)\cdot(s_p(x_1)+\frac{\xi_1}{\sigma(z_1;\phi)})^\top(s_p(x_2)+\frac{\xi_2}{\sigma(z_2;\phi)})]\right\|\\
                 &\lesssim (\|\nabla_1 k\|\cdot\|\xi_1\|_\infty+\|\nabla_2 k\|\cdot\|\xi_2\|_\infty)\cdot\|(s_p(x_1)+\frac{\xi_1}{\sigma(z_1;\phi)})^\top(s_p(x_2)+\frac{\xi_2}{\sigma(z_2;\phi)})\| \\
                 &\qquad + B^2L\|\xi_1\|_\infty\cdot\|s_p(x_1)+\frac{\xi_1}{\sigma(z_1;\phi)}\|+B^2L\|\xi_2\|_\infty\cdot\|s_p(x_2)+\frac{\xi_2}{\sigma(z_2;\phi)}\| \\
                 &\lesssim B^2(B_0^2+LB_0)R,
            \end{aligned}
        \end{equation}
        According to \cref{asp:nn}, the smoothness of neural network w.r.t. $\phi$ indicates that
        \begin{equation}
            \frac{1}{n}\sum_{i=1}^n(\Tilde{\ell}(\zeta_i;\phi)-\Tilde{\ell}(\zeta_i;\phi'))^2\lesssim B^4(B_0^2+LB_0)^2R^2\cdot G^2J^2dR^4\cdot\|\phi-\phi'\|^2=C_0^2\|\phi-\phi'\|^2.
        \end{equation}
        Therefore,
        \begin{equation}
            \log\mathcal{N}(\Xi,L^2(\widehat{\mathbb{P}}_n),\varepsilon)\leq \log\mathcal{N}(\Phi,\|\cdot\|,\varepsilon/C_0).
        \end{equation}
        Combining this with \cref{eq:rademacher} using $\alpha=C_0/n$, we have
        \begin{equation}
            \mathcal{R}_n(\Xi)\lesssim BB_0\sqrt{\frac{\log\mathcal{N}(\Phi,\|\cdot\|,1/n)}{n}}+\frac{C_0}{n}.
        \end{equation}
        \item Finally, according to \citet[Prop. 4.10]{wainwright2019}, it holds with probability no less than $1-2\exp(-\frac{n\varepsilon_0^2}{2B^4B_0^4})$ that for any $\phi\in\Phi$,
        \begin{equation}
            \Big|\frac{1}{n}\sum_{i=1}^n\Tilde{\ell}(\zeta_i;\phi)-\E_\zeta[\Tilde{\ell}(\zeta;\phi)]\Big|\leq 2\mathcal{R}_n(\Xi)+\varepsilon_0.
        \end{equation}
        In addition, the truncated loss 
        \begin{equation}
            \begin{aligned}
                \Big|\E_\zeta[\Tilde{\ell}(\zeta;\phi)]-
                \E_\zeta[\ell(\zeta;\phi)]\Big|
                &\leq \Big|\E_\zeta[\ell(\zeta;\phi)\mathbbm{1}_{\|\xi\|_\infty\geq R}]\Big| \\
                &\lesssim \Big|\E_\zeta [B^2(Ld+L^2J^2)\|\zeta\|_\infty^2\mathbbm{1}_{\|\xi\|_\infty\geq R}]\Big|\\
                &\leq C_1B^2(Ld+L^2J^2)\exp(-CR^2).
            \end{aligned}
        \end{equation}
        Combining the statements above, we have with probability at least $1-2\exp(-\frac{n\varepsilon_0^2}{2B^4B_0^4})-4n(d_z+d)\exp(-CR^2)$,
        \begin{equation}
            \begin{aligned}
                \mathcal{L}(\hat{\phi})=\E_\zeta[\ell(\zeta;\hat{\phi})]
                &\leq \hat{\mathcal{L}}(\hat{\phi})+2\mathcal{R}_n(\Xi)+\varepsilon_0+C_1B^2(Ld+L^2J^2)\exp(-CR^2)\\
                &\leq \hat{\mathcal{L}}(\phi^*)+2\mathcal{R}_n(\Xi)+\varepsilon_0+C_1B^2(Ld+L^2J^2)\exp(-CR^2)\\
                &\leq \mathcal{L}(\phi^*)+4\mathcal{R}_n(\Xi)+2\varepsilon_0+2C_1B^2(Ld+L^2J^2)\exp(-CR^2).
            \end{aligned}
        \end{equation}
        Here $\phi^*=\argmin_{\phi\in\Phi}\mathcal{L}(\phi)$.
        Let $R=C'\log^{\frac{1}{2}}(\frac{n(d_z+d)}{\delta}), \varepsilon_0=2B^2B_0^2\sqrt{\frac{\log(1/\delta)}{n}}$. We finally conclude that with probability at least $1-\delta$,
        \begin{equation}
            \mathcal{L}(\hat{\phi})-\min_{\phi\in\Phi}\mathcal{L}(\phi)\lesssim B^2L^2J^2\left[ \log(\frac{nd}{\delta})\sqrt{\frac{\log\mathcal{N}(\Phi,\|\cdot\|,\frac{1}{n})+\log{\frac{1}{\delta}}}{n}}+\frac{JGd^{\frac{1}{2}}\log^\frac{5}{2}(\frac{nd}{\delta})}{n}\right].
        \end{equation}
    \end{enumerate}
\end{proof}

\subsection{Justification for Assumption \ref{asp:nn}}\label{app:subsec:asp}

In all our experiments, the variance $\sigma(\cdot;\phi)$ is a constant vector $\sigma\in \R^d$, rather than a neural network.
Therefore the smoothness assumption for $\sigma(\cdot;\phi)$ is valid for $G=1$.
As for $\mu(\cdot;\phi)$, we compute $\|\nabla_\phi \mu(z;\phi)\|$ and $\|\nabla^2_\phi \mu(z;\phi)\|$ for randomly sampled $z$ and $\phi$ in the training process.
Note that for simplicity, we compute the Frobeneious norm and estimate the Lipschitz constant of Jacobian $\nabla_\phi \mu(z;\phi)$, which are larger than the operator norms. 
We plot the result in BLR experiment in Figure \ref{fig:smooth_mu} and \ref{fig:smooth_mu_u_stats}, showing that $G$ remains within a certain range during the early, middle, and late stages of training.
Therefore, Assumption \ref{asp:nn} is reasonable.

\begin{figure}[ht]
    \centering
    \subfigure{
        \begin{minipage}{0.45\linewidth}
            \centering
            \includegraphics[width=1\linewidth]{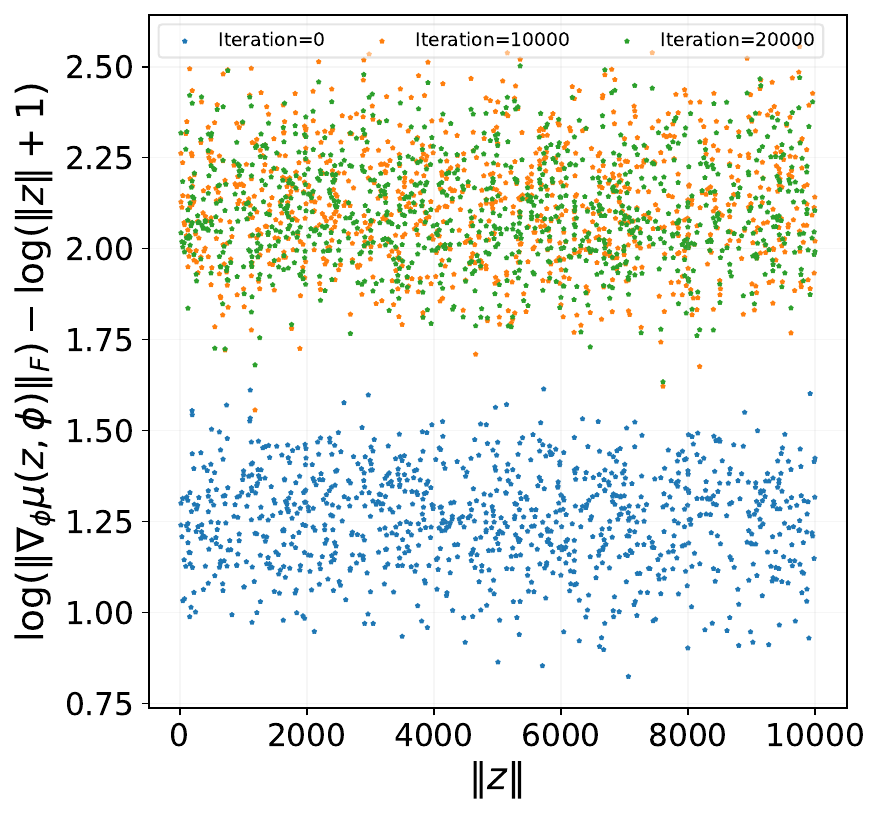}
        \end{minipage}
    }
    \hfill
    \subfigure{
        \begin{minipage}{0.45\linewidth}
            \centering
            \includegraphics[width=1\linewidth]{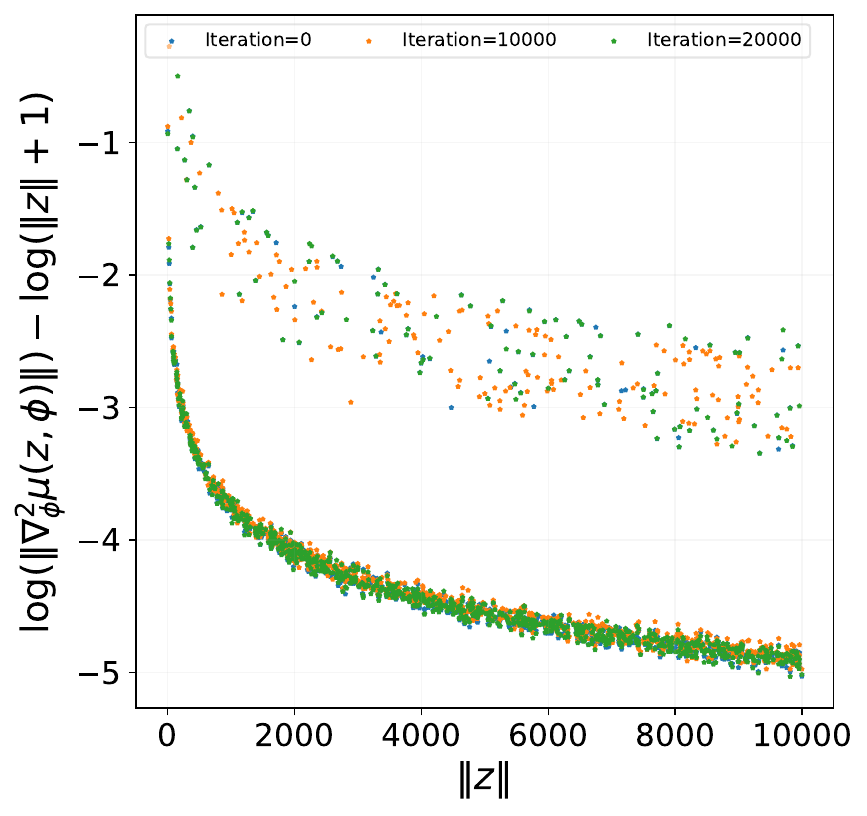}
        \end{minipage}
    }
    \centering
    \caption{Smoothness of neural network $\mu(\cdot;\phi)$ in KSIVI ($\hat{g}_{\textrm{vanilla}}$) trajectory.}
    \label{fig:smooth_mu}
\end{figure}

\begin{figure}[ht]
    \centering
    \subfigure{
        \begin{minipage}{0.45\linewidth}
            \centering
            \includegraphics[width=1\linewidth]{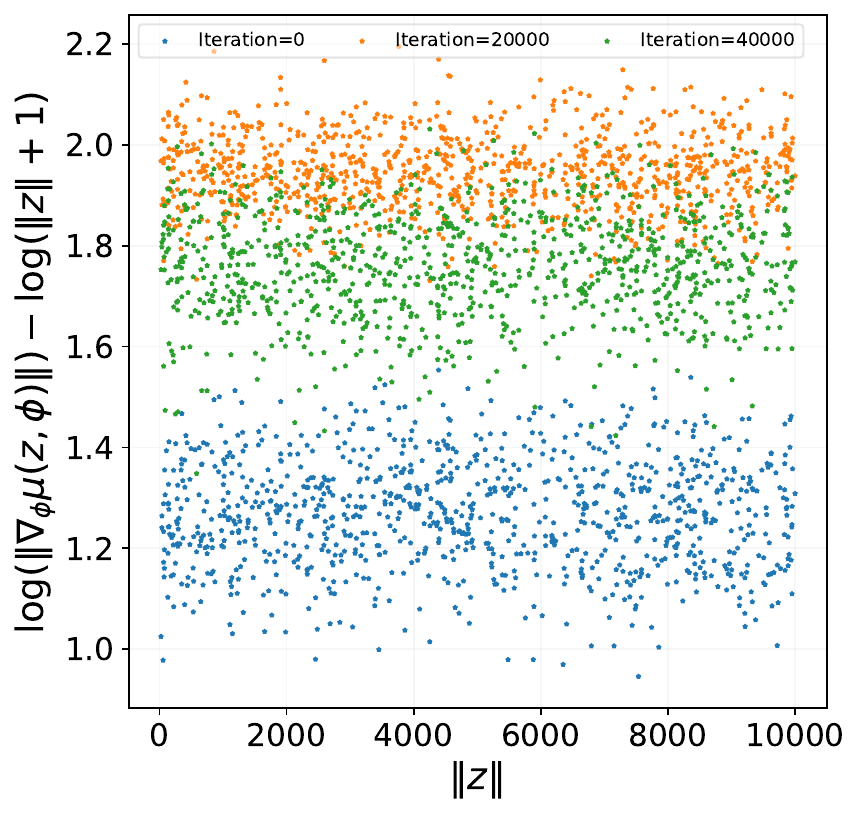}
        \end{minipage}
    }
    \hfill
    \subfigure{
        \begin{minipage}{0.45\linewidth}
            \centering
            \includegraphics[width=1\linewidth]{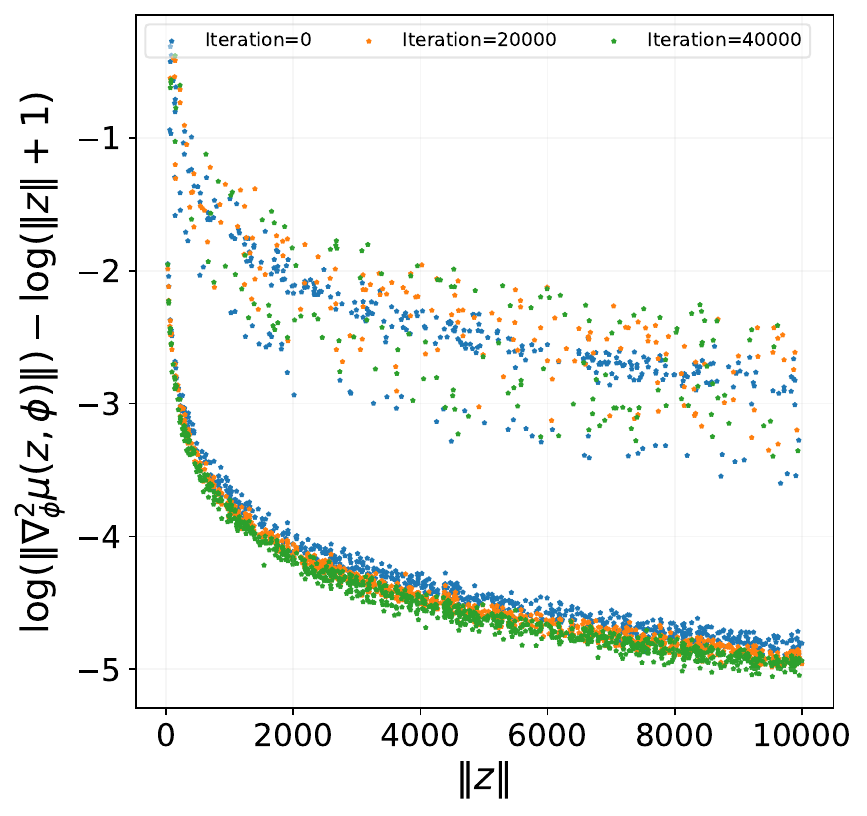}
        \end{minipage}
    }
    \centering
    \caption{Smoothness of neural network $\mu(\cdot;\phi)$ in KSIVI ($\hat{g}_{\textrm{u\text{-}stat}}$) trajectory.}
    \label{fig:smooth_mu_u_stats}
\end{figure}

\section{KSIVI Algorithm}\label{app:sec:alg}

\begin{algorithm}[H]
    \caption{KSIVI with diagonal Gaussian conditional layer and U-statistic gradient estimator}
    \label{alg:kernel_sivi}
    \begin{algorithmic}
        \STATE{{\bfseries Input:} target score $s_p(x)$, number of iterations $T$, number of samples $N$ for stochastic gradient.}
        \STATE{{\bfseries Output:} the optimal variational parameters $\phi^\ast$.}
        \FOR{$t=0, \cdots, T-1$}
            \STATE{
            Sample $\{z_1,\cdots, z_N\}$ from mixing distribution $q(z)$;
            }
            \STATE{
            Sample $\{\xi_1,\cdots, \xi_N\}$ from $\mathcal{N}(0,I)$;
            }
            \STATE{
            Compute $x_i =\mu(z_i;\phi)+\sigma(z_i;\phi)\odot \xi_i$ and $f_i=s_p(x_i)+\xi_i/\sigma(z_i;\phi)$;
            }
            \STATE{
            Compute the gradient estimate $\hat{g}_{\textrm{u-stat}}(\phi)$ through \cref{eq:sto_grad_u};
            }
            \STATE{
            Set $\phi\leftarrow\text{optimizer}(\phi, \hat{g}_{\textrm{u-stat}}(\phi))$;
            }
        \ENDFOR
        \STATE $\phi^\ast\leftarrow\phi$.
    \end{algorithmic}
\end{algorithm}

\section{Experiment Setting and Additional Results}\label{appendix:AddtionalExp}
\subsection{Toy Experiments}\label{appendix:Toy}
\paragraph{Setting details} In the three 2-D toy examples, we set the conditional layer to be $\mathcal{N}(\mu(z;\phi),\mathrm{diag}\{\phi_\sigma^2\}$. 
The $\mu(z;\phi)$ in SIVI-SM and KSIVI all have the same structures of multi-layer perceptrons (MLPs) with layer widths $[3, 50, 50, 2]$ and ReLU  activation functions. 
The initial value of $\phi_\sigma$ is set to $1$ except for banana distribution on which we use $\frac12$. 
We set the learning rate of variational parameters $\phi$ to 0.001 and the learning rate of $\psi$ in SIVI-SM to 0.002. 
In the lower-level optimization of SIVI-SM, $\psi$ is updated after each update of $\phi$.
\paragraph{Additional results} Figure \ref{figure:mmd_toy} depicts the descent of maximum mean discrepancy (MMD) during the training process of SIVI-SM and KSIVI.

\begin{figure}[t]
    \centering
    \includegraphics[width=0.95\linewidth]{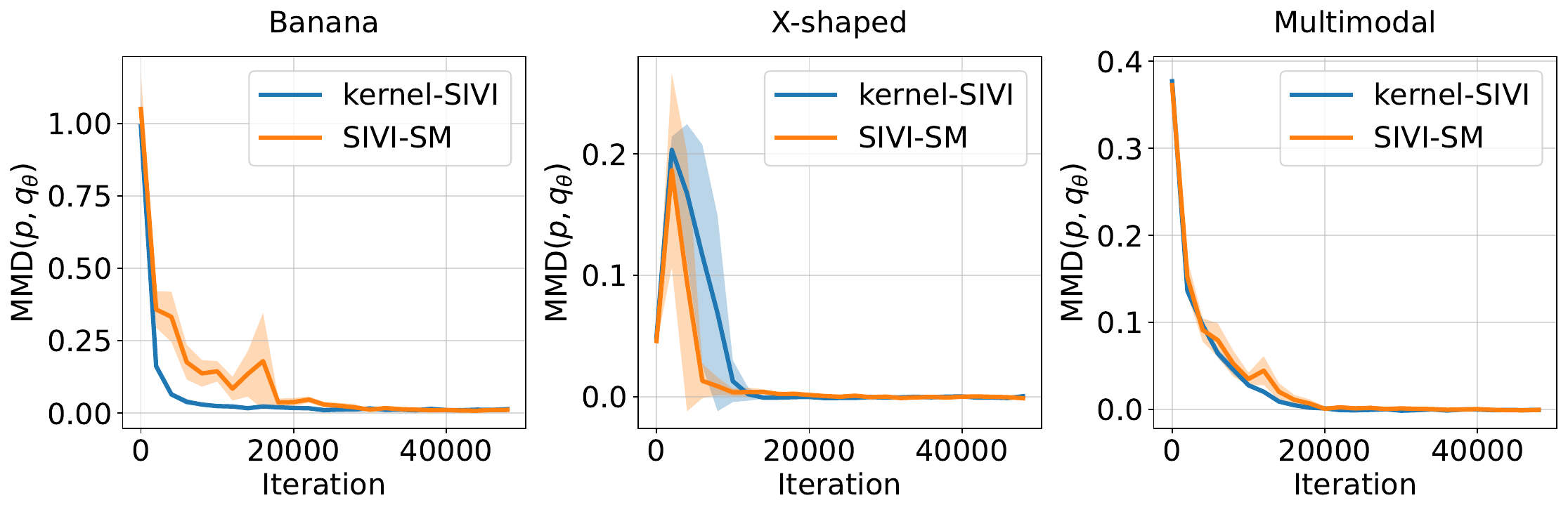}
    \caption{Convergence of MMD divergence of different methods on 2-D toy examples. 
    The MMD objectives are estimated using 1000 samples.
    The results are averaged over 5 independent computations with the standard deviation as the shaded region.
    }
    \label{figure:mmd_toy}
\end{figure}

\subsection{Bayesian Logistic Regression}\label{appendix:blr}
\paragraph{Setting details} For the experiment on Bayesian logistic regression, $\mu(z;\phi)$ for all the SIVI variants are parameterized as MLPs with layer widths $[10, 100, 100, 22]$ and ReLU activation functions. 
For SIVI-SM, the $f_\psi$ is three-layer MLPs with 256 hidden units and ReLU activation functions. 
The initial value of $\phi_\sigma^2$ is $e^{-5}$ for KSIVI and 1 for SIVI and SIVI-SM. 
These initial values are determined through grid search on the set $\{e^{0}, e^{-2}, e^{-5}\}$.
For SIVI, the learning rate of $\phi_\mu$ is set to 0.01, and the learning rate of $\phi_\sigma$ is set to 0.001, following the choices made in \citet{yin2018semi}. 
The learning rate for the variational parameters $\phi$, comprising $\phi_\mu$ and $\phi_\sigma$, is set to 0.001 for both SIVI and SIVI-SM.
The learning rate of $\psi$ for SIVI-SM is set to 0.002, and we update $\psi$ after each update of $\phi$.
For SIVI, the auxiliary sample size is set to $K = 100$.
For all methods, we train the variational distributions for 20,000 iterations with a batch size of 100.

\paragraph{Additional results} In line with Figure 5 in \citet{Domke18}, we provide the results of KSIVI using varying step sizes of $0.00001$, $0.0001$, $0.001$, $0.005$, and $0.008$.
The results in Table \ref{tab:LRwaveform_wd} indicate that KSIVI demonstrates more consistent convergence behavior across different step sizes.
Figure \ref{figure:LRwaveform_density_dim_1_6} shows the marginal and pairwise joint density of the well-trained posterior $q_\phi(\beta_1, \beta_2, \cdots, \beta_6)$ for SIVI, SIVI-SM, KSIVI with vanilla gradient estimator and KSIVI with U-statistic gradient estimator. 
Figure \ref{figure:LRwaveform_corr_u} presents the results of the estimated pairwise correlation coefficients by KSIVI using the U-statistic gradient estimator.

\begin{table}[t]
    \caption{Estimated sliced Wasserstein distances (Sliced-WD) of KSIVI and SIVI-SM on Bayesian logistic regression. The Sliced-WD objectives are estimated using 1000 samples.}
    \label{tab:LRwaveform_wd}
    \centering
    \setlength\tabcolsep{5pt}
    \vskip0.5em
    \begin{tabular}{lccccc}
    \toprule
    Methods \textbackslash Step sizes & 0.00001 & 0.0001 & 0.001 & 0.005 & 0.008 \\
    \midrule
    SIVI-SM (iteration = 10k) & 2.5415 & 24.2952  & 0.1203 & 0.6107 & 0.9906\\
    SIVI-SM (iteration = 20k) & 5.6184 & 28.6335  & \textbf{0.0938} & 0.3283 & 0.9145\\
    KSIVI (iteration = 10k) &1.5196& 0.6863 & 0.1509 &\textbf{0.1064} & 0.6059\\
    KSIVI (iteration = 20k) &\textbf{1.2600}&\textbf{0.1120}&0.0965&0.2628&\textbf{0.3186} \\
    \bottomrule
    \end{tabular}
    \end{table}

\begin{figure}[t]
   \centering
   \subfigure{
   \begin{minipage}[t]{0.47\linewidth}
   \centering
   \includegraphics[width=1\textwidth]{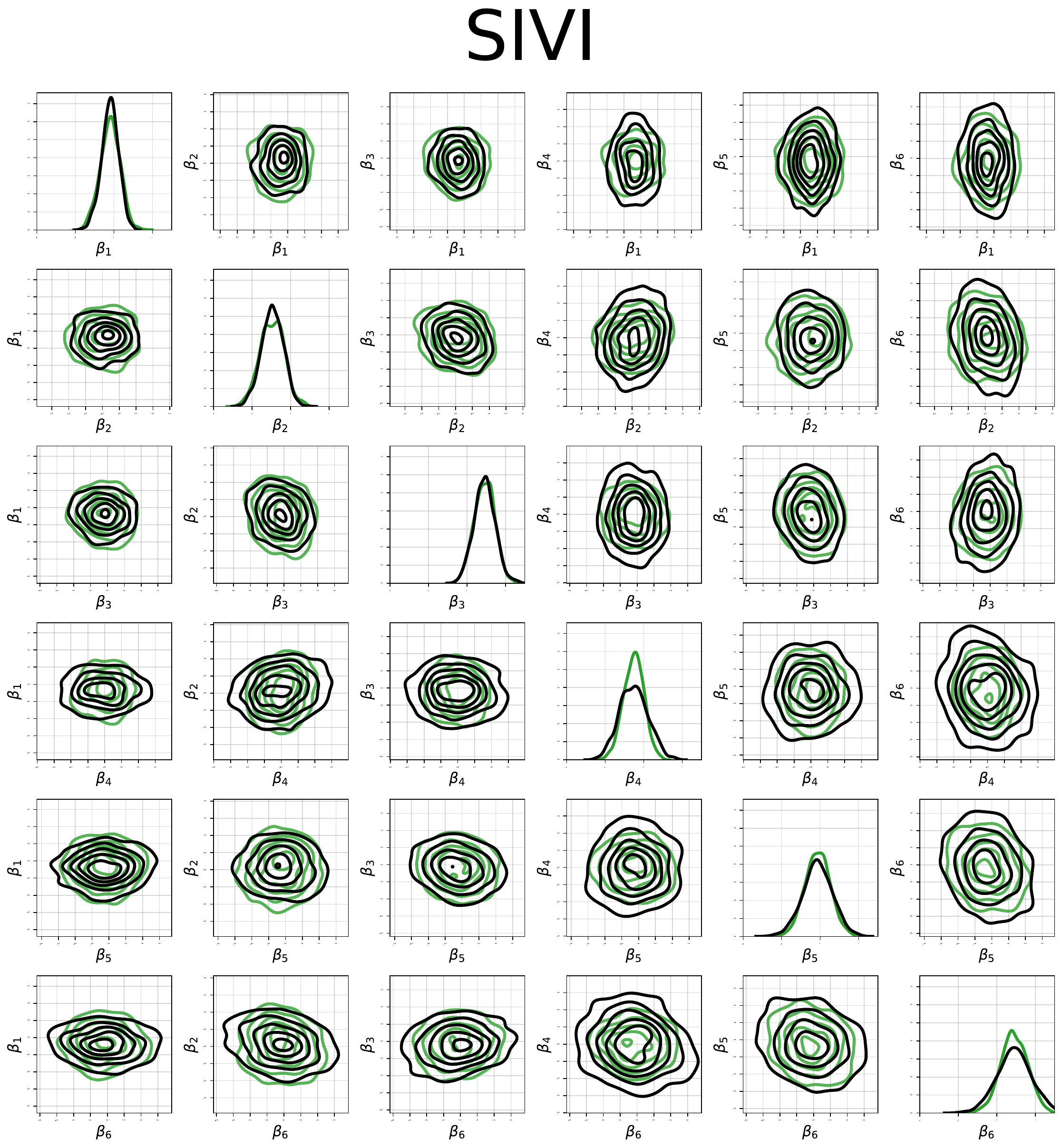}
   \end{minipage}
   }
   \hfill
   \subfigure{
   \begin{minipage}[t]{0.47\linewidth}
   \centering
   \includegraphics[width=1\textwidth]{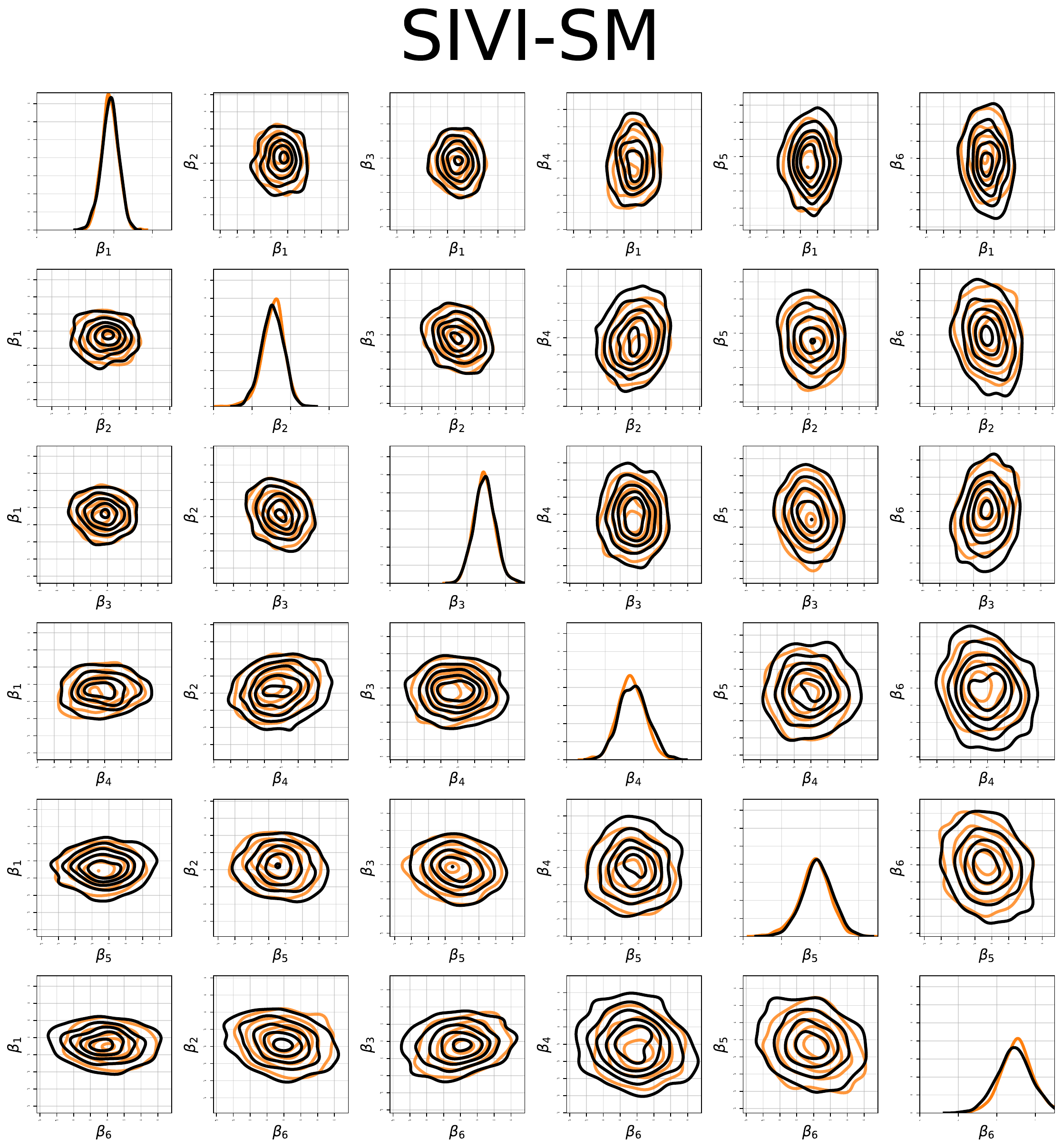}
   \end{minipage}
   }
   \hfill
   \subfigure{
   \begin{minipage}[t]{0.47\linewidth}
   \centering
   \includegraphics[width=1\textwidth]{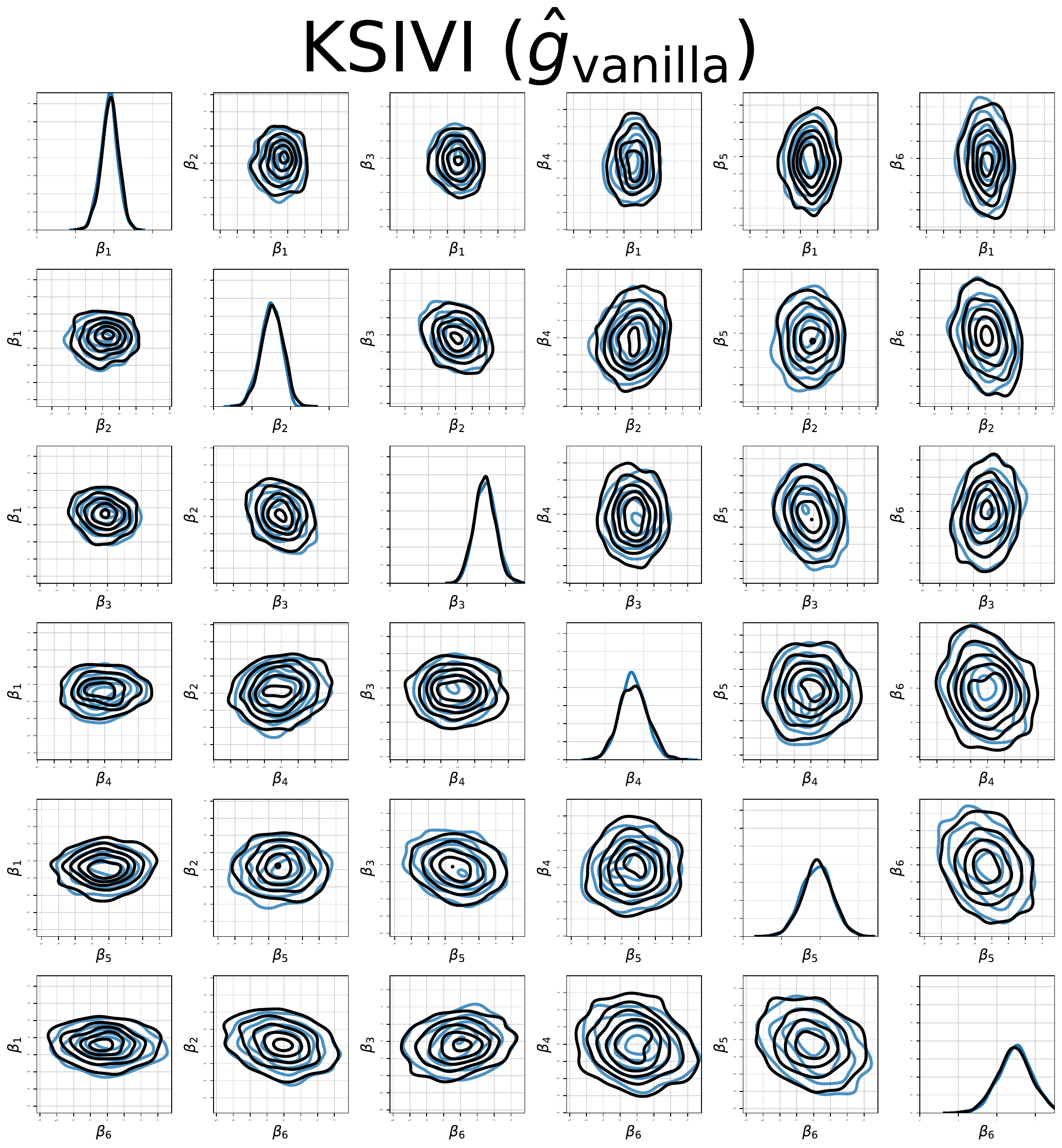}
   \end{minipage}
   }
   \hfill
   \subfigure{
   \begin{minipage}[t]{0.47\linewidth}
   \centering
   \includegraphics[width=1\textwidth]{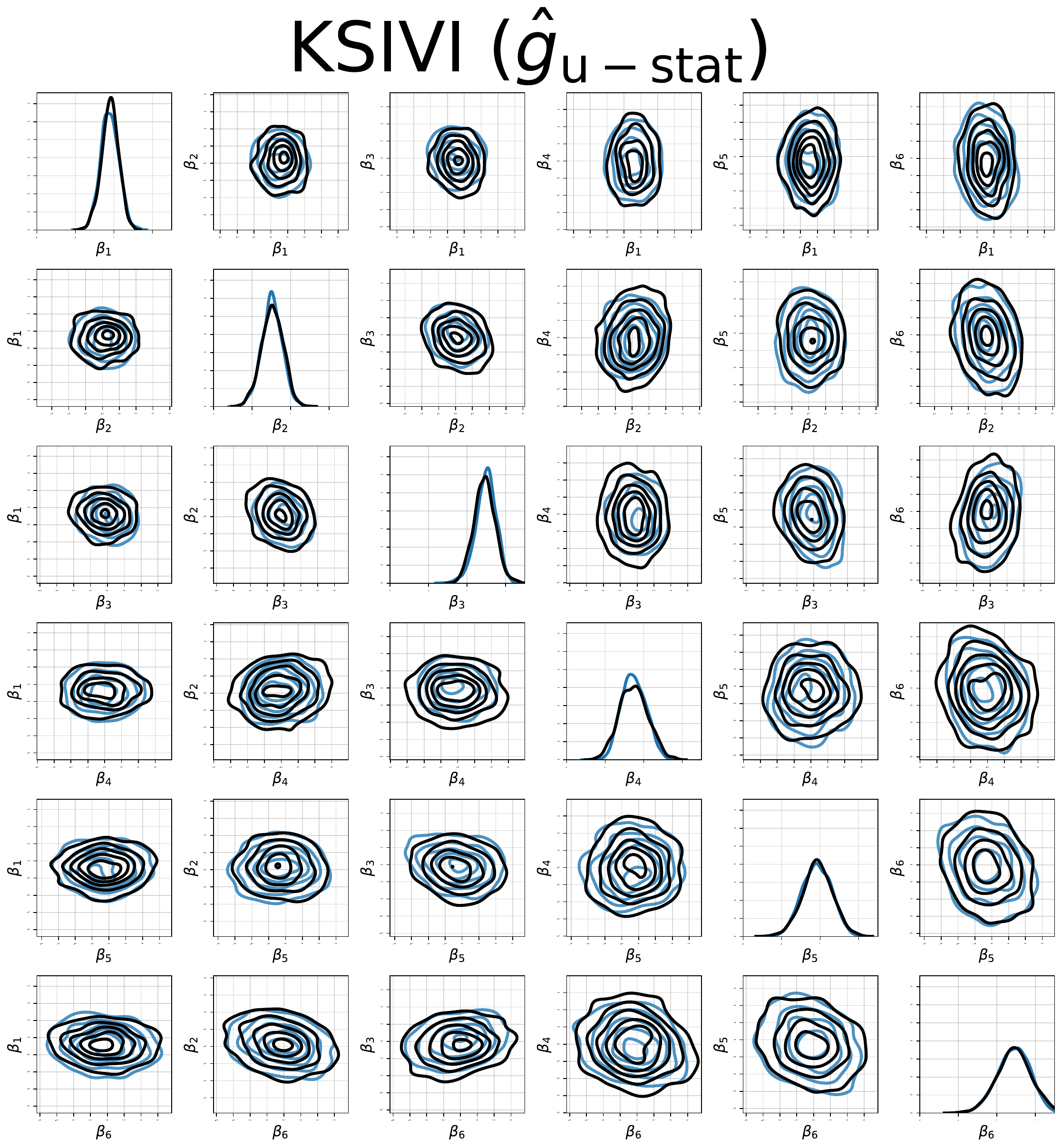}
   \end{minipage}
   }
   \centering
   \captionof{figure}{Marginal and pairwise joint posteriors on $\beta_1,\cdots,\beta_6$.
   The contours of the pairwise empirical densities produced by three baseline algorithms, i.e. SIVI-SM (in orange), SIVI (in green), and KSIVI (in blue) including both KSIVI ($\hat{g}_{\textrm{vanilla}}$) and KSIVI ($\hat{g}_{\textrm{u-stat}}$) are graphed in comparison to the ground truth(in black).
   }
   \label{figure:LRwaveform_density_dim_1_6}
\end{figure}

\begin{figure}[t]
    \centering
    \includegraphics[width=\linewidth]{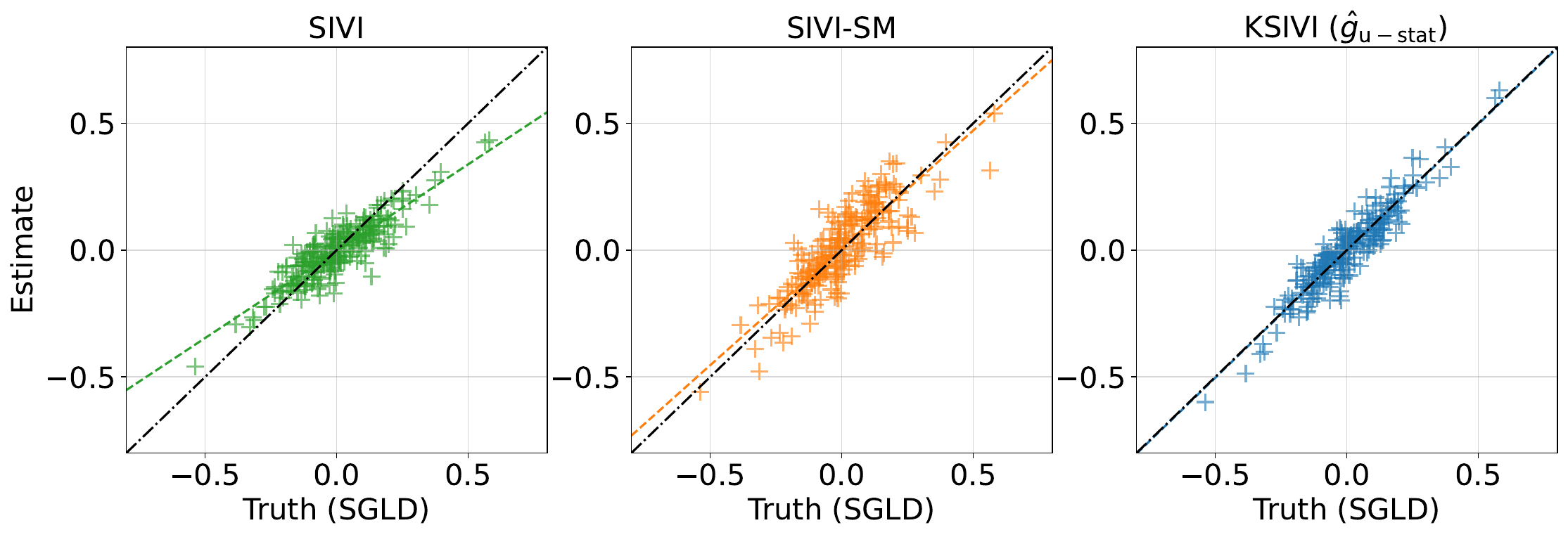}
    \caption{Comparison result between the estimated pairwise correlation coefficients on SIVI, SIVI-SM, and KSIVI with U-statistic gradient estimator. 
    The correlation coefficients are estimated with 1000 samples.
    }
    \label{figure:LRwaveform_corr_u}
\end{figure}

\subsection{Conditioned Diffusion Process}\label{appendix:cd}
\paragraph{Setting details} For the experiment on conditioned diffusion process, $\mu(z;\phi)$ for all the SIVI variants are parameterized as MLPs with layer widths $[100, 128, 128, 100]$ and ReLU activation functions. 
For SIVI-SM, the $f_\psi$ is a MLPs with layer widths $[100, 512, 512, 100]$ and ReLU  activation functions. 
The initial value of $\phi_\sigma^2$ is $e^{-2}$ for all the SIVI variants. 
We set the learning rate of variational parameters $\phi$ to 0.0002 for KSIVI and 0.0001 for SIVI and SIVI-SM. 
The learning rate of $\psi$ of SIVI-SM is set to 0.0001 and the number of lower-level gradient steps for $\psi$ is 1.
For all methods, we conduct training on variational distributions using 100,000 iterations with a batch size of 128, and the Adam optimizer is employed. 
For SIVI, we use the score-based training as done in \citet{yu2023hierarchical} for a fair time comparison.

\paragraph{Additional results} 
Table \ref{tab:cd_wd_dim} shows the estimated sliced Wasserstein distances from the target distributions to different variational approximations with increasing dimensions d = 50, 100, and 200 on conditional diffusion.
Table \ref{tab:run_time_cd_ustat} shows the training time per 10,000 iterations of KSIVI with the vanilla estimator and KSIVI with the u-stat estimator. 
We see that there exists a tradeoff between computation cost and estimation variance between the vanilla and u-stat overall. 
As shown in the equations (\ref{eq:sto_grad}) and (\ref{eq:sto_grad_u}), the vanilla estimator uses two batches of $N$ samples and requires $N^2$ computation complexity for backpropagation, while the u-stat estimator uses one batch of $N$ samples and requires $\frac{N^2}{2}$ computation. 
For convergence, the variance of the vanilla estimator~(\ref{eq:sto_grad}) is slightly smaller than the variance of the u-stat estimator~(\ref{eq:sto_grad_u}).

Figure \ref{figure:cd_traj_u} depicts the well-trained variational posteriors of SIVI, SIVI-SM, and KSIVI with the U-statistic gradient estimator on the conditioned diffusion process problem.
Figure \ref{figure:cd_loss} shows the training loss of SIVI, SIVI-SM, and KSIVI. 
We observe that the training losses of all the methods converge after 100,000 iterations.

\begin{table}[t]
    \caption{Estimated sliced Wasserstein distances (Sliced-WD) on conditional diffusion. The Sliced-WD objectives are estimated using 1000 samples.}
    \label{tab:cd_wd_dim}
    \centering
    \setlength\tabcolsep{5pt}
    \vskip0.5em
    \begin{tabular}{lccc}
    \toprule
    Methods \textbackslash Dimensionality & 50 & 100 & 200 \\
    \midrule
    SIVI	&0.1266	&0.0981 &0.0756\\
    SIVI-SM	&0.0917	&0.0640	&0.0628\\
    UIVI	&0.0314	&0.0426	&0.0582\\
    KSIVI (u-stat)	&\textbf{0.0134} &0.0182	& \textbf{0.0551}\\
    KSIVI (vanilla)	&\textbf{0.0140} &\textbf{0.0115} &0.0493\\
    \bottomrule
    \end{tabular}
\end{table}

\vspace{-0.5cm}
\begin{table}
    \caption{Training time (per 10,000 iterations) for the conditioned diffusion process inference task. For all the methods, the batch size for Monte Carlo estimation is set to $N=128$.}
    \label{tab:run_time_cd_ustat}
    \centering
    \setlength\tabcolsep{5pt}
    \begin{tabular}{lccc}
    \toprule
    Methods \textbackslash Dimensionality & 50 & 100 & 200 \\
    \midrule
    KSIVI (u-stat)	&39.09 &58.13 & 64.17\\
    KSIVI (vanilla)	&56.67 &90.48 & 107.84\\
    \bottomrule
    \end{tabular}
\end{table}

\begin{figure}[t]
    \centering
    \includegraphics[width=\linewidth]{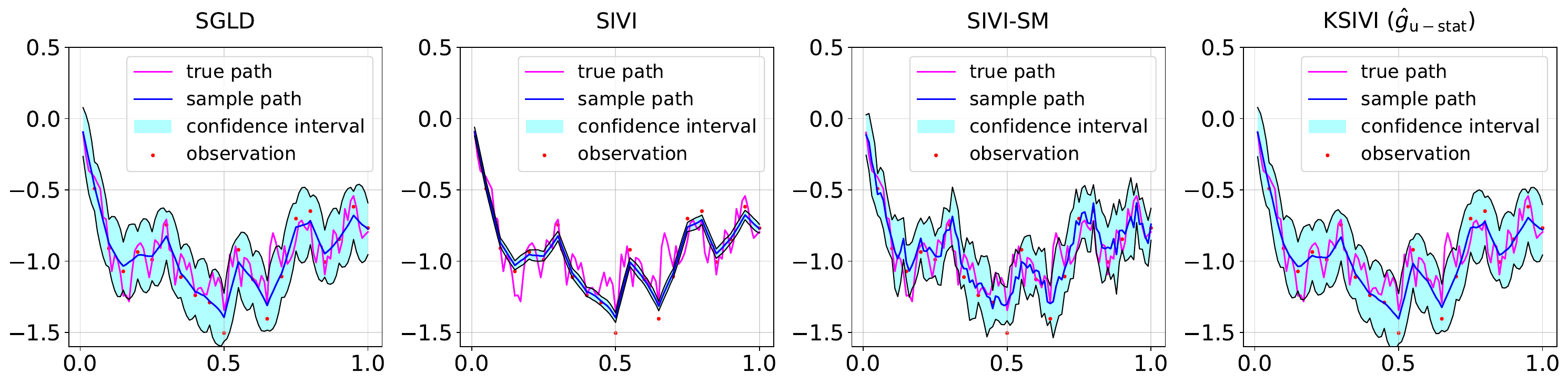}
    \caption{Variational approximations of different methods for the discretized conditioned diffusion process.
    The magenta trajectory represents the ground truth via parallel SGLD. The blue line corresponds to the estimated posterior mean of different methods, and the shaded region denotes the $95\%$ marginal posterior confidence interval at each time step. The sample size is 1000.
    }
    \label{figure:cd_traj_u}
\end{figure}

\begin{figure}[ht]
    \centering
    \includegraphics[width=\linewidth]{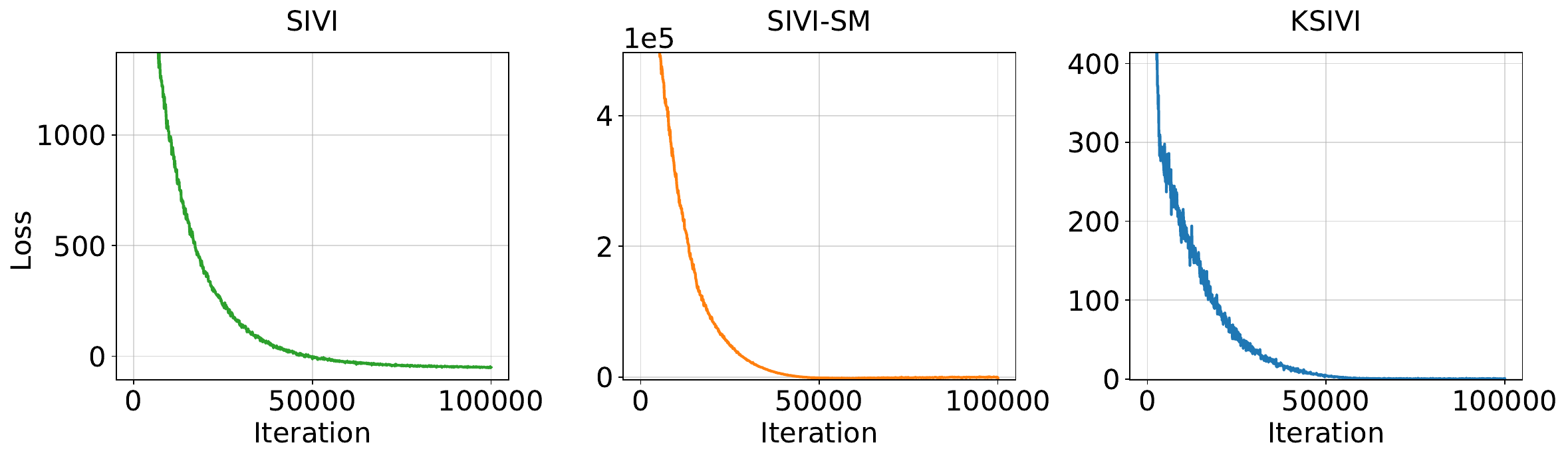}
    \caption{Training loss of SIVI in conditioned diffusion process experiment. The sample size is 1000 for all the methods.}
    \label{figure:cd_loss}
\end{figure}

\subsection{Bayesian Neural Network}\label{appendix:bnn}
\paragraph{Setting details} For the experiments on the Bayesian neural network, we follow the setting in \citet{liu2016stein}. 
For the hyper-parameters of the inverse covariance of the BNN, we select them as the estimated mean of 20 particles generated by SVGD \citep{liu2016stein}.
$\mu(z;\phi)$ for all the SIVI variants are parameterized as MLPs with layer widths $[3, 10, 10, d]$ and ReLU activation functions, where $d$ is the number of parameters in Bayesian neural network.
For SIVI-SM, the $f_\psi$ takes three-layer MLPs with 16 hidden units and ReLU activation functions. 
For KSIVI, we apply a practical regularization fix to thin the kernel Stein discrepancy, as described in \citep{bénard2023kernel}.
For all the SIVI variants, we choose the learning rate of $\phi$ from $\{0.00001, 0.0001, 0.0002, 0.0005, 0.001\}$ and run 20,000 iterations for training.
For SGLD, we choose the step size from the set $\{0.00002, 0.00004, 0.00005, 0.0001\}$, and the number of iterations is set to 10,000 with 100 particles.
\paragraph{Additional results} Table \ref{tab:bnn_rmse_u} shows the additional results of KSIVI with the U-statistic gradient estimator.

\begin{table}[ht]
\caption{Test RMSE and test NLL of Bayesian neural networks on several UCI datasets. The results are averaged from 10 independent runs with the standard deviation in the subscripts.
  For each data set, the best result is marked in \textbf{black bold font} and the second best result is marked in \textbf{\textcolor{Sepia!30}{brown bold font}}.
  }  
\label{tab:bnn_rmse_u}
\centering
\vskip0.5em
\setlength\tabcolsep{6.3pt}
\resizebox{\linewidth}{!}{
\begin{tabular}{lcccccccc}
\toprule
 \multirow{2}{*}{Dataset}&\multicolumn{4}{c}{Test RMSE ($\downarrow$)} &\multicolumn{4}{c}{Test NLL ($\downarrow$)}  \\
\cmidrule(l){2-5}\cmidrule(l){6-9}
&  SIVI& SIVI-SM & KSIVI ($\hat{g}_{\textrm{u-stat}}$)& KSIVI ($\hat{g}_{\textrm{vanilla}}$)&SIVI& SIVI-SM & KSIVI ($\hat{g}_{\textrm{u-stat}}$)& KSIVI ($\hat{g}_{\textrm{vanilla}}$) \\
\midrule
\textsc{Boston}       & $\bm{\textcolor{Sepia!30}{2.621}}_{\pm0.02}$ & $2.785_{\pm0.03}$ & $2.857_{\pm0.11}$& $\bm{2.555}_{\pm0.02}$& $\bm{2.481}_{\pm0.00}$ & $2.542_{\pm0.01}$ & $3.094_{\pm0.01}$& $\bm{\textcolor{Sepia!30}{2.506}}_{\pm0.01}$    \\
\textsc{Concrete}     & $6.932_{\pm0.02}$ & $\bm{\textcolor{Sepia!30}{5.973}}_{\pm0.04}$ & $6.861_{\pm0.19}$& $\bm{5.750}_{\pm0.03}$  & $3.337_{\pm0.00}$ & $\bm{3.229}_{\pm0.01}$ & $4.036_{\pm0.01}$& $\bm{\textcolor{Sepia!30}{3.309}}_{\pm0.01}$    \\
\textsc{Power}        & $\bm{3.861}_{\pm0.01}$ & $4.009_{\pm0.00}$ & $3.916_{\pm0.01}$& $\bm{\textcolor{Sepia!30}{3.868}}_{\pm0.01}$   & $\bm{2.791}_{\pm0.00}$ & $2.822_{\pm0.00}$ & $2.944_{\pm0.00}$& $\bm{\textcolor{Sepia!30}{2.797}}_{\pm0.00}$   \\
\textsc{Wine}      & $\bm{\textcolor{Sepia!30}{0.597}}_{\pm0.00}$ & $0.605_{\pm0.00}$ & $\bm{\textcolor{Sepia!30}{0.597}}_{\pm0.00}$& $\bm{0.595}_{\pm0.00}$ & $\bm{\textcolor{Sepia!30}{0.904}}_{\pm0.00}$ & $0.916_{\pm0.00}$ & $\bm{\textcolor{Sepia!30}{0.904}}_{\pm0.00}$& $\bm{0.901}_{\pm0.00}$    \\
\textsc{Yacht}        & $1.505_{\pm0.07}$ & $\bm{0.884}_{\pm0.01}$ & $2.152_{\pm0.09}$& $\bm{\textcolor{Sepia!30}{1.237}}_{\pm0.05}$  & $\bm{\textcolor{Sepia!30}{1.721}}_{\pm0.03}$ & $\bm{1.432}_{\pm0.01}$ & $2.873_{\pm0.03}$& $1.752_{\pm0.03}$    \\
\textsc{Protein}      & $\bm{4.669}_{\pm0.00}$ & $5.087_{\pm0.00}$  & $\bm{\textcolor{Sepia!30}{4.777}}_{\pm0.00}$& $5.027_{\pm0.01}$ & $\bm{2.967}_{\pm0.00}$ & $3.047_{\pm0.00}$ & $\bm{\textcolor{Sepia!30}{2.984}}_{\pm0.00}$& $3.034_{\pm0.00}$    \\
\bottomrule
\end{tabular}
}
\end{table}

\vskip 0.2in
\clearpage
\bibliography{main}

@inproceedings{korba2021kernel,
  title={Kernel stein discrepancy descent},
  author={Korba, Anna and Aubin-Frankowski, Pierre-Cyril and Majewski, Szymon and Ablin, Pierre},
  booktitle={International Conference on Machine Learning},
  pages={5719--5730},
  year={2021},
  organization={PMLR}
}

@inproceedings{gorham2017measuring,
  title={Measuring sample quality with kernels},
  author={Gorham, Jackson and Mackey, Lester},
  booktitle={International Conference on Machine Learning},
  pages={1292--1301},
  year={2017},
  organization={PMLR}
}

@inproceedings{kim2023convergence,
  title={On the convergence of black-box variational inference},
  author={Kim, Kyurae and Oh, Jisu and Wu, Kaiwen and Ma, Yian and Gardner, Jacob R},
  booktitle={Thirty-seventh Conference on Neural Information Processing Systems},
  year={2023}
}

@article{domke2023provable,
  title={Provable convergence guarantees for black-box variational inference},
  author={Domke, Justin and Garrigos, Guillaume and Gower, Robert},
  journal={arXiv preprint arXiv:2306.03638},
  year={2023}
}

@article{kim2023linear,
  title={Linear Convergence of Black-Box Variational Inference: Should We Stick the Landing?},
  author={Kim, Kyurae and Ma, Yian and Gardner, Jacob R},
  journal={arXiv preprint arXiv:2307.14642},
  year={2023}
}

@inproceedings{ranganath2014black,
  title={Black box variational inference},
  author={Ranganath, Rajesh and Gerrish, Sean and Blei, David},
  booktitle={Artificial intelligence and statistics},
  pages={814--822},
  year={2014},
  organization={PMLR}
}

@article{yu2023semi,
  title={Semi-Implicit Variational Inference via Score Matching},
  author={Yu, Longlin and Zhang, Cheng},
  journal={arXiv preprint arXiv:2308.10014},
  year={2023}
}

@inproceedings{yin2018semi,
  title={Semi-implicit variational inference},
  author={Yin, Mingzhang and Zhou, Mingyuan},
  booktitle={International Conference on Machine Learning},
  pages={5660--5669},
  year={2018},
  organization={PMLR}
}

@inproceedings{titsias2019unbiased,
  title={Unbiased implicit variational inference},
  author={Titsias, Michalis K and Ruiz, Francisco},
  booktitle={The 22nd International Conference on Artificial Intelligence and Statistics},
  pages={167--176},
  year={2019},
  organization={PMLR}
}

@inproceedings{liu2016kernelized,
  title={A kernelized Stein discrepancy for goodness-of-fit tests},
  author={Liu, Qiang and Lee, Jason and Jordan, Michael},
  booktitle={International conference on machine learning},
  pages={276--284},
  year={2016},
  organization={PMLR}
}

@article{liu2016stein,
  title={Stein variational gradient descent: A general purpose bayesian inference algorithm},
  author={Liu, Qiang and Wang, Dilin},
  journal={Advances in neural information processing systems},
  volume={29},
  year={2016}
}

@inproceedings{pytorch2019,
title = {{PyTorch}: An Imperative Style, High-Performance Deep Learning Library},
author = {Paszke, A. and Gross, S. and Massa, F. and Lerer, A. and Bradbury, J. and Chanan, G. and Killeen, T. and Lin, Z. and Gimelshein, N. and Antiga, L. and Desmaison, A. and Kopf, A. and Yang, E. and DeVito, Z. and Raison, M. and Tejani, A. and Chilamkurthy, S. and Steiner, B. and Fang, L. and Bai, J. and Chintala, S.},
booktitle = {Advances in Neural Information Processing Systems (NeurIPS)},
year = {2019},
}

@ARTICLE{ITE2014,
  AUTHOR =       {Zolt{\'a}n Szab{\'o}},
  TITLE =        {Information Theoretical Estimators Toolbox},
  JOURNAL =      {Journal of Machine Learning Research},
  YEAR =         {2014},
  volume =       {15},
  pages =        {283-287},
}

@inproceedings{cheng2023particle,
  title={Particle-based Variational Inference with Generalized Wasserstein Gradient Flow},
  author={Cheng, Ziheng and Zhang, Shiyue and Yu, Longlin and Zhang, Cheng},
  booktitle={Thirty-seventh Conference on Neural Information Processing Systems},
  year={2023}
}

@article{ranganath2016operator,
  title={Operator variational inference},
  author={Ranganath, Rajesh and Tran, Dustin and Altosaar, Jaan and Blei, David},
  journal={Advances in Neural Information Processing Systems},
  volume={29},
  year={2016}
}

@inproceedings{grathwohl2020learning,
  title={Learning the stein discrepancy for training and evaluating energy-based models without sampling},
  author={Grathwohl, Will and Wang, Kuan-Chieh and Jacobsen, J{\"o}rn-Henrik and Duvenaud, David and Zemel, Richard},
  booktitle={International Conference on Machine Learning},
  pages={3732--3747},
  year={2020},
  organization={PMLR}
}

@article{Blei2016Variational,
  title={Variational Inference: A Review for Statisticians},
  author={David M. Blei and Alp Kucukelbir and Jon D. McAuliffe},
  journal={Journal of the American Statistical Association},
  year={2016},
  volume={112},
  pages={859 - 877}
}

@article{Jordan1999AnIT,
  title={An Introduction to Variational Methods for Graphical Models},
  author={Michael I. Jordan and Zoubin Ghahramani and T. Jaakkola and Lawrence K. Saul},
  journal={Machine Learning},
  year={1999},
  volume={37},
  pages={183-233}
}

@article{Wainwright08,
author = "M.~J. Wainwright and M.~I. Jordan",
title = "Graphical models, exponential families, and variational inference",
journal = "Foundations and Trends in Maching Learning",
volume = 1,
number = {1-2},
pages = "1--305",
year = 2008
}

@inproceedings{bishop2000variational,
  title={Variational relevance vector machines},
  author={Bishop, Christopher M. and Tipping, Michael E},
  booktitle={Proceedings of the Sixteenth conference on Uncertainty in artificial intelligence},
  pages={46--53},
  year={2000}
}

@inproceedings{Paisley12,
author = "J.~W. Paisley and D.~M. Blei and M.~I. Jordan",
title = "Variational bayesian inference with stochastic search",
booktitle = {Proceedings of the 29th International Conference on Machine Learning {ICML}},
year = 2012
}

@article{Nott12,
author = {D.~J. Nott and S.~L. Tan and M. Villani and R. Kohn},
title = {Regression density estimation with variational methods and stochastic approximation},
journal = {Journal of Computational and Graphical Statistics},
volume = 21,
number = 3,
pages = "797--820",
year = 2012
}

@inproceedings{Rezende14,
title = {Stochastic backpropagation and approximate inference in deep generative models},
author = {D.~J. Rezende and S. Mohamed and D. Wierstra},
booktitle={International Conference on Machine Learning},
year={2014}
}

@inproceedings{VAE,
author = "D.~P. Kingma and M. Welling",
title = "Auto-encoding variational bayes",
booktitle = {ICLR},
year = 2014
}

@inproceedings{Jaakkola98,
author = "T.~S. Jaakkola and M.~I. Jordan",
title = "Improving the mean field approximation via the use of mixture distributions",
booktitle = {Learning in Graphical Models},
pages = "173--173",
year = {1998}
}

@inproceedings{Saul96,
author = "L.~K. Saul and M.~I. Jordan",
title = "Exploiting tractable substructures in intractable networks",
booktitle = {Advances in Neural Information Processing Systems},
year = {1996}
}

@inproceedings{Giordano15,
author = "R.~J. Giordano and T. Broderick and M.~I. Jordan",
title = "Linear response methods for accurate covariance estimates from mean field variational Bayes",
booktitle = {Advances in Neural Information Processing Systems},
year = {2015}
}

@inproceedings{Tran15,
author = "D. Tran and D.~M. Blei and E.~M. Airoldi",
title = "Copula variational inference",
booktitle = {Advances in Neural Information Processing Systems},
year = {2015}
}

@inproceedings{NF,
author = "D. Rezende and S. Mohamed",
title = "Variational inference with normalizing flows",
booktitle = {Proceedings of The 32nd International Conference on Machine Learning},
pages = "1530--1538",
year = 2015
}

@inproceedings{RealNVP,
title={Density estimation using Real NVP},
author = {Dinh, Laurent  and Sohl-Dickstein, Jascha and Bengio, Samy},
booktitle={International Conference on Learning Representations},
year = {2017}
}

@inproceedings{IAF,
title={Improved variational inference with inverse autoregressive flow},
author = {Diederik P. Kingma and Tim Salimans and Rafal Jozefowicz and Xi Chen and Ilya Sutskever and Max Welling},
booktitle = {Advances in Neural Information Processing Systems},
year = {2016},
pages = {4743--4751}
}

@article{Papamakarios19,
author = "G. Papamakarios and E. Nalisnick and D. Rezende and S. Mohamed and B. Lakshminarayanan",
title = "Normalizing Flows for Probabilistic Modeling and Inference",
journal = {ArXiv Preprint arXiv:1912.02762},
year = 2019
}

@article{Huszar17,
author = "Ferenc Husz\'{a}r",
title = "Variational inference using implicit distributions",
journal = "arXiv preprint arXiv: 1702.08235",
year = 2017
}

@article{moens2021,
  title={Efficient semi-implicit variational inference},
  author={Moens, V. and Ren, H. and Maraval, A. and Tutunov, R. and Wang, J. and Ammar, H.},
  journal={arXiv preprint arXiv:2101.06070},
  year={2021}
}

@inproceedings{Tran17,
author = "D. Tran and R. Ranganath and D.~M. Blei",
title = "Hierarchical implicit models and likelihood-free variational inference",
booktitle = {Advances in Neural Information Processing Systems},
year = 2017
}

@inproceedings{AVB,
author = "L.~M. Mescheder and S. Nowozin and A. Geiger",
title = "Adversarial variational bayes: Unifying variational autoencoders and generative adversarial networks",
booktitle = {Proceedings of the 34th International Conference on Machine Learning,
               {ICML} 2017, Sydney, NSW, Australia, 6-11 August 2017},
year = 2017
}

@inproceedings{Shi18,
author = "J. Shi and S. Sun and J. Zhu",
title = "Kernel implicit variational inference",
booktitle = {International Conference on Learning Representations},
year = 2018
}

@inproceedings{shi2018,
title = {A spectral approach to gradient estimation for implicit distributions},
author = {Shi, J. and Sun, S. and Zhu, J.},
booktitle = {International Conference on Machine Learning},
year = {2018},
page = "4644--4653"
}

@inproceedings{SSM,
title = {Sliced score matching: A scalable approach to density and score estimation},
author = {Y. Song and S. Garg and J. Shi and S. Ermon},
booktitle = {Proceedings of the Thirty-Fifth Conference on Uncertainty in Artificial Intelligence},
year = 2019
}

@inproceedings{domke2020provable,
  title={Provable smoothness guarantees for black-box variational inference},
  author={Domke, Justin},
  booktitle={International Conference on Machine Learning},
  pages={2587--2596},
  year={2020},
  organization={PMLR}
}

@article{domke2019provable,
  title={Provable gradient variance guarantees for black-box variational inference},
  author={Domke, Justin},
  journal={Advances in Neural Information Processing Systems},
  volume={32},
  year={2019}
}

@article{detommaso2018stein,
  title={A Stein variational Newton method},
  author={Detommaso, Gianluca and Cui, Tiangang and Marzouk, Youssef and Spantini, Alessio and Scheichl, Robert},
  journal={Advances in Neural Information Processing Systems},
  volume={31},
  year={2018}
}

@inproceedings{
yu2023hierarchical,
title={Hierarchical Semi-Implicit Variational Inference with Application to Diffusion Model Acceleration},
author={Longlin Yu and Tianyu Xie and Yu Zhu and Tong Yang and Xiangyu Zhang and Cheng Zhang},
booktitle={Thirty-seventh Conference on Neural Information Processing Systems},
year={2023},
url={https://openreview.net/forum?id=ghIBaprxsV}
}

@article{cui2016dimension,
  title={Dimension-independent likelihood-informed MCMC},
  author={Cui, Tiangang and Law, Kody JH and Marzouk, Youssef M},
  journal={Journal of Computational Physics},
  volume={304},
  pages={109--137},
  year={2016},
  publisher={Elsevier}
}

@book{sugiyama2012density,
  title={Density ratio estimation in machine learning},
  author={Sugiyama, Masashi and Suzuki, Taiji and Kanamori, Takafumi},
  year={2012},
  publisher={Cambridge University Press}
}

@inproceedings{molchanov2019doubly,
  title={Doubly semi-implicit variational inference},
  author={Molchanov, Dmitry and Kharitonov, Valery and Sobolev, Artem and Vetrov, Dmitry},
  booktitle={The 22nd International Conference on Artificial Intelligence and Statistics},
  pages={2593--2602},
  year={2019},
  organization={PMLR}
}

@article{uppal2023implicit,
  title={Implicit Variational Inference for High-Dimensional Posteriors},
  author={Uppal, Anshuk and Stensbo-Smidt, Kristoffer and Boomsma, Wouter K and Frellsen, Jes},
  journal={arXiv preprint arXiv:2310.06643},
  year={2023}
}

@article{sobolev2019importance,
  title={Importance weighted hierarchical variational inference},
  author={Sobolev, Artem and Vetrov, Dmitry P},
  journal={Advances in Neural Information Processing Systems},
  volume={32},
  year={2019}
}

@article{ghadimi2013stochastic,
  title={Stochastic first-and zeroth-order methods for nonconvex stochastic programming},
  author={Ghadimi, Saeed and Lan, Guanghui},
  journal={SIAM Journal on Optimization},
  volume={23},
  number={4},
  pages={2341--2368},
  year={2013},
  publisher={SIAM}
}

@inproceedings{Welling2011,
author = {Welling, Max and Teh, Yee Whye},
title = {Bayesian Learning via Stochastic Gradient Langevin Dynamics},
year = {2011},
isbn = {9781450306195},
publisher = {Omnipress},
address = {Madison, WI, USA},
booktitle = {Proceedings of the 28th International Conference on International Conference on Machine Learning},
pages = {681--688},
numpages = {8},
location = {Bellevue, Washington, USA},
series = {ICML'11}
}

@inproceedings{GAN,
author = "I. Goodfellow and J. Pouget-Abadie and M. Mirza and B. Xu and D. Warde-Farley and S. Ozair and A. Courville and Y. Bengio",
title = "Generative adversarial nets",
booktitle = {Advances in Neural Information Processing System 27 (NIPS)},
year = 2014
}

@misc{wang2022accelerated,
      title={Accelerated Information Gradient flow}, 
      author={Yifei Wang and Wuchen Li},
      year={2022},
      eprint={1909.02102},
      archivePrefix={arXiv},
      primaryClass={math.OC}
}

@misc{izmailov2019averaging,
      title={Averaging Weights Leads to Wider Optima and Better Generalization}, 
      author={Pavel Izmailov and Dmitrii Podoprikhin and Timur Garipov and Dmitry Vetrov and Andrew Gordon Wilson},
      year={2019},
      eprint={1803.05407},
      archivePrefix={arXiv},
      primaryClass={cs.LG}
}

@misc{huang2017snapshot,
      title={Snapshot Ensembles: Train 1, get M for free}, 
      author={Gao Huang and Yixuan Li and Geoff Pleiss and Zhuang Liu and John E. Hopcroft and Kilian Q. Weinberger},
      year={2017},
      eprint={1704.00109},
      archivePrefix={arXiv},
      primaryClass={cs.LG}
}

@article{MMD,
title = {A kernel two-sample test},
author = {A. Gretton and K.~M. Borgwardt and M.~J. Rasch and B. Sch\"{o}lkopf and A. Smola},
journal = {Journal of Machine Learning Research},
volume = {13},
pages = "723--773",
year = 2012
}

@InProceedings{pmlr-v48-chwialkowski16,
  title = 	 {A Kernel Test of Goodness of Fit},
  author = 	 {Chwialkowski, Kacper and Strathmann, Heiko and Gretton, Arthur},
  booktitle = 	 {Proceedings of The 33rd International Conference on Machine Learning},
  pages = 	 {2606--2615},
  year = 	 {2016},
  editor = 	 {Balcan, Maria Florina and Weinberger, Kilian Q.},
  volume = 	 {48},
  series = 	 {Proceedings of Machine Learning Research},
  address = 	 {New York, New York, USA},
  month = 	 {20--22 Jun},
  publisher =    {PMLR}
}

@article{zhang18,
author = "C. Zhang and B. Shahbaba and H. Zhao",
title = "Variational Hamiltonian Monte Carlo via Score Matching",
journal = "Bayesian Analysis",
volume = 13,
number = 2,
pages = {485--506},
year = 2018
}

@article{Hu18,
author = "Tianyang Hu and Zixiang Chen and Hanxi Sun and Jincheng Bai and Mao Ye and Guang Cheng",
title = "Stein Neural Sampler",
journal = "arXiv preprint arXiv:1810.03545",
year = 2018
}

@article{Vincent2011,
  author={Vincent, Pascal},
  journal={Neural Computation}, 
  title={A Connection Between Score Matching and Denoising Autoencoders}, 
  year={2011},
  volume={23},
  number={7},
  pages={1661-1674},
  doi={10.1162/NECO_a_00142}}

@book{wainwright2019,
  title={High-dimensional statistics: A non-asymptotic viewpoint},
  author={Wainwright, Martin J},
  volume={48},
  year={2019},
  publisher={Cambridge university press}
}

@inproceedings{Gorham2015,
 author = {Gorham, Jackson and Mackey, Lester},
 booktitle = {Advances in Neural Information Processing Systems},
 title = {Measuring Sample Quality with Stein\textquotesingle s Method},
 pages={226--234},
 year = {2015}
}

@inproceedings{arora2017generalization,
  title={Generalization and equilibrium in generative adversarial nets (gans)},
  author={Arora, Sanjeev and Ge, Rong and Liang, Yingyu and Ma, Tengyu and Zhang, Yi},
  booktitle={International conference on machine learning},
  pages={224--232},
  year={2017},
  organization={PMLR}
}

@misc{bénard2023kernel,
      title={Kernel Stein Discrepancy thinning: a theoretical perspective of pathologies and a practical fix with regularization}, 
      author={Clément Bénard and Brian Staber and Sébastien Da Veiga},
      year={2023},
      eprint={2301.13528},
      archivePrefix={arXiv},
      primaryClass={math.ST}
}

@inproceedings{Altekrger2023NeuralWG,
  title={Neural Wasserstein Gradient Flows for Maximum Mean Discrepancies with Riesz Kernels},
  author={Fabian Altekr{\"u}ger and Johannes Hertrich and Gabriele Steidl},
  booktitle={International Conference on Machine Learning},
  year={2023},
  url={https://api.semanticscholar.org/CorpusID:256358344}
}

@article{mala,
  title={Brownian dynamics as smart Monte Carlo simulation},
  author={Peter J. Rossky and Jimmie D. Doll and Harold L. Friedman},
  journal={Journal of Chemical Physics},
  year={1978},
  volume={69},
  pages={4628-4633},
  url={https://api.semanticscholar.org/CorpusID:14967940}
}

@inproceedings{Hertrich2024SlicedMMD,
    author    = {Hertrich, Johannes and Wald, Christan and Altekrüger, Fabian and Hagemann, Paul},
    title     = {Generative Sliced {MMD} Flows with {Riesz} Kernels},
    booktitle={The Twelfth International Conference on Learning Representations},
    year={2024}
}

@inproceedings{
li2023sampling,
title={Sampling with Mollified Interaction Energy Descent},
author={Lingxiao Li and qiang liu and Anna Korba and Mikhail Yurochkin and Justin Solomon},
booktitle={The Eleventh International Conference on Learning Representations },
year={2023},
url={https://openreview.net/forum?id=zWy7dqOcel}
}

@InProceedings{MMDgorham17a,
  title = 	 {Measuring Sample Quality with Kernels},
  author =       {Jackson Gorham and Lester Mackey},
  booktitle = 	 {Proceedings of the 34th International Conference on Machine Learning},
  pages = 	 {1292--1301},
  year = 	 {2017},
  editor = 	 {Precup, Doina and Teh, Yee Whye},
  volume = 	 {70},
  series = 	 {Proceedings of Machine Learning Research},
  month = 	 {06--11 Aug},
  publisher =    {PMLR}
}

@InProceedings{titsias14,
  title = 	 {Doubly Stochastic Variational Bayes for non-Conjugate Inference},
  author = 	 {Titsias, Michalis and Lázaro-Gredilla, Miguel},
  booktitle = 	 {Proceedings of the 31st International Conference on Machine Learning},
  pages = 	 {1971--1979},
  year = 	 {2014},
  editor = 	 {Xing, Eric P. and Jebara, Tony},
  volume = 	 {32},
  series = 	 {Proceedings of Machine Learning Research},
  address = 	 {Beijing, China},
  month = 	 {22--24 Jun},
  publisher =    {PMLR}
}

@inproceedings{Hinton1993KeepingTN,
  title={Keeping the neural networks simple by minimizing the description length of the weights},
  author={Geoffrey E. Hinton and Drew van Camp},
  booktitle={Annual Conference Computational Learning Theory},
  year={1993},
  url={https://api.semanticscholar.org/CorpusID:9346534}
}

@article{Peterson1989,
title = {Explorations of the mean field theory learning algorithm},
journal = {Neural Networks},
volume = {2},
number = {6},
pages = {475-494},
year = {1989},
issn = {0893-6080},
doi = {https://doi.org/10.1016/0893-6080(89)90045-2},
url = {https://www.sciencedirect.com/science/article/pii/0893608089900452},
author = {Carsten Peterson and Eric Hartman},
}

@inproceedings{Domke18,
 author = {Domke, Justin and Sheldon, Daniel R},
 booktitle = {Advances in Neural Information Processing Systems},
 editor = {S. Bengio and H. Wallach and H. Larochelle and K. Grauman and N. Cesa-Bianchi and R. Garnett},
 pages = {},
 publisher = {Curran Associates, Inc.},
 title = {Importance Weighting and Variational Inference},
 volume = {31},
 year = {2018}
}

@article{van2014probability,
  title={Probability in high dimension},
  author={Van Handel, Ramon},
  journal={Lecture Notes (Princeton University)},
  volume={2},
  number={3},
  pages={2--3},
  year={2014}
}

@article{kullback1951,
  title={On information and sufficiency},
  author={Kullback, Solomon and Leibler, Richard A},
  journal={The Annals of Mathematical Statistics},
  volume={22},
  number={1},
  pages={79--86},
  year={1951}
}

@inproceedings{ranganath2016hier,
  title={Hierarchical variational models},
  author={Ranganath, Rajesh and Tran, Dustin and Blei, David},
  booktitle={International conference on machine learning},
  pages={324--333},
  year={2016},
  organization={PMLR}
}

@inproceedings{stein1972bound,
  title={A bound for the error in the normal approximation to the distribution of a sum of dependent random variables},
  author={Stein, Charles},
  booktitle={Proc. Sixth Berkeley Symp. Math. Stat. Prob.},
  pages={583--602},
  year={1972}
}

@inproceedings{grathwohl2020,
  title={Learning the stein discrepancy for training and evaluating energy-based models without sampling},
  author={Grathwohl, Will and Wang, Kuan-Chieh and Jacobsen, J{\"o}rn-Henrik and Duvenaud, David and Zemel, Richard},
  booktitle={International Conference on Machine Learning},
  pages={3732--3747},
  year={2020},
  organization={PMLR}
}

@article{zhang2024semi,
  title={Semi-Implicit Functional Gradient Flow for Efficient Sampling},
  author={Zhang, Shiyue and Cheng, Ziheng and Zhang, Cheng},
  journal={arXiv preprint arXiv:2410.17935},
  year={2024}
}

@inproceedings{
cheng2024kernel,
title={Kernel Semi-Implicit Variational Inference},
author={Ziheng Cheng and Longlin Yu and Tianyu Xie and Shiyue Zhang and Cheng Zhang},
booktitle={Forty-first International Conference on Machine Learning},
year={2024},
url={https://openreview.net/forum?id=w5oUo0LhO1}
}

@inproceedings{
zhou2024score,
title={Score identity Distillation: Exponentially Fast Distillation of Pretrained Diffusion Models for One-Step Generation},
author={Mingyuan Zhou and Huangjie Zheng and Zhendong Wang and Mingzhang Yin and Hai Huang},
booktitle={Forty-first International Conference on Machine Learning},
year={2024},
url={https://openreview.net/forum?id=QhqQJqe0Wq}
}

@book{korolyuk2013theory,
  title={Theory of U-statistics},
  author={Korolyuk, Vladimir S and Borovskich, Yu V},
  volume={273},
  year={2013},
  publisher={Springer Science \& Business Media}
}

@inproceedings{cai2024batch,
  title={Batch and match: black-box variational inference with a score-based divergence},
  author={Cai, Diana and Modi, Chirag and Pillaud-Vivien, Loucas and Margossian, Charles and Gower, Robert M and Blei, David and Saul, Lawrence K},
  booktitle={International Conference on Machine Learning},
  pages={5258--5297},
  year={2024},
  organization={PMLR}
}

@inproceedings{modi2025batch,
  title={Batch, match, and patch: low-rank approximations for score-based variational inference},
  author={Modi, Chirag and Cai, Diana and Saul, Lawrence K},
  booktitle={International Conference on Artificial Intelligence and Statistics},
  pages={4510--4518},
  year={2025},
  organization={PMLR}
}

@inproceedings{morningstar2021automatic,
  title={Automatic differentiation variational inference with mixtures},
  author={Morningstar, Warren and Vikram, Sharad and Ham, Cusuh and Gallagher, Andrew and Dillon, Joshua},
  booktitle={International Conference on Artificial Intelligence and Statistics},
  pages={3250--3258},
  year={2021},
  organization={PMLR}
}

@inproceedings{he2016deep,
  title={Deep residual learning for image recognition},
  author={He, Kaiming and Zhang, Xiangyu and Ren, Shaoqing and Sun, Jian},
  booktitle={Proceedings of the IEEE conference on computer vision and pattern recognition},
  pages={770--778},
  year={2016}
}

@article{wu2020stochastic,
  title={Stochastic normalizing flows},
  author={Wu, Hao and K{\"o}hler, Jonas and No{\'e}, Frank},
  journal={Advances in neural information processing systems},
  volume={33},
  pages={5933--5944},
  year={2020}
}
\bibliographystyle{apalike}
\end{document}